\documentclass{article}

\usepackage[preprint]{neurips_2026}

\usepackage[utf8]{inputenc} 
\usepackage[T1]{fontenc}    
\usepackage{hyperref}       
\usepackage{url}            
\usepackage{booktabs}       
\usepackage{amsfonts}       
\usepackage{nicefrac}       
\usepackage{microtype}      
\usepackage{xcolor}         
\usepackage{graphicx}
\usepackage{amsmath, amssymb, amsthm, amsfonts}
\usepackage{algorithmic}
\usepackage[ruled]{algorithm2e}
\usepackage{subcaption}
\usepackage{float}
\newcommand{\eos}{\mathrm{EoS}}
\newtheorem{proposition}{Proposition}
\newtheorem{assumption}{Assumption}
\newtheorem{definition}{Definition}
\newtheorem{lemma}{Lemma}

\newtheorem{theorem}{Theorem}
\newtheorem{claim}{Claim}
\usepackage{thmtools}
\usepackage{thm-restate}

\title{Fine-Tuning Dynamics of In-Context Factual Recall in Transformers}

\author{%
  Ruomin Huang \\
  Duke University\\
  \texttt{ruomin.huang@duke.edu} \\
  \And
  Eshaan Nichani \\
  Princeton University\\
  \texttt{eshnich@princeton.edu}
  \And
  Jason D. Lee\\
  UC Berkeley\\
  \texttt{jasondlee@berkeley.edu}
  \And
  Rong Ge\\
  Duke University\\
  \texttt{rongge@cs.duke.edu}
}

\begin{document}
\maketitle

\begin{abstract}

In-context learning \--- performing tasks based on examples given in the prompt \--- is an important capability that has emerged in large language models and has received significant attention in both theory and practice. Existing theoretical work often focuses on settings where the learning uses information purely from the prompt. However, many practical instances of in-context learning require the model to retrieve factual knowledge stored in the model's parameters, with the context serving to identify which knowledge is relevant. In this work, we study how in-context learning leverages factual knowledge recall. 
We formalize this behavior by introducing the \emph{in-context factual recall (IC-recall)} task, where a transformer is provided a context of (subject, answer) pairs generated from a hidden relation, along with a query subject, and must both infer this hidden relation and retrieve the corresponding answer. Factual knowledge is modeled by the transformer having access to a simple pre-constructed MLP associative memory storing (subject, relation, answer) triplets.
We analyze the supervised fine-tuning dynamics of a one-layer transformer on IC-recall data and prove that the model successfully performs IC-recall by converging to a particular pairwise attention pattern. 
This fine-tuning stage requires a very small number of samples \--- only polylogarithmic in the number of stored knowledge triplets. Experiments verify our theoretical predictions and show that the pairwise attention pattern emerges even when the MLP layer is pretrained instead of constructed.

\end{abstract}

\section{Introduction}
 Transformer-based large language models (LLMs) exhibit strong \emph{in-context learning} (ICL) abilities: they can use examples provided in the prompt to improve prediction on a new query \citep{DBLP:conf/nips/BrownMRSKDNSSAA20}. 
Considerable research has sought to understand the mechanisms underlying ICL, and most existing works focus on settings in which transformers rely only on the context to make predictions.  One line of work studies ICL over function classes \citep{DBLP:conf/nips/0001TLV22, DBLP:conf/icml/OswaldNRSMZV23, DBLP:journals/jmlr/ZhangFB24}, where transformers are trained from scratch on sequences of \((x, f(x))\) examples for a context-dependent function $f$, and learn to predict \(f(x_q)\) for a new query \(x_q\). 
Another line of work~\citep{DBLP:conf/icml/NichaniDL24,DBLP:conf/nips/Edelman0EMG24} studies how transformers learn to solve particular matching or copying tasks by converging to the induction head mechanism~\citep{DBLP:journals/corr/abs-2209-11895}.
Although these are interesting settings to study ICL from a theoretical perspective,
they do not capture an important behavior common in real-world LLM applications, where the model must combine contextual information with factual knowledge, or parametric knowledge~\citep{DBLP:conf/emnlp/PetroniRRLBWM19, DBLP:conf/emnlp/RobertsRS20,DBLP:journals/corr/abs-2410-08414}, stored in its weights. For example, given the prompt ``Albert Einstein $\to$ Germany, Isaac Newton $\to$ England, Marie Curie $\to$ ?'', the model must first use its stored knowledge to infer that the relation between subject and answer is nationality, then answer the query by retrieving the fact (again from stored knowledge) that Marie Curie was Polish. 

\begin{figure}[t]
    \centering
    \includegraphics[width=0.8\linewidth]{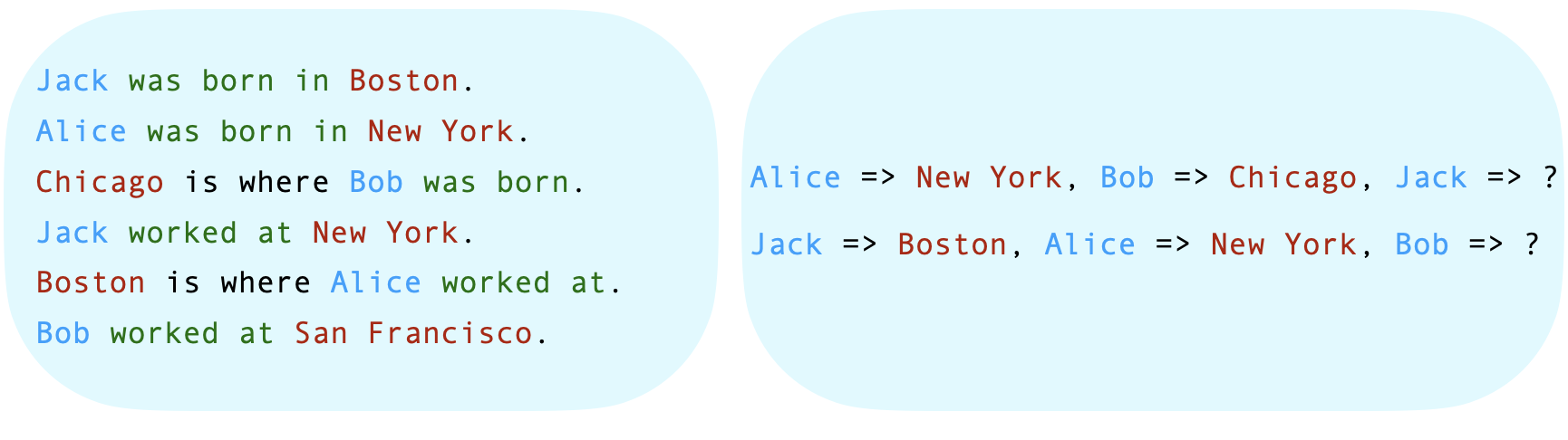}
    \caption{Left: factual knowledge corpus. Each sentence consists of three components: subject, relation and answer. We can put these components into the subject set $\mathcal{S}=\{\text{Jack, Alice, Bob}\}$, the relation set $\mathcal{R}=\{\text{was born in, worked at}\}$ and the answer set $\mathcal{A}=\{\text{Boston, New York, Chicago, San Francisco}\}$. We can view each relation $r\in\mathcal{R}$ as a mapping from $\mathcal{S}$ to $\mathcal{A}$.
    Right: IC-recall data requires the model to complete the sequence in a specific format: $(s_1,a_1,s_2,a_2,s_3, u_\eos)$. Here $u_\eos$ is the End-of-Sequence (EoS) token, $s_1,s_2\in\mathcal{S}$ and $a_1=r(s_1), a_2=r(s_2)$ for some underlying relation $r$.  }
    \label{fig:example}
\end{figure}

In this paper, we formalize this behavior through the \emph{in-context factual recall} (IC-recall) task.  Motivated by \cite{DBLP:conf/emnlp/PetroniRRLBWM19, DBLP:conf/icml/GhosalHR24, DBLP:conf/iclr/NichaniLB25}, we model factual knowledge as \((\text{subject}, \text{relation}, \text{answer})\) triplets. Each sequence in the IC-recall task consists of (subject, answer) pairs generated from the same relation, along with a query subject; the goal of the task is to output the answer corresponding to the query subject and hidden relation. Here, the context alone is insufficient to answer the query, as the model must also rely on the factual knowledge. See Figure \ref{fig:example} for an illustration of the task, and Section \ref{sec:preliminary} for a formal definition.


\subsection{Our Results}
 \label{sec:our_results}

We consider fine-tuning a model which already stores factual knowledge in its parameters on sequences from the IC-recall task. Specifically, the model is a transformer with a single self-attention head followed by an MLP layer. We construct an MLP layer that acts as an associative memory~\citep{DBLP:conf/iclr/NichaniLB25} storing (subject, relation, answer) triplets (see Section~\ref{sec:MLP}), and assumes that the MLP layer remains fixed throughout fine-tuning. 

We fine-tune the model to access this relevant knowledge in two steps: it first infers the underlying relation \(r\) from the context, and then uses \(r\) to predict the answer. This two-step structure naturally aligns with the \emph{chain-of-thought} (CoT) \citep{DBLP:conf/nips/Wei0SBIXCLZ22} paradigm, where an intermediate decoding step outputs \(r\) before producing the final answer. 



In Section~\ref{sec:main_results}, we analyze the supervised fine-tuning dynamics of this one-layer transformer on IC-recall data with two in-context examples. We prove that the transformer successfully performs IC-recall by converging to a {\em pairwise attention} pattern -- the transformer places equal attention weight to the tokens within each pair of in-context examples, but {\em different} attention weights across pairs. This identifies a mechanism for ICL which is qualitatively different from the induction head~\citep{DBLP:journals/corr/abs-2209-11895}.
We also show that the number of fine-tuning samples required to achieve high test accuracy on the IC-recall task grows polylogarithmically with the number of stored factual associations.

Empirically, we show in Section~\ref{sec:exp} that as predicted by theory, fine-tuning only requires a very small number of sequences (8 often already achieves high accuracy). We also show that similar pairwise attention patterns extend to the cases with more than two in-context examples, and when the MLP layer is pretrained instead of constructed.

\subsection{Related works}

\paragraph{Knowledge localization and associative memories.}
Prior work suggests that factual knowledge in language models can be stored in either MLP layers \citep{DBLP:conf/emnlp/GevaSBL21,DBLP:conf/nips/MengBAB22,DBLP:conf/acl/DaiDHSCW22}, though it may also be encoded in attention layers \citep{DBLP:conf/iclr/ChenC00025,DBLP:conf/cikm/WeiYWMZ0024}. Recent theory studies the ability of transformers to store factual knowledge via associative memories, showing that their storage capacity scales proportionally with model size \citep{DBLP:conf/iclr/Allen-ZhuL25,DBLP:conf/iclr/NichaniLB25,DBLP:conf/iclr/CabannesDB24}. \citet{ravfogel2026geometricfactualrecalltransformers} further propose a construction in which the required model size depends only logarithmically on the number of subjects. \citet{DBLP:conf/iclr/NichaniLB25} analyze the training dynamics of associative memory formation in one-layer linear transformers, and \citet{DBLP:conf/icml/CabannesSB24} study the training dynamics for linear layers.

\paragraph{Theoretical understanding of in-context learning.}
\citet{DBLP:conf/nips/0001TLV22} study the function classes that transformers can learn in context. \citet{DBLP:conf/icml/OswaldNRSMZV23} show that transformers can learn linear functions in context by implicitly simulating gradient descent, and \citet{DBLP:journals/jmlr/ZhangFB24} later prove convergence in this setting. \citet{DBLP:journals/corr/abs-2209-11895} identify induction heads as a mechanism underlying certain forms of in-context learning. \citet{DBLP:conf/icml/NichaniDL24, DBLP:conf/nips/Edelman0EMG24} analyze how induction heads emerge via gradient-based training on in-context Markov chains. \citet{DBLP:conf/nips/ChenSWY24} study how transformers learn a generalized induction head from in-context n-gram data. \citet{DBLP:conf/icml/Bu0HN0WS25} analyze factual-recall ICL under a data model that admits task-vector arithmetic, where answers are obtained by adding a retrieved task vector to the query representation. Among prior works, \citet{DBLP:journals/corr/abs-2603-20969} study the most similar data-generation setting, which they call the contextual recall task. Their work assumes disjoint answer sets across relations, providing an additional signal for ruling out irrelevant answers using the in-context examples. In contrast, our setting allows answers to be shared across different relations.

\paragraph{Fine-tuning dynamics.}
\citet{DBLP:conf/icml/MalladiWYCA23,DBLP:conf/iclr/RenS25,DBLP:conf/icml/ZengWL0C025} analyze fine-tuning dynamics through the neural tangent kernel (NTK) perspective. On related factual recall tasks, \citet{DBLP:conf/icml/GhosalHR24} study the fine-tuning dynamics of a self-attention layer in which the value matrix \(W_V\) stores facts.

\paragraph{Benefits of CoT.}
\citet{DBLP:journals/corr/abs-2603-09906} empirically discover that chain-of-thought (CoT) improves factual recall. \citet{DBLP:conf/iclr/MerrillS24,DBLP:conf/iclr/0001LZ024} show that CoT reasoning increases the expressive power of transformers. \citet{DBLP:conf/nips/AbbeBLSS24} show that CoT can expand the learnable function class of transformers. In addition, \citet{DBLP:conf/iclr/WenZLZ25,DBLP:conf/iclr/KimS25} show that CoT substantially improves the sample efficiency of transformers for \(k\)-parity. 

\section{Preliminaries}
\label{sec:preliminary}
\subsection{Transformer architecture}

\paragraph{Vocabulary.} Let the vocabulary be \(\mathcal{V} = \mathcal{S} \cup \mathcal{R} \cup \mathcal{A} \cup \{u_{\mathrm{EoS}}\}\), where \(\mathcal{S}\), \(\mathcal{R}\), and \(\mathcal{A}\) denote the sets of subjects, relations, and answers, respectively, and \(u_{\mathrm{EoS}}\) denotes the end-of-sequence (EoS) token. We assume that \(|\mathcal{S}| = |\mathcal{A}| =: n\), and that each \(r \in \mathcal{R}\) is a bijection from \(\mathcal{S}\) to \(\mathcal{A}\). For each pair \((s,r) \in \mathcal{S} \times \mathcal{R}\), let \(r(s) \in \mathcal{A}\) denote the associated answer. We also denote $\mathcal{R}(s,a)$ the set of relations that map $s$ to $a$. 

 \paragraph{Embedding.} All elements in vocabulary \(u \in \mathcal{V}\) are embedded into $\phi(u)\in\mathbb{R}^d$. For simplicity, we assume that all embeddings are unit vectors and mutually orthogonal, i.e., \(\langle \phi(u), \phi(u') \rangle = 0\) for all \(u \neq u'\). We also use one-hot positional encodings. Specifically, for the \(i\)-th input token, we concatenate its embedding with the one-hot vector \(e_i \in \mathbb{R}^{d_P}\), where \(d_P\) is the maximum input length. Thus, for an input sequence \(\widetilde Z=(u_1,\dots,u_k)\in\mathcal{V}^k\), we define the embedding matrix
\[
Z = E(\widetilde Z)
:=
\begin{bmatrix}
e_1 & e_2 & \cdots & e_k\\
\phi(u_1) & \phi(u_2) & \cdots & \phi(u_k)
\end{bmatrix}
\in \mathbb{R}^{(d+d_P)\times k}.
\]

\paragraph{Architecture.} We consider a one-layer transformer \(f(Z;\mathcal{W})\). Let \(z_{-1} \in \mathbb{R}^{d+d_P}\) denote the last token in the input embedding matrix \(Z\). The model output is given by
\[h = W^P\left(z_{-1} + Z\,\mathrm{softmax}\left(Z^\top W^{KQ} z_{-1}\right)\right),\]
and
\[f(Z;\mathcal{W}) = f_{\mathrm{MLP}}(h) = V^\top \sigma(Wh),\]
where \(W,V \in \mathbb{R}^{d_{\mathrm{MLP}}\times d}\), \(d_{\mathrm{MLP}}\) is the width of the MLP, $W^P = \begin{pmatrix}  0_{d\times d_P} & I_d\end{pmatrix}$
projects onto the token-embedding component only, and \(\mathcal{W}=(W^{KQ},W,V)\) are the set of parameters. We also assume that the activation in the MLP is the quadratic function \(\sigma(x)=x^2\) applied element-wise.


\paragraph{Construction of the MLP associative memory.} 

Transformers have been observed to memorize factual knowledge within the MLP layer. In practice, this memorization process typically occurs during the pretraining stage.
However, to enable theoretical analysis, we will assume the MLP layer is pre-constructed to store a given knowledge set. In Section~\ref{sec:exp}, we will observe empirically that pretrained MLPs behave similarly to our constructed MLP on the IC-recall task.

Consider a pretraining sequence \(\widetilde Z_{\mathrm{PT}} = (u_1,u_2)\)  that contains two randomly selected elements \(u_1,u_2\) from a valid fact triplet \((s,r,a)\). We say that the MLP stores this fact triplet if the argmax decoding
\[
\arg\max_{u \in \mathcal{V}} \phi(u)^\top f_{\mathrm{MLP}}(\phi(u_1)+\phi(u_2))
\]
recovers the remaining element of the triplet. We refer to an MLP layer which satisfies this property for all possible pretraining sequences as an \emph{MLP associative memory}. As shown later in Section~\ref{sec:MLP}, such an associative memory admits a simple and natural construction. In the following, we assume that the MLP associative memory used in the \emph{IC-recall task} is given by this construction. 

\subsection{IC-recall task}

We next formally define the IC-recall task.

\begin{definition}[IC-recall task]
   Given subject, relation, answer sets $\mathcal{S},\mathcal{R},\mathcal{A}$ and a one-layer transformer \(f(Z;\mathcal{W})\), whose MLP layer \(f_{\mathrm{MLP}}(\cdot)\) is preconstructed as an MLP associative memory,  the in-context factual recall  task consists of data generated as follows:
\begin{enumerate}
    \item Sample a relation \(r^* \in \mathcal{R}\) uniformly at random.
    \item Sample three distinct subjects \(s_1,s_2,s_3 \in \mathcal{S}\) uniformly at random, and define \(a_1 = r^*(s_1)\), \(a_2 = r^*(s_2)\), and \(a_3 = r^*(s_3)\). 
    \item Construct the IC-recall sequence
    \(
    \widetilde Z = (s_1,a_1,s_2,a_2,s_3,u_{\mathrm{EoS}}),
    \)
    with prediction target \(r^*, a_3\).
\end{enumerate}
We denote $P_{\mathrm{IC}}$ the distribution of such generated IC-recall sequence $\widetilde Z$.
\end{definition}

To make sure it is possible to correctly solve the IC-recall task, we assume that any two subject–answer pairs identifies at most one relation. 

\begin{assumption}\label{assum:identifiability}
For any \(s_1, s_2 \in \mathcal{S}\) and \(a_1, a_2 \in \mathcal{A}\), \(\bigl|\{\, r \in \mathcal{R} \mid r(s_1)=a_1,\ r(s_2)=a_2 \,\}\bigr| \le 1\).
\end{assumption}

The IC-recall task provides the true relation $r^*$ as an intermediate step of the reasoning, which makes the problem easier. Without this form of strong supervision, the partially fine-tuned 1-layer transformer cannot achieve accuracy greater than \(1/3\) (see Lemma \ref{lem:negative} in Appendix). Similar strong supervision was also considered in other works analyzing CoT \citep{DBLP:conf/iclr/WenZLZ25,DBLP:conf/iclr/KimS25}. We also remark that without such intermediate supervision, transformers struggle to learn compositional reasoning tasks~\citep{DBLP:conf/colt/WangNBDHLW25}. 

\subsection{Training loss for the IC-recall task}

We now define the loss used for fine-tuning. It is a chain-of-thought (CoT) objective in which the model decodes in two steps to produce the final answer. 
In the first step, the model predicts the relation token \(r\) associated with the input sequence; in the second step, it predicts the answer \(a_3\). For simplicity of analysis, we assume that the ``irrelevant'' logits are masked out during prediction. That is, in the first decoding step, only the logits corresponding to relation tokens are retained. Likewise, in the second decoding step, only the logits corresponding to answer tokens are retained. Therefore given an IC-recall sequence $\widetilde Z$, we can write the prediction of the first decoding step as \(p_1(\cdot ;\mathcal{W},\widetilde Z) := \mathrm{softmax}\left(A_rf(E(\widetilde Z);\mathcal{W})/T\right) \in \mathbb{R}^{|\mathcal{R}|}\) where \(A_r \in \mathbb{R}^{|\mathcal{R}|\times d}\) is the embedding matrix of all relations and $T>0$ is the prediction temperature. Similarly, conditioned on the first decoded relation being $r$, we write the prediction of the second decoding step as \(p_{2,r}(\cdot; \mathcal{W},\widetilde Z):=\mathrm{softmax}\left(A_af(E([\widetilde Z,r]);\mathcal{W})/T\right) \in \mathbb{R}^{|\mathcal{\mathcal{A}}|}\) where \(A_a \in \mathbb{R}^{|\mathcal{A}|\times d}\) is the embedding matrix of all answers. We sample a fine-tuning dataset $D$ from distribution $P_{\mathrm{IC}}$ and let

\begin{equation}
\label{eq:loss_1_finite_sample}
L_1(\mathcal{W},D)=\frac{1}{|D|}\sum_{\widetilde Z\in D}\ell_1(\mathcal{W}, \widetilde Z):=-\frac{1}{|D|}\sum_{\widetilde Z\in D}\log\left(p_1(r^*(\widetilde Z);\mathcal{W},\widetilde Z)\right)
\end{equation}

denote the cross-entropy loss for the first decoding step, where \(r^*(\widetilde Z)\) is the underlying relation for the sequence $\widetilde Z$. The second decoding step uses the cross-entropy loss conditioned on the correct relation $r^*(\widetilde Z)$.

\begin{equation}
\label{eq:loss_2_finite_sample}
L_2(\mathcal{W},D)=\frac{1}{|D|}\sum_{\widetilde Z\in D}\ell_2(\mathcal{W}, \widetilde Z):=-\frac{1}{|D|}\sum_{\widetilde Z\in D}\log\left(p_{2,r^*(\widetilde Z)}(a_3(\widetilde Z);\mathcal{W},\widetilde Z)\right)
\end{equation}

where \(a_3(\widetilde Z)\) is the correct answer for the sequence $\widetilde Z$. We use $r^*, a_3$ instead of $r^*(\widetilde Z), a_3(\widetilde Z)$ when the sequence is clear from context. The CoT loss is the sum of these two losses
\begin{equation}
    \label{eq:CoT_loss}
L(\mathcal{W},D)=L_1(\mathcal{W},D) + L_2(\mathcal{W},D).
\end{equation}


\paragraph{Partial fine-tuning.}
To simplify the theoretical analysis, we only fine-tune the position-position block \(W^{KQ}_{1:d_P,\,1:d_P}\) of the attention matrix, while keeping the MLP associative memory and the rest of the entries of $W_{KQ}$ fixed. There are only two vectors inside $W^{KQ}_{1:d_P,\,1:d_P}$ that are relevant for the two decoding steps. Let \(\theta:=W^{KQ}_{1:6,6} \in \mathbb{R}^6\) and \(\omega:=W^{KQ}_{1:7,7} \in \mathbb{R}^7\). Then $\mathrm{softmax}(\theta)$ and $\mathrm{softmax}(\omega)$ are the attention scores from the EoS token and the first decoded token respectively; the first decoding step depends solely on $\theta$, and the second depends solely on $\omega$. Therefore we can rewrite the loss as $L(\theta,\omega,D)$. In Section~\ref{sec:exp}, we show empirically that even without this restriction, the transformer finds the same solution as our theory predicts.

\section{MLP associative memory}
\label{sec:MLP}
In this section we provide a simple construction of the MLP layer with $d_{\mathrm{MLP}}=O(|\mathcal{S}|\cdot |\mathcal{A}|)$ that achieves $100\%$ accuracy as an MLP associative memory. This serves as the frozen MLP in our theoretical analysis.
\begin{lemma}
\label{lem:MLP_construction}
    There exists an MLP layer $f_{\mathrm{MLP}}(x)=V^\top \sigma(Wx)$ with width $d_{\mathrm{MLP}}=O(|\mathcal{S}|\cdot |\mathcal{A}|)$,  such that for any triplet $(s,r,a)$ where $r$ maps $s$ to $a$, we have  $a=\arg\max_{u\in\mathcal{V}} \phi(u)^\top f_{\mathrm{MLP}}(\phi(s)+\phi(r))$, $r=\arg\max_{u\in\mathcal{V}} \phi(u)^\top f_{\mathrm{MLP}}(\phi(a)+\phi(s))$ and  \(s=\arg\max_{u\in\mathcal{V}} \phi(u)^\top f_{\mathrm{MLP}}(\phi(a)+\phi(r)).\)
\end{lemma}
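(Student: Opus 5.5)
The plan is a \emph{key--value} construction with three groups of hidden neurons, all indexed by subject--answer pairs $(s,a)\in\mathcal{S}\times\mathcal{A}$. The width is therefore $3n^2=O(|\mathcal{S}|\cdot|\mathcal{A}|)$, no matter how large $|\mathcal{R}|$ is. The key fact is that $\sigma(x)=x^2$ with orthonormal embeddings turns each neuron into a counter. If a neuron's input weight is a sum of distinct embeddings, its activation on $\phi(u_1)+\phi(u_2)$ is $(\#\text{ of } u_1,u_2 \text{ appearing in the weight})^2\in\{0,1,4\}$. A full match thus produces a signal of $4$, while any partial match produces at most $1$.

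\textbf{Construction.} Write $\Phi_{s,a}:=\sum_{r\in\mathcal{R}(s,a)}\phi(r)$.
\begin{enumerate}
\item Group A (recall the answer). For each $(s,a)$, add a neuron with input weight $\phi(s)+\Phi_{s,a}$ and output weight $\phi(a)$.
\item Group B (recall the relation). For each $(s,a)$, add a neuron with input weight $\phi(s)+\phi(a)$ and output weight $\Phi_{s,a}$.
\item Group C (recall the subject). For each $(s,a)$, add a neuron with input weight $\phi(a)+\Phi_{s,a}$ and output weight $\phi(s)$.
\end{enumerate}
The rows of $W$ are these input weights and the rows of $V$ are the output weights. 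The score of a token $u$ is $\phi(u)^\top f_{\mathrm{MLP}}(x)=\sum_j \sigma(w_j^\top x)\,v_j^\top\phi(u)$.

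\textbf{Verification (case $\phi(s)+\phi(r)\mapsto a=r(s)$).} Because $r$ is a function, $r\in\mathcal{R}(s,a')$ exactly when $a'=r(s)$. Hence group A neuron $(s,r(s))$ has activation $4$, so the target $a$ receives score at least $4$.
\begin{itemize}
\item A competing answer $a'\neq a$ receives contributions only from group A neurons with output $\phi(a')$. These are neuron $(s,a')$, which matches only $s$ and gives activation $1$, and neurons $(s',a')$ with $r(s')=a'$. Since $r$ is a bijection there is exactly one such $s'$, and it gives activation $1$. So $a'$ scores at most $2$.
\item A relation $r'$ gets score only from group B, where the input matches only $s$. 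Each neuron $(s,a'')$ gives activation at most $1$, and $r'$ lies in $\mathcal{R}(s,a'')$ for exactly one $a''$, namely $a''=r'(s)$. So $r'$ scores at most $1$.
\item A subject $s'$ gets score only from group C neurons $(s',a'')$ with $r\in\mathcal{R}(s',a'')$. There is exactly one such $a''$, namely $r(s')$, so $s'$ scores at most $1$.
\item The EoS token scores $0$.
\end{itemize}
So $a$ is the unique argmax, with margin $4$ versus $\le 2$.

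\textbf{The other two directions.} The same bookkeeping handles them. For input $\phi(s)+\phi(a)$, group B neuron $(s,a)$ fires at $4$ and sends score $4$ to every $r\in\mathcal{R}(s,a)$. Every other token collects only activation-$1$ contributions from partially matching neurons. Bijectivity of each relation bounds their number, giving at most $2$ per token: for example, a relation $r'\notin\mathcal{R}(s,a)$ gets $1$ from neuron $(s,r'(s))$ and nothing else. For input $\phi(a)+\phi(r)$, group C neuron $(r^{-1}(a),a)$ fires at $4$ on $s=r^{-1}(a)$, and the competitors are bounded similarly.

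\textbf{Main obstacle.} The step I expect to need care is the exhaustive cross-talk count. It must hold over \emph{all} tokens in $\mathcal{V}$, including the two input tokens themselves. That is why I separate outputs by group rather than using a symmetric $V=W$: a symmetric choice would let the input subject $s$ receive score $4+(n-1)$ and win the argmax. With the separated groups, each cross-term count reduces to "$r$ is a bijection," so every non-target token scores at most $2<4$.

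One subtlety is ties in the relation direction. If $|\mathcal{R}(s,a)|>1$, all relations in $\mathcal{R}(s,a)$ tie at $4$. In that case I would read the lemma as saying $r$ attains the argmax. Note that Assumption~\ref{assum:identifiability} only limits relations consistent with two pairs, so it does not remove these ties.
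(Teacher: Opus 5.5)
Your construction is exactly the paper's: groups A, B and C are the rows $w_{3i-2},w_{3i-1},w_{3i}$, with the same output weights. Your verification (target scores $4$, every competitor scores at most $2$ by bijectivity, with ties only among $\mathcal{R}(s,a)$ in the relation direction) is the same counting argument. One small slip: in the $\phi(s)+\phi(a)$ case, a relation $r'\notin\mathcal{R}(s,a)$ also collects $1$ from neuron $((r')^{-1}(a),a)$, so it scores exactly $2$ rather than ``$1$ and nothing else''; this is still below $4$, so your conclusion stands.
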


 \textbf{Proof sketch.}   Let $d_{\mathrm{MLP}}=3|\mathcal{S}|\cdot |\mathcal{A}|.$ Let $w_j$ and $v_j$ be the $j$-th row of $W$ and $V$ respectively. We assign indices to all $(s,a)$ pairs. For the $i$-th $(s,a)$ pair, we assign $3$ rows in $W$ and $V$ to store the relevant factual knowledge. Specifically, let $w_{3i-2}=\phi(s)+ \sum_{r\in\mathcal{R}(s,a)}\phi(r)$ and $v_{3i-2}=\phi(a)$; let $w_{3i-1}=\phi(a)+\phi(s)$ and $v_{3i-1}=\sum_{r\in\mathcal{R}(s,a)}\phi(r)$; let $w_{3i}=\phi(a)+\sum_{r\in\mathcal{R}(s,a)}\phi(r)$ and $v_{3i}=\phi(s)$ (we define $\sum_{r\in \mathcal{R}(s,a)}\phi(r)=0$ if $\mathcal{R}(s,a)$ is empty). It is easy to examine that such constructed MLP satisfies the properties in Lemma~\ref{lem:MLP_construction}; see the full proof in Appendix~\ref{sec:MLP_app}.



\textbf{Remark.}
If non-orthogonal embeddings are allowed, one can obtain a more parameter-efficient construction of MLP associative memory, with a total number of parameters nearly linear in \(|\mathcal{S}|\cdot |\mathcal{R}|\), as shown by \citet{DBLP:conf/iclr/NichaniLB25}. For simplicity in the subsequent dynamics analysis, we stick to the MLP construction in Lemma \ref{lem:MLP_construction}, which is based on orthogonal embeddings.

\section{Fine-tuning dynamics for IC-Recall}
\label{sec:main_results}
We show in this section that with the constructed MLP associative memory, transformers can achieve perfect accuracy on IC-recall data after fine-tuned by the perturbed gradient descent (PGD) in Algorithm \ref{alg:temp_perturbed_gd_1}. Specifically, we have the following Theorem:

\begin{theorem}
\label{thm:2nd_decoding_step}
Assuming that the IC-recall data satisfies Assumptions~\ref{assum:identifiability} and \ref{assum:sequence_distribution}, 
    there exist constants $C_1,C_2>0$ such that for any $T\leq C_1/\log n$, if we set $\eta_1=\Theta(T\sqrt{T}\log\frac{1}{T}), \eta_2=\Theta(T^2),t_1=1$ and $t_2\geq \frac{C_2}{T}\log \frac{1}{T}$ in Algorithm~\ref{alg:temp_perturbed_gd_1}, with probability at least $1-\delta$ over the sampled data $D$ and the random perturbation, the transformer fine-tuned on $\tilde O(\mathrm{poly}(\frac{1}{T}))$ \footnote{We use $\tilde O$ to hide $\mathrm{poly}\log\frac{1}{\delta}$ factors.} samples will have accuracy at least $0.99$ on all IC-recall sequences the for final predicted answers.
\end{theorem}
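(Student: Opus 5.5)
We split the analysis along the two decoding steps, since $L_1$ depends only on $\theta$ and $L_2$ only on $\omega$. Algorithm~\ref{alg:temp_perturbed_gd_1} runs these as essentially decoupled phases: $t_1=1$ step on $\theta$ with rate $\eta_1$, and $t_2$ steps on $\omega$ with rate $\eta_2$.

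\textbf{Step 1: closed form of the logits.} Because the embeddings are orthonormal and the MLP is the construction of Lemma~\ref{lem:MLP_construction}, the hidden vector $h$ is an explicit convex combination of $\phi(s_1),\phi(a_1),\phi(s_2),\phi(a_2),\phi(s_3),\phi(u_{\eos})$, plus the residual term, with weights $\mathrm{softmax}(\theta)$. Write the attention weights as $\alpha=\mathrm{softmax}(\theta)$. With quadratic activation, neuron $3i-1$ (indexed by a pair $(s,a)$) contributes $(\alpha_s+\alpha_a)^2\sum_{r\in\mathcal R(s,a)}\phi(r)$. Hence the logit of relation $r$ is
\[
\textstyle\sum_{k=1,2}(\alpha_{s_k}+\alpha_{a_k})^2\,\mathbf 1[r(s_k)=a_k]+\text{cross terms},
\]
where the cross terms come from pairs such as $(s_1,a_2)$. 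The true $r^*$ collects both in-pair terms. By Assumption~\ref{assum:identifiability}, any competitor collects at most one in-pair term plus cross terms.

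The relation logit is therefore maximized at $r^*$ with margin of order $\min_k(\alpha_{s_k}+\alpha_{a_k})^2$ minus the cross-pair mass. The desired regime is:
\begin{itemize}
\item equal weight on $s_k$ and $a_k$ within each pair;
\item small weight on $s_3$ and EoS;
\item different total weights on the two pairs.
\end{itemize}
The third condition is needed because with equal pair weights, competitors that match one in-pair term together with a cross term $(s_1,a_2)$ can tie. This is the pairwise attention pattern. Similarly, for the second step, the answer logit equals $(\beta_{s_3}+\beta_{r})^2$ times the indicator $\mathbf 1[a=r(s_3)]$, plus terms from the other subject–relation pairs $(s_k,r)$, which point to $a_k\ne a_3$. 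Here $\beta=\mathrm{softmax}(\omega)$.

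\textbf{Step 2: population gradient and one step on $\theta$.} Start from $\theta=0$, i.e.\ uniform attention. With small $T$, the softmax over relations is nearly one-hot, so the population gradient of $L_1$ at initialization can be computed by symmetry over the sampling. It increases $\theta_{1..4}$ and decreases $\theta_5,\theta_6$.

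By the symmetry between pairs, the gradient on pair 1 equals that on pair 2, so the perturbation in PGD is what breaks the tie between the pairs. The stepsize $\eta_1=\Theta(T^{3/2}\log(1/T))$ must be chosen so that one step suffices for two things: it pushes the mass off $s_3$ and EoS, and it makes the random pair asymmetry exceed the scale $T\log n$ at which ties matter, with $T\le C_1/\log n$ guaranteeing the softmax over $n$ competitors concentrates. We then check directly, using Step 1, that after this step the margin of $r^*$ times $1/T$ exceeds $\log|\mathcal R|$ plus a constant for every sequence. This yields accuracy near $1$ on the first decoding step, uniformly over sequences, since the logits depend on a sequence only through the incidence pattern of the facts it contains.

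\textbf{Step 3: GD on $\omega$.} Conditioned on the correct $r^*$, $L_2$ is a smooth function of $\beta$ through a finite number of incidence patterns. We show it is decreasing along the direction that increases $\omega_5$ and $\omega_7$ (the positions of $s_3$ and the decoded relation). We run $t_2\ge (C_2/T)\log(1/T)$ steps with $\eta_2=\Theta(T^2)$, giving total movement $\Theta(T\log(1/T))$, which is enough to make $(\beta_{s_3}+\beta_r)^2$ dominate the competing pairs by a margin of $\gtrsim T\log n$. The argument is a monotone invariant, in the style of a potential function: $\omega_5=\omega_7$ by symmetry, and the remaining coordinates stay equal.

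\textbf{Step 4: finite samples.} Each per-sample gradient is bounded by $O(1/T)$. By Hoeffding's inequality, $\mathrm{poly}(1/T)\log(1/\delta)$ samples make the empirical gradient within $o(\text{signal})$ of the population gradient along the relevant directions, uniformly over the iterates. Uniformity holds because the iterates live in a low-dimensional set and we can union bound over a net. This sample count is polylogarithmic in $n$ because $T\sim 1/\log n$.

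\textbf{Main obstacle.} I expect the main obstacle to be Step 2. It requires quantifying the cross-pair tie-breaking: showing that the perturbation plus a single gradient step creates a pair asymmetry that is large enough to beat competitors matching one pair plus a cross term. At the same time, the step must not drive one pair's weight so high that the other pair's in-pair signal is lost. I would first try computing the logit gap exactly as a function of $(\alpha_{s_1}+\alpha_{a_1},\,\alpha_{s_2}+\alpha_{a_2})$ and verifying that it is positive in a band of asymmetries that the chosen $\eta_1$ lands in with high probability.
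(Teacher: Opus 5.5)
There is a genuine gap, and it is in your Step~2. You have also misread Algorithm~\ref{alg:temp_perturbed_gd_1}: both stages run gradient descent on $L(\theta,\omega,D)$. So $\theta$ receives one step of size $\eta_1$, then the perturbation, then $t_2$ steps of size $\eta_2$, and the same holds for $\omega$.

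Write $v=\mathrm{softmax}(\theta)$, with $v_1,\dots,v_6$ the weights on $s_1,a_1,s_2,a_2,s_3,\mathrm{EoS}$. At $\theta=0$ the per-sample derivatives in $v_2$ and $v_4$ are $e^{-\Omega(1/T)}$. The reason is that every $2$-matching relation sends exactly one of $s_1,s_2,s_3$ to $a_1$ and one to $a_2$. The derivatives in $v_1$ and $v_3$ agree in expectation. Hence after the single Stage-1 step you only get $|v_1-v_3|=o(T^2)$ from sampling noise, and $|v_2-v_4|=e^{-\Omega(1/T)}$.

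On a confusing sequence, compare the correct relation with the confusing one. The confusing relation matches both cross pairs $(s_1,a_2)$ and $(s_2,a_1)$. (A bijection cannot match one in-pair and one cross pair among $s_1,s_2,a_1,a_2$, contrary to your Step~1.) Their logits differ by exactly $2(v_1-v_3)(v_2-v_4)\ll T$. So $p_1(r^*)\approx\frac12$ on every confusing sequence, and these carry probability at least $\zeta$ by Assumption~\ref{assum:sequence_distribution}.

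Stage~1 therefore only removes the mismatched relations, through a drop of order $p_{\mathrm{mis}}\eta_1/T=\Theta(\sqrt T\log\frac1T)$ in $v_5$, and lands at a near-saddle. Your planned check that the margin times $1/T$ beats $\log|\mathcal R|$ for every sequence is false. A single step cannot repair this. The perturbation is applied after Stage~1 and has radius only $\Theta(T^3\log^{-2}(1/\delta))$, and $v_2-v_4$ gets only an $e^{-\Omega(1/T)}$ push at initialization.

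What is missing is the saddle escape for $\theta$ during Stage~2; this is what $t_2\ge\frac{C_2}{T}\log\frac1T$ pays for.
(i) At the Stage-1 point the gradient is $o(T)$, while the curvature along $\frac12(1,1,-1,-1,0,0)$ is $\lesssim -p_{\mathrm{conf}}/T$. The gradient- and Hessian-Lipschitz constants are $O(1/T^2)$ and $O(1/T^3)$. The perturbed-GD lemma of Jin et al.\ then gives, after $\tilde O(1/T)$ steps, loss $\log 2-\Omega(p_{\mathrm{conf}}^3T^3)$ on confusing sequences, i.e.\ $g:=(v_1-v_3)(v_2-v_4)\gtrsim T^4$.
(ii) Near the saddle, $\partial\ell_1/\partial v_2\propto p(r_{\mathrm{conf}})(v_3-v_1)$ and $\partial\ell_1/\partial v_4\propto p(r_{\mathrm{conf}})(v_1-v_3)$. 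So the two asymmetries feed each other, and $g_{k+1}\ge(1+\Omega(\eta_2p_{\mathrm{conf}}/T))g_k$ until $p_1(r^*)\ge 0.999$. At $\eta_2=\Theta(T^2)$ this takes $O(\frac1T\log\frac1T)$ steps.
(iii) A GD movement bound $\|\theta_k-\tilde\theta\|\lesssim\sqrt T(\log\frac1T)^{3/4}=o(p_{\mathrm{mis}}\eta_1/T)$ keeps non-confusing sequences correct. Every confusing sequence's loss is the same function of $g$ up to $e^{-\Omega(1/\sqrt T)}$, so the empirical bound transfers to all sequences without a uniform-convergence net.

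Your Step~3 is correspondingly misplaced. The second decoding step is already solved by the single $\eta_1$ step on $\omega$. That step gives $\omega_5-\max_{i\ne5}\omega_i\ge\frac{32\eta_1}{147T}=\Theta(\sqrt T\log\frac1T)$, hence accuracy $1-o(e^{-1/\sqrt T})$. No concentration is needed there, since $\ell_2$ as a function of $\mathrm{softmax}(\omega)$ is identical across sequences. Stage~2 only has to preserve this, which holds because $\omega_5-\max_{i\ne5}\omega_i$ is non-decreasing under GD. Your proposed invariant $\omega_5=\omega_7$ is false: at initialization the gradient raises $\omega_5$ and lowers $\omega_7$.
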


Note that Assumption~\ref{assum:identifiability} implies that it is possible to infer the hidden relation from only two in-context examples, and Assumption~\ref{assum:sequence_distribution} (introduced later in Section~\ref{sec:first-decoding-step-low-temp}) considers a particular difficult setting where the pairwise attention pattern is necessary to achieve good accuracy.
This suggests that $O(\mathrm{polylog}( n))$ samples are sufficient for the transformer to learn how to use the stored knowledge to perform IC-recall, far fewer than seeing every possible subjects/answers. 

\begin{algorithm}
\caption{Temperature-Scaled Perturbed Gradient Descent}
\label{alg:temp_perturbed_gd_1}
\begin{algorithmic}
\STATE \textbf{Input:} Decoding temperature $T > 0$, failure probability $\delta \in (0,1)$, learning rates $\eta_1,\eta_2$, number of iterations $t_1,t_2$, fine-tuning dataset $D$.

\STATE  \textbf{Initialization}: $(\theta,\omega) \gets 0$.
\STATE \textbf{Stage 1}:  Train $(\theta,\omega)$ using gradient descent with step size $\eta_1$ over loss $L(\theta,\omega,D; T)$ for $t_1$ iterations to obtain $(\tilde \theta,\tilde \omega)$.
\STATE \textbf{Stage 2}: Sample a perturbation
   $\xi \sim \mathrm{Uniform}\left(\mathbb{B}\left(0,\, \Theta\left(T^3 \log^{-2}(\frac{1}{\delta})\right)\right)\right)$, and set perturbed initialization $(\theta_0,\omega_0) \gets (\tilde\theta,\tilde \omega) + \xi$. Run gradient descent from $(\theta_0,\omega_0)$ with step size $\eta_2 $ over loss $L(\theta,\omega,D; T)$ for $t_2$ iterations.
\end{algorithmic}
\end{algorithm}


The first decoding step that recovers the underlying relation is harder to analyze. Algorithm \ref{alg:temp_perturbed_gd_1} is separated into two stages, where stage 1 is a single large gradient step and stage 2 is a perturbation followed by a few smaller gradient steps. We provide intuition on how the two stages work using a simple warm-up example with 3 subjects in Section~\ref{sec:3-subjects}. We then discuss the difficulty in extending such an analysis to the case when we do not see all subjects/answers in Section~\ref{sec:first-decoding-step-low-temp}, and show how this can be overcome by lowering the temperature $T$ in the decoding. Finally we complete the analysis of the second-step decoding in Section~\ref{sec:2nd_decoding}.



\subsection{Warm up: a three-subject instance}
\label{sec:3-subjects}
  We first consider the simple case where $|\mathcal{S}|=|\mathcal{A}|=3$ and $|\mathcal{R}|=6$. In this case, $\mathcal{R}$ consists all $6$ bijections from $\mathcal{S}$ to $\mathcal{A}$.  For an IC-recall sequence $(s_1,a_1,s_2,a_2,s_3,u_{\eos})$, we denote by $r_1$ the underlying correct relation and $r_2,\dots,r_6$ the remaining relations (see details in Table \ref{tab:relation_mapping}). The goal is to predict the correct relation $r_1$. We call the relation $r_2$ the confusing relation, and $r_3,...,r_6$ which assign $s_3$ to one of $a_1,a_2$ the mismatched relations. It is worth noting that, in this three-subject setting, the gradients satisfy
\(
\nabla_\theta \ell_1(\theta, Z_1)=\nabla_\theta \ell_1(\theta, Z_2)
\)
for any two IC-recall sequences \(Z_1\) and \(Z_2\). Consequently, the empirical loss is equivalent to the population loss, and it suffices to analyze the dynamics on a single fine-tuning sequence. 
As we will see later, even in this simple setting, transformers still learn a non-trivial solution and exhibit a two-stage fine-tuning dynamics.
  \begin{table}[ht]
\centering
\begin{tabular}{c|ccc|c}
\hline
 & \(s_1\) & \(s_2\) & \(s_3\) & type\\
\hline
\(r_1\) & \(a_1\) & \(a_2\) & \(a_3\) & correct \\
\(r_2\) & \(a_2\) & \(a_1\) & \(a_3\) & confusing\\
\(r_3\) & \(a_1\) & \(a_3\) & \(a_2\) & mismatched\\
\(r_4\) & \(a_2\) & \(a_3\) & \(a_1\) & mismatched\\
\(r_5\) & \(a_3\) & \(a_1\) & \(a_2\) & mismatched\\
\(r_6\) & \(a_3\) & \(a_2\) & \(a_1\) & mismatched\\
\hline
\end{tabular}
\caption{The six bijections \(r_1,\dots,r_6\) from \(\{s_1,s_2,s_3\}\) to \(\{a_1,a_2,a_3\}\).}
\label{tab:relation_mapping}
\end{table}

  Let \(v := \mathrm{softmax}(\theta)\) be the attention vector from the EoS token. We will show that in Algorithm \ref{alg:temp_perturbed_gd_1}, for a sufficiently small constant temperature $T$, if we set $\eta_1=\Theta(T^2), \eta_2=\Theta(T^2)$, $t_1=t_2=\infty$ so that Stage 1 and Stage 2 converge, then the fine-tuned transformer will converge to a global optimum where $v$ exhibits a pairwise pattern. Formally, we show the following Theorem.
  
  \begin{theorem}
\label{thm:pairwise_attention}
    There exists constant $T_{0}(\delta)>0$ that only depends on $\delta$, such that for any $T<T_{0}$, if we set $\eta_1 = \Theta(T^2), \eta_2 = \Theta(T^2), t_1=t_2=\infty$ in Algorithm \ref{alg:temp_perturbed_gd_1}, then with probability at least $1-\delta$ the transformer trained by Algorithm \ref{alg:temp_perturbed_gd_1} will converge to the global optimum $\theta^*$ for $L_1(\theta;T)$. At the convergence the attention scores $v^*=(a,a,\frac{1}{2}-a,\frac{1}{2}-a,0,0)$ with some $a\neq \frac{1}{4}.$
\end{theorem}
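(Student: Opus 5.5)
The plan is to reduce $L_1$ to an explicit function of the attention vector $v=\mathrm{softmax}(\theta)$ and then study gradient flow on that function. With the construction of Lemma~\ref{lem:MLP_construction} and orthogonal embeddings, the attention output is $h=\phi(u_\eos)+\sum_{i}v_i\phi(u_i)$. Write $v=(v_1,\dots,v_6)$ for the weights on $(s_1,a_1,s_2,a_2,s_3,u_\eos)$. The only neurons contributing to the logit of a relation $r$ are the $w_{3i-1}=\phi(a)+\phi(s)$ neurons with $r\in\mathcal{R}(s,a)$. Hence the logit is
\[
g_r(v)=\sum_{s\in\{s_1,s_2,s_3\}}\bigl(v_{s}+v_{r(s)}\bigr)^2,
\]
where $v_x$ denotes the attention weight on token $x$, with $v_{a_3}=0$ since $a_3$ is not in the context.

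From Table~\ref{tab:relation_mapping} this gives:
\begin{itemize}
\item $g_{r_1}=(v_1+v_2)^2+(v_3+v_4)^2+v_5^2$;
\item $g_{r_2}=(v_1+v_4)^2+(v_3+v_2)^2+v_5^2$;
\item analogous expressions for the mismatched relations, for example $g_{r_3}=(v_1+v_2)^2+v_3^2+(v_5+v_4)^2$.
\end{itemize}
Thus $\ell_1=\log\bigl(1+\sum_{j\ge2}e^{-(g_{r_1}-g_{r_j})/T}\bigr)$. The key identity is $g_{r_1}-g_{r_2}=2(v_1-v_3)(v_2-v_4)$. Each mismatched margin $g_{r_1}-g_{r_j}$, $j\ge3$, is a quadratic that is positive and of order one once $v_5,v_6$ are small and mass sits on the first four positions.

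Stage 1 analysis. At $\theta=0$ we have $v=\frac16\mathbf{1}$. The confusing margin vanishes, but the mismatched relations carry a gradient of order $1/T$ times $e^{-\Theta(1)/T}$-free terms, since at uniform $v$ all logits tie. The first gradient step pushes attention away from $s_3$ and $u_\eos$ toward the four context tokens. By symmetry of the data under swapping the two pairs, and within each pair, the iterates stay on the invariant manifold $\{v_1=v_2=v_3=v_4\}$. On this manifold the dynamics is essentially one-dimensional, so I would track it and show convergence to a point $\tilde\theta$ with $v_5,v_6\to0$ at rate controlled by $\eta_1=\Theta(T^2)$. There the mismatched contributions are $e^{-\Omega(1/T)}$. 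This point is a critical point of $L_1$ restricted to the manifold, and it must be a saddle of the full loss.

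Saddle and escape. I would compute the Hessian at $\tilde\theta$ in the antisymmetric direction $\Delta=(v_1-v_3,v_2-v_4)$. At this point the confusing term contributes $\log(1+e^{-2\Delta_1\Delta_2/T})\approx \log 2-\Delta_1\Delta_2/T$. This is an indefinite quadratic form with eigenvalue $-\Theta(1/T)$ in the direction $\Delta_1=\Delta_2$, dominating the $O(e^{-\Omega(1/T)})$ curvature from the mismatched terms. So $\tilde\theta$ is a strict saddle. The random perturbation $\xi$ in Stage 2 has nonzero projection on the unstable direction with probability $1-\delta$; the threshold $T_0(\delta)$ enters by making the escape margin beat the perturbation's small-projection probability. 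A standard strict-saddle argument for GD with $\eta_2=\Theta(T^2)\lesssim 1/\mathrm{Lip}(\nabla L_1)$ then shows the iterates leave the saddle and never return, because the stable manifold has measure zero.

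Stage 2 convergence and characterization of the limit. After escape, $(v_1-v_3)(v_2-v_4)$ grows with a sign determined by $\xi$, and the loss decreases. To identify the limit, I would show that at any minimizer the gradient condition forces $v_5=v_6=0$, in the limiting sense as $\theta_5,\theta_6\to-\infty$ relative to the others. Since each margin depends only on the relevant sums, first-order conditions together with the symmetry of $\ell_1$ under exchanging $(v_1\leftrightarrow v_2,\ v_3\leftrightarrow v_4)$ should force $v_1=v_2=a$ and $v_3=v_4=\frac12-a$. Finally $a\ne\frac14$, since otherwise the confusing margin $2(2a-\frac12)^2$ would vanish, and the loss would be at least $\log 2$, exceeding the value achieved after escape.

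Main obstacle. I expect the hardest step to be the global convergence in Stage 2. This means ruling out other critical points on the simplex boundary, with iterates running off to infinity in $\theta$, and proving the within-pair equalities $v_1=v_2$, $v_3=v_4$ along the trajectory rather than only at the optimum. I would first try to show that $v_1-v_2$ and $v_3-v_4$ are contracted by the gradient dynamics, via a Lyapunov argument using convexity of $\ell_1$ in those coordinates near the manifold.
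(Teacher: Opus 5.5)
Your skeleton matches the paper: Stage~1 goes to the symmetric saddle $\tilde v=(\frac14,\frac14,\frac14,\frac14,0,0)$, and escape runs along $(1,1,-1,-1,0,0)$ using $g_{r_1}-g_{r_2}=2(v_1-v_3)(v_2-v_4)$ and curvature $-\Theta(1/T)$. However, two steps do not go through as written.

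\textbf{Stage 1 uses a symmetry the loss does not have.} Swapping within pairs ($v_1\leftrightarrow v_2$, $v_3\leftrightarrow v_4$) is not a symmetry of $L_1$ when $v_5\neq0$, because $s_3$ is in the context but $a_3$ is not. The swap sends $g_{r_3}=(v_1+v_2)^2+v_3^2+(v_5+v_4)^2$ to $(v_1+v_2)^2+v_4^2+(v_5+v_3)^2$, which is none of the six logits. Concretely, at $\theta=0$ all logits tie, yet $\partial L_1/\partial v_1=-\frac{1}{9T}$ while $\partial L_1/\partial v_2=0$. So the first step already leaves $\{v_1=v_2=v_3=v_4\}$, and the dynamics is not one-dimensional. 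The only invariant set comes from swapping the two pairs, $\{v_1=v_3,\ v_2=v_4\}$, on which additionally $p(r_3)=p(r_4)=p(r_5)=p(r_6)$. To identify the limit you must first use the gradient signs to get $v_5=v_6=0$ at the critical point. Then combine $\partial L_1/\partial v_1=-\frac{2}{T}\bigl(p(r_5)+p(r_6)\bigr)v_2$ and $\partial L_1/\partial v_2=-\frac{2}{T}\bigl(p(r_4)+p(r_6)\bigr)v_1$ with that tie to force $v_1=v_2=\frac14$.

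\textbf{Stage 2 is missing the mechanism that produces the pairing, which is the content of the theorem.} The within-pair swap is a symmetry only on the face $v_5=0$, and even there invariance does not put critical points or the GD limit on its fixed set. A convexity argument near $\{v_1=v_2,\ v_3=v_4\}$ is local. You give nothing that keeps the trajectory near that set while it travels an $O(1)$ distance from $\tilde v$ to $v^*$. The paper's key step is a global monotone quantity, $\psi=\max\{\alpha,\beta,1/\alpha,1/\beta\}$ with $\alpha=v_1/v_2$ and $\beta=v_3/v_4$. Since the logits are homogeneous quadratics, $\sum_j v_j\,\partial L_1/\partial v_j=\frac{2}{T}\bigl(\sum_r p(r)l(r)-l(r_1)\bigr)$. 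On $v_5=0$ this gives
\[
\tfrac{T}{2}\bigl(\partial L_1/\partial\theta_1-\partial L_1/\partial\theta_2\bigr)=\bigl(p(r_2)+p(r_4)\bigr)v_1v_4-\bigl(p(r_2)+p(r_5)\bigr)v_2v_3+(v_1-v_2)\bigl(l(r_1)-\textstyle\sum_r p(r)l(r)\bigr).
\]
If $\alpha$ is the largest ratio, then $v_1\ge v_2$ and $v_1v_4\ge v_2v_3$. Hence $l(r_4)-l(r_5)=2(v_1v_4-v_2v_3)\ge0$, so $p(r_4)\ge p(r_5)$, every term is nonnegative, and $\alpha$ decreases; the other cases are analogous.

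This argument needs $l(r_1)>\sum_r p(r)l(r)$, which holds once $L_1<\log 2$. That is why your escape step must be quantitative. A measure-zero stable-manifold argument only says GD does not converge to $\tilde\theta$, but $L_1(\tilde\theta)=\log 2+e^{-\Theta(1/T)}$. You need an explicit decrease, such as the $\Omega(T^3)$ bound (up to $\log(1/\delta)$ factors) from Lemma~14 of Jin et al.\ with $\lambda_{\min}\lesssim-1/T$ and Hessian-Lipschitz constant $O(1/T^3)$. Beating that exponentially small gap is what fixes $T_0(\delta)$, not a projection probability.

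Finally, once the limit lies on $(x,x,\frac12-x,\frac12-x,0,0)$, you must still show that the one-variable loss has a unique critical point in $(0,\frac14)$, and symmetrically in $(\frac14,\frac12)$. Only then can you conclude that the limit is the global optimum rather than merely a critical point.
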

  
  The proof of Theorem \ref{thm:pairwise_attention} is based on a loss-landscape analysis and is deferred to Appendix~\ref{sec:app_normal_tem}. The convergence happens in two stages, and we provide a sketch of each stage below. 


\textbf{First stage: convergence to a saddle point.}
In the first stage, the attention scores \(v\) converge to the saddle point \(\tilde v = (1/4, 1/4, 1/4, 1/4, 0, 0)\). The transformer learns that $s_3$ and the $\mathrm{EoS}$ tokens are irrelevant in predicting the relation. The uniform attention over \(s_1,a_1,s_2,a_2\) arises from the \textit{symmetry} between the two pairs \((s_1,a_1)\) and \((s_2,a_2)\), together with the symmetric initialization. During this stage, \(v_5\) -- the attention score on the query subject \(s_3\) -- decreases, which makes the prediction logits of mismatched relations \(r_3,\dots,r_6\) significantly smaller than those of \(r_1\) and \(r_2\) (see Table~\ref{tab:logits} in the Appendix). By the end of this stage, the prediction of the transformer is uniform on \(r_1\) (the correct relation) and \(r_2\) (the confusing relation), but it cannot distinguish them, since \(v\) remains symmetric across the two subject-answer pairs.

\begin{figure}[t]
    \centering
    \begin{subfigure}{0.45\textwidth}
        \centering
\includegraphics[width=\linewidth]{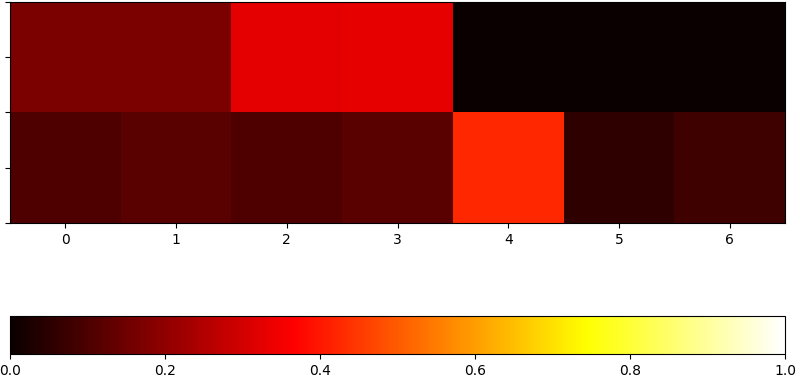}
        \caption{Fine-tuned on  $(s_1,a_1,s_2,a_2,s_3,u_{\eos})$}
    \end{subfigure}
    \hfill
    \begin{subfigure}{0.45\textwidth}
        \centering
        \includegraphics[width=\linewidth]{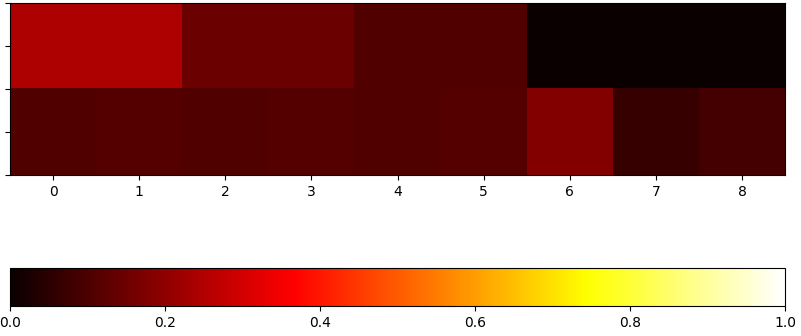}
        \caption{Fine-tuned on $(s_1,a_1,s_2,a_2,s_3,a_3,s_4,u_{\eos})$ }
    \end{subfigure}
    \caption{Pairwise attention at convergence for predicting the relation token in the first row. The first row is the attention scores from EoS token (first decoding step), and the second row is the attention scores from the decoded relation $r^*$ (second decoding step). We set $|\mathcal{S}|=|\mathcal{A}|=8$ and fix MLP as the construction in Lemma \ref{lem:MLP_construction} for both experiments. Left is the $2$-ICL-example input with $|\mathcal{R}|=64, T=0.05$ and right is $3$-ICL-example input with $|\mathcal{R}|=512, T=0.01$.}
    \label{fig:pairwise_att}
\end{figure}

\textbf{Second stage: escape the saddle.}
At the beginning of the second stage, a random perturbation is added. This perturbation breaks the symmetry between the two subject-answer pairs, so that after a few steps of gradient descent, the model escapes from the saddle point with high probability \citep{DBLP:conf/icml/Jin0NKJ17}. 
After escaping the saddle, the transformer begins to assign a higher prediction probability to \(r_1\) than to \(r_2\). Notably, a pairwise attention pattern emerges: the ratios \(v_1/v_2\) and \(v_3/v_4\) converge to \(1\), while the corresponding limiting attention scores are different. See Figure \ref{fig:pairwise_att} for an illustration of this pattern. Denote by \(l:=A_rf(Z;\theta)\) the logit vector of the prediction for the first decoding step, where $A_r$ is the embedding matrix for the the relations and the logit is a quadratic function \footnote{This is because the MLP layer uses the quadratic activation.} of the attention scores \(v\), as shown in Table~\ref{tab:logits} in the Appendix. The logit gap between \(r_1\) and \(r_2\) satisfies \(l(r_1)-l(r_2)\propto (v_1-v_3)(v_2-v_4)\). Hence, the transformer ensures that $v_1 - v_3$ and $v_2 - v_4$ have the same sign to make this gap positive. Moreover, by the AM--GM inequality, replacing both \(v_1,v_2\) by their mean \((v_1+v_2)/2\) and \(v_3,v_4\) by their mean \((v_3+v_4)/2\) can only increase this gap or leave it unchanged. Therefore $v_1$ and $v_2$ will be driven together, and likewise for $v_3$ and $v_4$, thus explaining the pairwise attention pattern. Finally, we remark that even though our analysis is for two in-context examples, the pairwise attention emerges empirically when there are more in-context examples (see Figure~\ref{fig:pairwise_att} (b)).


    

\subsection{Analysis on the first decoding step}
\label{sec:first-decoding-step-low-temp}
In this section, we consider the general case of \( |\mathcal{S}| = |\mathcal{A}|=n \ge 3 \).  Now, the empirical loss is no longer equivalent to the population loss. In particular, there may exist two subject--answer pairs that are not jointly matched by any relation, which breaks the symmetry among IC-recall sequences \(Z\). 
For an input sequence \(Z=(s_1,a_1,s_2,a_2,s_3,u_{\eos})\), we call a relation $r$ a \emph{2-matching relation} if $r$ maps two of the three subjects $\{s_1, s_2, s_3\}$ to the two answers $\{a_1, a_2\}$. We call \(r\) a \emph{confusing relation} (similar to $r_2$ in previous section) for \(Z\) if
\(
r(s_1)=a_2  \text{~and~}  r(s_2)=a_1
\). 
At the saddle point after stage 1, the model cannot distinguish the confusing relation and the correct relation. Similarly, we call a \(2\)-matching relation \(r\) a \emph{mismatched relation} for \(Z\) if \(r(s_3)=a_1\) or \(r(s_3)=a_2\). These correspond to the relations $r_3, r_4, r_5, r_6$ in the simple example in previous subsection. Finally, we call a sequence confusing/mismatched if it has at least one confusing/mismatched relation. Notice that a sequence might be both confusing and mismatched (in fact, all sequences in the simple example are both confusing and mismatched).

Now that we are working with the empirical loss, we will show that fine-tuning the transformer on $O(\mathrm{polylog} (n))$ random samples  can still achieve over $0.999$ test accuracy on predicting the underlying relation $r$ in the first decoding step. We denote $p_{\mathrm{conf}}$ the fraction of confusing sequences and $p_{\mathrm{mis}}$ the fraction of mismatched sequences in $D$. 
We assume that each of these two sequence types occupies at least a constant fraction of the IC-recall data distribution.

\begin{assumption}
\label{assum:sequence_distribution}
    There exists a constant $0<\zeta\le 1$, such that the probability a random IC-recall sequence is a confusing sequence is at least $\zeta$, and the probability it is a mismatched sequence is also at least $\zeta$.
\end{assumption}

Note that if no input sequences are confusing, the problem is easier in the sense that the saddle point that stage 1 converges to will also achieve 100\% accuracy. If there are no confusing sequences and no mismatched sequences, then even the initial solution would have 100\% accuracy. Therefore this assumption ensures we are considering the hardest case.

The loss-landscape analysis in Section~\ref{sec:3-subjects} relies on the symmetry of the loss in the pairs \(v_1,v_3\) and \(v_2,v_4\). This symmetry is difficult to generalize as the empirical loss is not symmetric. Another difficulty is that the attention pairing in Stage 2 relies on \(v_5\) converging to \(0\), which is requires many steps and is in fact unnecessary for achieving near-optimal performance. Therefore, in this section we instead analyze the low-temperature dynamics. 

With a lower temperature $T$, any improvement on the prediction logits will be amplified by $1/T$, which accelerates convergence. It therefore suffices to take only a single gradient descent step with a larger step size in Stage~1 and $O\left(\frac{1}{T}\log \frac{1}{T}\right)$ gradient descent steps in Stage~2. We show that this fine-tuning procedure also reaches a near-optimal solution. It is worth noting that, because the temperature is much lower, the near-saddle point reached after Stage~1 is closer to \((\frac{1}{6},\frac{1}{6},\frac{1}{6},\frac{1}{6},\frac{1}{6},\frac{1}{6})\) than to \((\frac{1}{4},\frac{1}{4},\frac{1}{4},\frac{1}{4},0,0)\), and the final solution does not exhibit the exact pairwise attention pattern (yet it still attends more to one pair compared to the other). Nevertheless, the resulting solution remains near-optimal in terms of accuracy. This property allows the training dynamics to be constrained in a local region near its initialization, and avoids difficult boundary cases when the attention to any of the positions gets near 0. Altogether, we show the following theorem, with proof deferred to Appendix~\ref{sec:proof_finite_sample}.

\begin{theorem}
\label{thm:finite_sample}
Assuming the IC-recall data satisfies Assumptions~\ref{assum:identifiability} and \ref{assum:sequence_distribution}, there exist constants $C_1,C_2>0$, such that for any $T\leq C_1/\log n$, if we set $\eta_1=\Theta(T\sqrt{T}\log\frac{1}{T}), \eta_2=\Theta(T^2), t_1=1$ and $t_2\geq \frac{C_2}{T}\log \frac{1}{T}$ in Algorithm \ref{alg:temp_perturbed_gd_1}, then with probability at least $1-\delta$ over samples $D$ and the random perturbation, transformers fine-tuned by Algorithm \ref{alg:temp_perturbed_gd_1} on $\tilde O\left(\mathrm{poly}\left(\frac{1}{T}\right)\right)$ samples will have accuracy at least $0.999$ for the first decoding step on all IC-recall sequences.
\end{theorem}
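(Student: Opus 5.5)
The plan is to write the first-step logits explicitly as quadratic functions of the attention vector $v=\mathrm{softmax}(\theta)\in\Delta^6$, and then to run a local, low-temperature analysis around the uniform point $v^{(0)}=(\frac16,\dots,\frac16)$.

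\textbf{Step 0: the logits.} With the construction of Lemma~\ref{lem:MLP_construction}, the residual gives $h=\phi(u_\eos)+\sum_i v_i\phi(u_i)$. The quadratic activation then makes the relation logit $l(r)$ a sum over the $(s,a)$-rows. Row $(s,a)$ contributes $(v_s+v_a)^2\mathbf 1[r\in\mathcal R(s,a)]$, where $v_s,v_a$ are the attentions on the corresponding positions. Hence the logit of the correct relation $r^*$ is $(v_1+v_2)^2+(v_3+v_4)^2$. A confusing relation gets $(v_1+v_4)^2+(v_3+v_2)^2$. A mismatched relation gets terms such as $(v_5+v_2)^2$. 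Any other relation gets at most one cross term, so its logit is smaller by a constant at $v^{(0)}$.

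At uniform attention, the correct, confusing and mismatched relations are tied at $4/36$. All other relations have logit at most $\approx 1/36$. Since $T\le C_1/\log n$, their total softmax mass is $n^{O(1)}e^{-c/T}\le T^{10}$, so they can be ignored. This is the point where the condition $T\lesssim 1/\log n$ enters.

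Concentration uses Assumption~\ref{assum:sequence_distribution} and Hoeffding. With $\tilde O(\mathrm{poly}(1/T))$ samples, the empirical fractions $p_{\mathrm{conf}},p_{\mathrm{mis}}$, together with every empirical gradient and Hessian average we need, lie within $\mathrm{poly}(T)$ of their population values. A union bound over a net in the local region $\|\theta\|\le O(\sqrt T\log\frac1T)$ makes this hold uniformly there.

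\textbf{Stage 1.} Compute $\nabla L_1(0)$. At $v^{(0)}$ the correct relation ties with the mismatched ones, and
\[
l(r^*)-l(r_{\mathrm{mis}})\approx 2(v_1+v_2+v_3+v_4)\cdot\tfrac16-\text{terms in }v_5 .
\]
So the gradient decreases $\theta_5$ and $\theta_6$ relative to $\theta_1,\dots,\theta_4$. The gradient has size $\Theta(1/T)$ times a constant proportional to $p_{\mathrm{mis}}\ge\zeta/2$. The step $\eta_1=\Theta(T^{3/2}\log\frac1T)$ therefore moves $\theta$ by $\Theta(\sqrt T\log\frac1T)$. After this step the mismatched logits trail by $\Omega(\sqrt T\log\frac1T)$, so their probability mass is $e^{-\Omega(\log(1/T)/\sqrt T)}$, which is negligible.

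By symmetry of the gradient in $(1,3)$ versus $(2,4)$, exact in population and approximate empirically, $\tilde\theta$ lies $\mathrm{poly}(T)$-close to the symmetric subspace $\theta_1=\theta_3$, $\theta_2=\theta_4$. In expectation this subspace has $\theta_1=\dots=\theta_4$. On it the correct and confusing logits tie, because
\[
l(r^*)-l(r_{\mathrm{conf}})=2(v_1-v_3)(v_2-v_4),
\]
which is the identity from Section~\ref{sec:3-subjects}.

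\textbf{Stage 2 (main obstacle).} Near $\tilde\theta$, restrict to the confusing sequences and track $x=\theta_1-\theta_3$ and $y=\theta_2-\theta_4$. The loss in these coordinates is approximately
\[
\mathrm{const}-\tfrac{c}{T}\,p_{\mathrm{conf}}\,xy+O(\cdot),
\]
because $v_1-v_3\approx v\,x$ and $v_2-v_4\approx v\,y$ with $v\approx\frac16$. The Hessian therefore has a negative direction $x=y$ with eigenvalue $-\Theta(\zeta/T)$. This is a strict saddle, and its curvature is larger than all the other gradient and Hessian perturbations coming from finite samples and the mismatched terms, which are $\mathrm{poly}(T)$.

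I would follow the perturbed-GD escape argument of \citep{DBLP:conf/icml/Jin0NKJ17} concretely. The uniform perturbation of radius $\Theta(T^3\log^{-2}\frac1\delta)$ gives, with probability $1-\delta$, an initial projection $|x+y|\gtrsim \delta\cdot T^3\log^{-2}(1/\delta)$ on the unstable direction. With $\eta_2=\Theta(T^2)$, each step multiplies this projection by $1+\Theta(T)$, while the stable direction $x-y$ contracts. After $t_2=O(\frac1T\log\frac1T)$ steps, $xy\gtrsim\sqrt T$-scale, so the gap between the correct and confusing logits exceeds $CT\log\frac1T$ on all confusing sequences. The sign is automatically correct because the term is $xy$ with $x\approx y$.

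The delicate part is controlling the higher-order terms. The iterate must stay in the local ball where the quadratic model holds and where the gains from Stage 1 against mismatched relations are not undone. I would handle this by a potential argument showing that the mismatched margin is nondecreasing along the Stage-2 dynamics up to $O(T)$ error.

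\textbf{Conclusion.} On every test sequence three bounds now hold. Mismatched relations lose by a margin of $\gg T\log\frac1T$. Confusing relations lose by a margin $\ge CT\log\frac1T$. All other relations lose by a constant. Each of these bounds depends on the sequence only through its type, not through the sampled data. The softmax probability of $r^*$ is therefore at least $1-n^{O(1)}T^{C}\ge 0.999$, which makes the argmax correct, and this holds uniformly over all IC-recall sequences.
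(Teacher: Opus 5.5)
Your overall plan follows the paper's: quadratic logits, with $T\lesssim 1/\log n$ suppressing the 1- and 0-matching relations; one Stage-1 step of size $\Theta(T^{3/2}\log\frac1T)$ that suppresses mismatched relations; a near-saddle where $l(r^*)-l(r_{\mathrm{conf}})=2(v_1-v_3)(v_2-v_4)$ vanishes; curvature $-\Theta(p_{\mathrm{conf}}/T)$ along $(1,1,-1,-1,0,0)$; and a Jin et al.\ escape. However, your Stage 2 has two genuine gaps.

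\textbf{First gap: growth after escape.} Your linearized escape, where the unstable projection is multiplied by $1+\Theta(T)$ per step, is valid only while $g:=2(v_1-v_3)(v_2-v_4)\lesssim T$. Beyond that, the loss on a confusing sequence is essentially $\log(1+e^{-g/T})$. The per-step growth of $g$ is then proportional to the confusing relation's probability $\pi\approx e^{-g/T}$, so growth stalls. Combining this with the movement bound below, $O(\frac1T\log\frac1T)$ steps only bring $g/T$ to $O(\log\log\frac1T)$. Hence three of your claims are false: that $xy$ reaches $\sqrt T$ scale, that the confusing margin exceeds $CT\log\frac1T$, and the final $1-n^{O(1)}T^{C}$ bound.

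What you need is a nonlinear growth statement, which is the paper's Lemma~\ref{lem:stage_3_finite_sample}. Using
$(v_1^2+v_3^2)(v_2-v_4)^2+(v_2^2+v_4^2)(v_1-v_3)^2\ge 2\sqrt{(v_1^2+v_3^2)(v_2^2+v_4^2)}\,g$,
one gets $g_{k+1}\ge(1+c\,\eta_2 p_{\mathrm{conf}}\pi/T)\,g_k$. This is geometric growth at rate $\Theta(T)$ as long as $\pi$ is bounded below by a constant, i.e.\ until $p_1(r^*)\ge 0.999$. That is exactly why the theorem claims only $0.999$.

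\textbf{Second gap: locality.} Your device, that the mismatched margin is nondecreasing up to $O(T)$, is false. Reaching $g\approx 7T$ forces $|v_1-v_3|$ or $|v_2-v_4|$ to be at least $\sqrt{3.5T}$. The escape moves each pair apart roughly symmetrically, so it lowers one of $v_1,v_3$ by order $\sqrt T$. Take the mismatched relation with $r(s_1)=a_1$, $r(s_3)=a_2$: its margin $l(r^*)-l(r)=2v_4(v_3-v_5)$ then shrinks by $\Theta(\sqrt T)$. There is no compensating force of that size on $\theta_5$, because mismatched relations carry exponentially small mass.

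The missing idea is scale separation. By the GD movement bound (Lemma~\ref{lem:movement_upper_bound}) with gradient-Lipschitz constant $\Theta(1/T^2)$, Stage 2 moves $\theta$ by only $O(T\sqrt k)=O(\sqrt T(\log\frac1T)^{3/4})$ for $k\lesssim\frac1T(\log\frac1T)^{1.5}$. This is $o(\alpha)$, where $\alpha=\Theta(\sqrt T\log\frac1T)$ is the Stage-1 displacement; the extra $\log\frac1T$ in $\eta_1$ is there precisely to buy this. Read literally as $x,y\sim T^{1/4}$, your Stage-2 target would exceed this budget and could undo the mismatched suppression.

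\textbf{Smaller correction to Stage 1.} The first step raises $\theta_1,\theta_3$, lowers $\theta_5$, and leaves $\theta_2,\theta_4,\theta_6$ unchanged up to exponentially small terms. This is because the correct-versus-mismatched gaps, such as $2v_4(v_3-v_5)$ and $2v_2(v_1-v_5)$, have gradients at uniform $v$ supported on positions $1,3,5$. So $\tilde v\approx(\frac16+\alpha,\frac16,\frac16+\alpha,\frac16,\frac16-2\alpha,\frac16)$ rather than $\theta_1=\dots=\theta_4$, though the tie $v_1=v_3$, $v_2=v_4$ you need still holds. With these replacements, your type-only-dependence argument gives accuracy $\ge 0.999$ uniformly over sequences.
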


\subsection{Analysis on the second decoding step}
\label{sec:2nd_decoding}

The second decoding step is easier to analyze than the first decoding step. Intuitively, the transformer only needs to realize that attending to $s_3$, and combining it with the correct relation from the residual connection, is sufficient to produce the correct answer $a_3$. We show that this already happens at the end of Stage 1 in Lemma~\ref{lem:2nd_decoding}. The proof is deferred to Appendix~\ref{sec:proof_2nd_decoding}.  

\begin{restatable}{lemma}{secondDecoding}\label{lem:2nd_decoding}
Assume the IC-recall data satisfies Assumptions~\ref{assum:identifiability} and \ref{assum:sequence_distribution}.
    Set $T, \eta_1,\eta_2$ and the sample size $|D|$ same as in Theorem~\ref{thm:2nd_decoding_step} for Algorithm~\ref{alg:temp_perturbed_gd_1}, after Stage 1 and throughout Stage 2, the transformer has accuracy $1-o\left(\exp\left(-\frac{1}{\sqrt{T}}\right)\right)$ on all IC-recall sequences for the second decoding step.
\end{restatable}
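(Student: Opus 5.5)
The plan is to study the second decoding step directly in terms of the attention vector $u:=\mathrm{softmax}(\omega)\in\mathbb{R}^7$ from the decoded relation token, over the input $(s_1,a_1,s_2,a_2,s_3,u_{\eos},r^*)$. Because of $W^P$, the input to the MLP is $h=\phi(r^*)+\sum_{j}u_j\phi(x_j)$. We compute the answer logits $A_af_{\mathrm{MLP}}(h)$ exactly with the construction of Lemma~\ref{lem:MLP_construction}. Only the rows $w_{3i-2}=\phi(s)+\sum_{r\in\mathcal{R}(s,a)}\phi(r)$ with $v_{3i-2}=\phi(a)$ contribute to answer logits. For answer $a$, the logit is therefore
\[
\sum_{s}\Bigl(\langle\phi(s),h\rangle+\sum_{r\in\mathcal{R}(s,a)}\langle\phi(r),h\rangle\Bigr)^2 .
\]
The relation coordinates of $h$ are $1+u_7$ on $r^*$ and $0$ elsewhere. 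The subject coordinates are $u_1,u_3,u_5$ on $s_1,s_2,s_3$.

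Substituting, the correct answer $a_3$ receives $(u_5+1+u_7)^2$ from $s=s_3$, plus $(u_1)^2+(u_3)^2$ and other small terms. A competitor $a\neq a_3$ receives at most $(u_j+\mathbb{1}[r^*(s)=a](1+u_7))^2$ summed over $s$. Since $r^*$ is a bijection, $a=r^*(s')$ for exactly one $s'\neq s_3$. If $s'\notin\{s_1,s_2\}$, its logit is $(1+u_7)^2+O(u_1^2+u_3^2)$. If $s'=s_1$ (the answer is $a_1$), it gets $(u_1+1+u_7)^2$. The gap $l(a_3)-l(a)$ is thus roughly $2(1+u_7)(u_5-\max(u_1,u_3))+O(u^2)$.

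At initialization $u$ is uniform, so the gap against $a_1,a_2$ is $0$ at leading order and accuracy is poor. The key step is to show that the single Stage~1 step with $\eta_1=\Theta(T^{3/2}\log\frac1T)$ raises $\omega_5$ relative to $\omega_1,\omega_3$ by $\Omega(\sqrt{T}\log\frac1T)$-order amounts. I would compute $\nabla_\omega\ell_2$ at $\omega=0$. By softmax with temperature $T$ near uniform logits, it equals $\frac1T J_u^\top\bigl(\nabla_u l(a_3)-\mathbb{E}_{p_2}\nabla_u l\bigr)$. The expected logit derivative in the $u_5$ direction is positive and constant order, while for $u_1,u_3$ it is negative or smaller. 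This holds because $a_1,a_2$ (and the bulk of wrong answers) get probability mass comparable to $a_3$, and the data fraction does not matter: the gradient is the same for every sequence up to relabeling, so the empirical gradient equals the population gradient, and concentration is trivial. The update then gives $u_5-\max(u_1,u_3)\ge c\,\eta_1/T^{\,}\cdot\frac1{T}\cdot\Theta(1)$ after the softmax Jacobian $\Theta(1)$, which is $\Theta(\sqrt{T}\log\frac1T)$, hence the logit gap divided by $T$ is $\gtrsim \frac{1}{\sqrt T}\log\frac1T$. Summing over $n$ competitors with $\log n\lesssim 1/T$ yields a failure probability $\le n\exp(-c\log(1/T)/\sqrt T)=o(\exp(-1/\sqrt T))$.

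The last part is to show that Stage~2 preserves this margin. The perturbation has size $T^3$, negligible. During Stage~2 we will argue that the gradient of $L_2$ only increases the margin, since it pushes $u_5$ up by the same sign computation. $L_1$ depends only on $\theta$, so it does not interfere. With $\eta_2=\Theta(T^2)$ and $t_2=O(\frac1T\log\frac1T)$, the total movement of $\omega$ is controlled, so $u$ stays in a region where the leading-order expansion is valid. I expect the main obstacle to be the bookkeeping of the second-order terms $O(u^2)$ and of the extra contributions from relations in $\mathcal{R}(s,a)$ shared across answers. These must be bounded uniformly over all sequences and shown to be dominated by the first-order gap, which is only of order $\sqrt T\log\frac1T$. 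I would handle this by bounding $|u_j-\frac17|=O(\sqrt T\log\frac1T)$ throughout, so that quadratic corrections are $O(T\log^2\frac1T)$ and hence smaller.
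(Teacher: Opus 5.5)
Your computation of the answer logits and your Stage~1 step agree with the paper. $\ell_2$ is the same function of $u=\mathrm{softmax}(\omega)$ for every sequence, and one explicit gradient step from $\omega=0$ gives $\omega_5-\max_{i\neq 5}\omega_i\ge\frac{32\eta_1}{147T}=\Theta(\sqrt{T}\log\frac1T)$. (Your displayed bound has a spurious extra factor $\frac1T$, but the order you conclude is right.)

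The genuine gap is in Stage~2. Theorem~\ref{thm:2nd_decoding_step} only imposes the \emph{lower} bound $t_2\ge\frac{C_2}{T}\log\frac1T$, and the lemma claims the accuracy bound throughout Stage~2 for every such $t_2$. Your argument keeps $|u_j-\frac17|=O(\sqrt T\log\frac1T)$ by bounding total movement with $t_2=O(\frac1T\log\frac1T)$, which you are not given. Moreover, $\ell_2$ has no minimizer at finite $\omega$ (its infimum is attained only at $u_5=1$), so there is no reason for $u$ to stay near uniform over arbitrarily many steps. Your ``same sign computation'' is only carried out at the uniform point. The quantity you track, $u_5-\max(u_1,u_3)$, is also not controlled by gradient signs alone: an increase of $\omega_7$, for instance, shrinks it through the normalizer.

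What is missing is a global invariant in $\omega$-space, which is the paper's argument. Whenever $\omega_5=\max_i\omega_i$:
\begin{itemize}
\item $\partial\ell_2/\partial u_1,\ \partial\ell_2/\partial u_3\ge0\ge\partial\ell_2/\partial u_5=\frac{2(1+u_7)}{T}\bigl(p_{2,r^*}(a_3)-1\bigr)$;
\item the derivatives in $u_2,u_4,u_6$ vanish;
\item $\partial\ell_2/\partial u_7\ge\frac{2u_5}{T}\bigl(p_{2,r^*}(a_3)-1\bigr)\ge\partial\ell_2/\partial u_5$, because $u_5\le 1+u_7$.
\end{itemize}
Hence $\sum_j u_j\,\partial\ell_2/\partial u_j\ge\partial\ell_2/\partial u_5$. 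Using $u_5\ge u_i$, this gives $\partial\ell_2/\partial\omega_i-\partial\ell_2/\partial\omega_5\ge u_i\bigl(\partial\ell_2/\partial u_i-\partial\ell_2/\partial u_5\bigr)\ge0$ for all $i\neq5$. So $m:=\omega_5-\max_{i\neq5}\omega_i$ never decreases under a GD step, for any number of steps and any step size. Then $u_5\ge\frac17$ and $u_5-u_i=u_5(1-e^{\omega_i-\omega_5})\ge\frac17(1-e^{-m})$, with no near-uniformity needed.

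There is also a cheaper repair. Since $\ell_2$ is sequence-independent, the accuracy equals $e^{-L_2}$. With the constant in $\eta_2=\Theta(T^2)$ small enough and an $O(1/T^2)$ smoothness bound proved as in Proposition~\ref{prop:smoothness}, the descent lemma makes the accuracy non-decreasing along Stage~2.

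The final step of your accuracy bound also fails as written. $n\exp(-c\log(1/T)/\sqrt T)$ is not $o(e^{-1/\sqrt T})$ when $\log n$ is as large as $C_1/T$, so you must separate the competitors. The gaps are in fact exact:
\begin{itemize}
\item $l(a_3)-l(a_1)=2(1+u_7)(u_5-u_1)$ and $l(a_3)-l(a_2)=2(1+u_7)(u_5-u_3)$;
\item $l(a_3)-l(a)=2u_5(1+u_7)=\Omega(1)$ for $a\notin\{a_1,a_2,a_3\}$.
\end{itemize}
Thus only two answers contribute $\exp\bigl(-\Omega(T^{-1/2}\log\frac1T)\bigr)$, and the other $n-3$ contribute $(n-3)e^{-\Omega(1/T)}$, which is negligible for $C_1$ small.

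These exact formulas also show that the ``main obstacle'' you anticipate is absent. The $u_1^2,u_3^2,u_5^2$ terms cancel identically, and since $r^*$ is the only relation carrying mass in $h$, no contributions from other relations in $\mathcal{R}(s,a)$ appear. Your proposed treatment would not have worked anyway. An $O(u^2)$ remainder with $u_j\approx\frac17$ is of constant order, not $O(T\log^2\frac1T)$; near-uniformity only controls terms of order $O\bigl((u_j-\frac17)^2\bigr)$, which could not absorb a remainder against a $\Theta(\sqrt T\log\frac1T)$ margin.
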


 Theorem~\ref{thm:2nd_decoding_step} follows immediately by combining Theorem~\ref{thm:finite_sample} with Lemma~\ref{lem:2nd_decoding}.

\section{Experiments}
\label{sec:exp}
\textbf{Experiment setup.} Our model is a single-head attention layer followed by an MLP layer. We train the transformer using the Adam optimizer with learning rate $10^{-3}$. For all experiments, we fix the size $n$ of subjects $\mathcal{S}$ and answers $\mathcal{A}$, then we generate relations $\mathcal{R}$ as random bijections from $\mathcal{S}$ to $\mathcal{A}$. All subjects, answers, relations and EoS tokens use fixed random orthonormal embeddings. 

Figure~\ref{fig:acc_results} reports the test accuracies for the first decoding step on the IC-recall data. We can see that transformers can learn the IC-recall task with only $8$ samples in the experiment. The second decoding step consistently achieves perfect test accuracy across all hyperparameter settings and random seeds. 

\begin{figure}[t]
    \centering
    \includegraphics[width=0.5\linewidth]{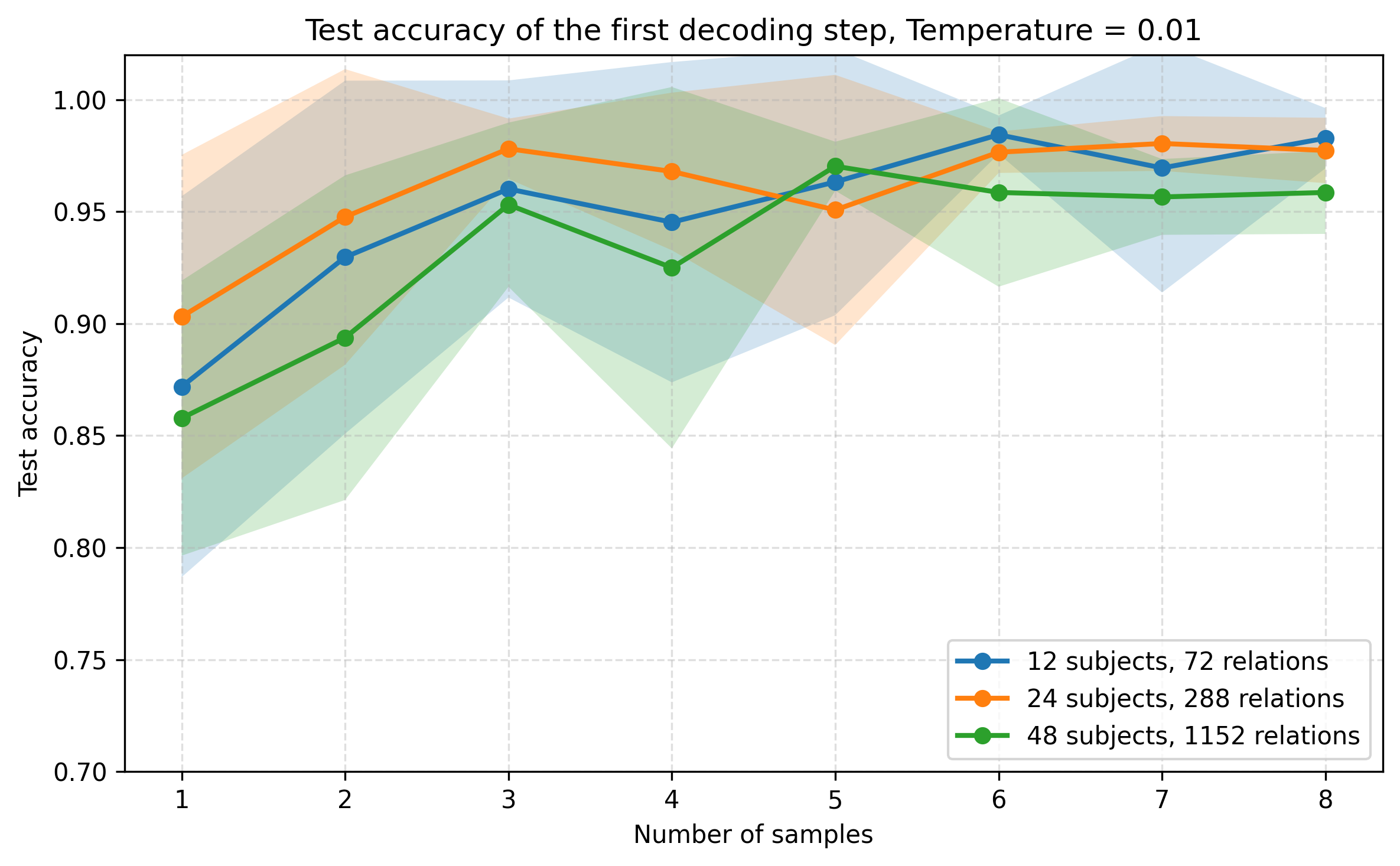}
    \caption{Test accuracies for fine-tuning on the first decoding step. The number of answers is equal to the number of subjects $|\mathcal{S}|$. The MLP is preconstructed and fixed during fine-tuning. We report the mean accuracies and standard deviations of $10$ random seeds.}
    \label{fig:acc_results}
\end{figure}

\textbf{Experiments with pretrained MLP.} We also conduct experiments with a pretrained MLP. Pretraining data \(\widetilde Z_{\mathrm{PT}} = \left(u_1,u_2\right)\) is generated as follows: 

\begin{enumerate}
    \item Sample a random subject $s$ from $\mathcal{S}$ and a random relation $r$ from $\mathcal{R}$. Let $a=r(s)$ and form a fact triplet $(s,r,a)$.
    \item Randomly choose two elements from this triplet and place them at positions \(u_1\) and \(u_2\). The task is to predict the remaining third element from the input sequence \(\widetilde Z_{\mathrm{PT}}\).
\end{enumerate}

During pretraining, the attention parameters $W^{KQ}$ and the MLP are both unfrozen. Then during fine-tuning, we freeze the MLP layer and  train $W^{KQ}$. Such fine-tuned transformer also exhibits the pairwise attention pattern. See Figure \ref{fig:pairwise_att_pretrained_MLP} for illustrations of converged attention scores. It is worth noting that attention weight for the second decoding step in Figure \ref{fig:pairwise_att_pretrained_MLP} is also assigned to the $\mathrm{EoS}$ token, which is different from Figure \ref{fig:pairwise_att}. As the $\mathrm{EoS}$ token does not contain any information, we believe the transformer uses it as a way to balance the coefficient between the attention to $s_3$ and the residual connection to $r^*$.

\begin{figure}[t]
    \centering
    \begin{subfigure}{0.45\textwidth}
        \centering
        \includegraphics[width=\linewidth]{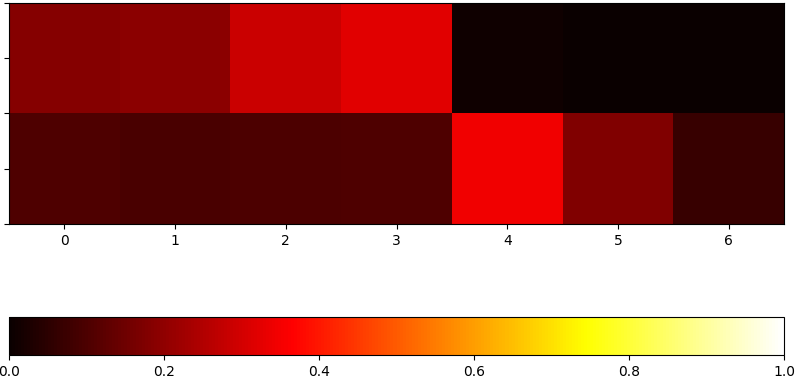}
        \caption{Fine-tuned on  $(s_1,a_1,s_2,a_2,s_3,u_{\eos})$}
    \end{subfigure}
    \hfill
    \begin{subfigure}{0.45\textwidth}
        \centering
        \includegraphics[width=\linewidth]{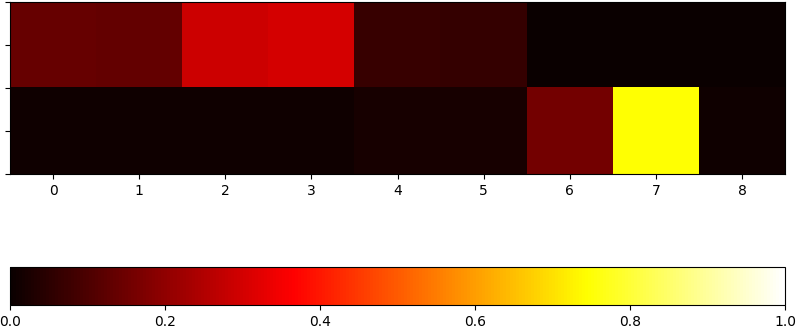}
        \caption{Fine-tuned on $(s_1,a_1,s_2,a_2,s_3,a_3,s_4,u_{\eos})$ }
    \end{subfigure}
    \caption{The pairwise attention pattern generalizes to pretraining the MLP. We consider pretraining the whole transformer and fine-tuning the whole attention layer. Left is the $2$-ICL-example input with $|\mathcal{S}|=|\mathcal{A}|=8,|\mathcal{R}|=64, T=0.05$ and right is the $3$-ICL-example input with $|\mathcal{S}|=|\mathcal{A}|=8, |\mathcal{R}|=512, T=0.01$.}
    \label{fig:pairwise_att_pretrained_MLP}
\end{figure}

\section{Conclusion}

We introduced the IC-recall task to study in-context learning when a transformer must infer a hidden relation from context and then use it to retrieve the correct answer from memory. We constructed an MLP associative memory for fact triplets and analyzed the fine-tuning dynamics of a one-layer transformer with the MLP frozen. We showed that fine-tuning is able to recover the hidden relation by converging to a pairwise attention pattern in the minimal setting and requiring only polylogarithmically many fine-tuning samples in the general setting. The main limitation of our work is that the analysis relies on a specially constructed MLP and a simplified architecture. Extending these results to general pretrained transformers and end-to-end training remains future work.
\section{Acknowledgements}
Rong Ge and Ruomin Huang acknowledge support of NSF Award DMS-2031849.
\bibliographystyle{unsrtnat}
\bibliography{main.bib}
\newpage
\appendix

\section{Construction of MLP associative memory}
\label{sec:MLP_app}
We show that there exists a construction of the MLP layer with $d_{\mathrm{MLP}}=O(|\mathcal{S}|\cdot |\mathcal{A}|)$ that achieves $100\%$ accuracy as an MLP associative memory.
\begin{lemma}
\label{lem:MLP_construction_app}
    There exists an MLP layer $f_{\mathrm{MLP}}(x)=V^\top \sigma(Wx)$ with width $d_{\mathrm{MLP}}=O(|\mathcal{S}|\cdot |\mathcal{A}|)$,  such that for any triplet $(s,r,a)$ where $r$ maps $s$ to $a$, we have  $a=\arg\max_{u\in\mathcal{V}} \phi(u)^\top f_{\mathrm{MLP}}(\phi(s)+\phi(r))$, $r=\arg\max_{u\in\mathcal{V}} \phi(u)^\top f_{\mathrm{MLP}}(\phi(a)+\phi(s))$ and  \(s=\arg\max_{u\in\mathcal{V}} \phi(u)^\top f_{\mathrm{MLP}}(\phi(a)+\phi(r)).\)
\end{lemma}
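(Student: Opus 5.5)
The plan is to take the explicit three-block construction from the proof sketch in Section~\ref{sec:MLP} and verify each of the three retrieval properties by direct computation. Since the embeddings are orthonormal and $\sigma(x)=x^2$, every logit $\phi(u)^\top f_{\mathrm{MLP}}(x)=\sum_j \langle w_j,x\rangle^2\langle v_j,\phi(u)\rangle$ is a sum of small squared integers, so this is exact bookkeeping.

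\textbf{Construction.} Index the $(s',a')$ pairs by $i$ and set $d_{\mathrm{MLP}}=3|\mathcal{S}||\mathcal{A}|$. For the $i$-th pair, take
\[
w_{3i-2}=\phi(s')+\textstyle\sum_{r'\in\mathcal{R}(s',a')}\phi(r'),\quad v_{3i-2}=\phi(a'),
\]
\[
w_{3i-1}=\phi(a')+\phi(s'),\quad v_{3i-1}=\textstyle\sum_{r'\in\mathcal{R}(s',a')}\phi(r'),
\]
\[
w_{3i}=\phi(a')+\textstyle\sum_{r'\in\mathcal{R}(s',a')}\phi(r'),\quad v_{3i}=\phi(s').
\]
Each block writes only onto one token type: answers, relations, or subjects respectively.

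\textbf{Case $x=\phi(s)+\phi(r)$, target $a=r(s)$.} For neuron $3i-2$ we have $\langle w_{3i-2},x\rangle=\mathbb 1[s'=s]+\mathbb 1[r(s')=a']$.
\begin{itemize}
\item For the target $a$: the pair $(s,a)$ contributes $2^2=4$. Because $r$ is a bijection with $r(s)=a$, no other pair $(s',a)$ satisfies $r(s')=a$, so the logit is exactly $4$.
\item For $a'\neq a$: the pair $(s,a')$ contributes $1$, and the unique $s'$ with $r(s')=a'$ contributes $1$, giving $2$.
\item For a relation $r''$: neurons $3i-1$ have inner product $\mathbb 1[s'=s]$, and only the pair $(s,r''(s))$ sends mass to $r''$, giving $1$.
\item For a subject $s''$: neurons $3i$ have inner product $\mathbb 1[r(s')=a']$, and only the pair $(s'',r(s''))$ contributes, giving $1$.
\item The EoS token gets $0$.
\end{itemize}
Hence $a$ is the unique argmax.

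\textbf{Case $x=\phi(a)+\phi(r)$, target $s$.} This is symmetric to the previous case, using the $3i$ block with the roles of $s$ and $a$ swapped. The target $s=r^{-1}(a)$ gets $4$. Any other subject $s''$ gets $1$ from the pair $(s'',a)$ and $1$ from the pair $(s'',r(s''))$, giving $2$. All answers and relations get $1$, and EoS gets $0$.

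\textbf{Case $x=\phi(a)+\phi(s)$, target $r$.} Neuron $3i-1$ has inner product $\mathbb 1[s'=s]+\mathbb 1[a'=a]$.
\begin{itemize}
\item Every $r'\in\mathcal{R}(s,a)$, in particular the target $r$, receives $4$ from the pair $(s,a)$.
\item A relation $r'\notin\mathcal{R}(s,a)$ receives $1$ from the pair $(s,r'(s))$ and $1$ from the pair $(r'^{-1}(a),a)$, giving $2$.
\item Answers and subjects each receive $1$, through the $3i-2$ and $3i$ blocks respectively.
\end{itemize}

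\textbf{Main subtlety.} This last case cannot have a strict argmax when $|\mathcal{R}(s,a)|>1$: all relations in $\mathcal{R}(s,a)$ tie at $4$, and Assumption~\ref{assum:identifiability} only rules out ambiguity for two pairs, not one. No MLP can break this tie, because the input $\phi(a)+\phi(s)$ determines only the set $\mathcal{R}(s,a)$, not which relation in it is intended. I will therefore read ``$r=\arg\max$'' as ``$r$ attains the maximum'', with ties only inside $\mathcal{R}(s,a)$, and state this interpretation explicitly in the proof.

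Apart from that, the only care needed is to count all cross-block contributions to non-target token types, which is what the enumeration above does.
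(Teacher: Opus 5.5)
Your proposal is correct and follows the paper's proof: it uses the same three-block construction, verifies each case by direct counting, and resolves the relation case the same way the paper does (every relation in $\mathcal{R}(s,a)$ attains the maximum value $4$, so the MLP predicts some relation in $\mathcal{R}(s,a)$). Your value of $2$ for a competing answer in the first case, and for a competing subject in the second case, is actually the accurate count; the paper writes $1$, missing one of the two contributing pairs, but in either case the target's value $4$ strictly dominates.
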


\begin{proof}
    Let $d_{\mathrm{MLP}}=3|\mathcal{S}|\cdot |\mathcal{A}|.$ Let $w_j$ and $v_j$ be the $j$-th row of $W$ and $V$ respectively. We assign indices to all $(s,a)$ pairs. For $i$-th $(s,a)$ pair, we assign $3$ rows in $W$ and $V$ to store the relevant factual knowledge. Specifically, let $w_{3i-2}=\phi(s)+ \sum_{r\in\mathcal{R}(s,a)}\phi(r)$ and $v_{3i-2}=\phi(a)$; let $w_{3i-1}=\phi(a)+\phi(s)$ and $v_{3i-1}=\sum_{r\in\mathcal{R}(s,a)}\phi(r)$; let $w_{3i}=\phi(a)+\sum_{r\in\mathcal{R}(s,a)}\phi(r)$ and $v_{3i}=\phi(s)$. Here we define $\sum_{r\in \mathcal{R}(s,a)}\phi(r)=0$ if $\mathcal{R}(s,a)$ is empty. Fix a $(s,a)$ pair such that $\mathcal{R}(s,a)$ is non-empty and we examine the properties of MLP associative memory as follows.

For any $r\in\mathcal{R}(s,a)$, since $r$ is an injection, we know $s$ is the only subject that $r$ maps to $a$. Therefore we can examine that $\phi(a)^\top f_{\mathrm{MLP}}(\phi(s)+\phi(r))=4$. For any $a'\neq a$, we know $a'$ can only be mapped from $s$ through relations that are not $r$. Therefore we have $\phi(a')^\top f_{\mathrm{MLP}}(\phi(s)+\phi(r))=1$. For any subject $s'$, we have $\phi(s')^\top f_{\mathrm{MLP}}(\phi(s)+\phi(r))=1$. For any relation $r'$, we have $\phi(r')^\top f_{\mathrm{MLP}}(\phi(s)+\phi(r))=1$. Therefore the constructed MLP can correctly predict $a$ from $\phi(s)+\phi(r)$ using argmax decoding. 

For any $r\in\mathcal{R}(s,a)$, we know $a$ is the only answer that $s$ is mapped to through $r$. Therefore we have $\phi(s)^\top f_{\mathrm{MLP}}(\phi(a)+\phi(r))=4$. For any $s'\neq s$, we know $a'$ can only be mapped from $s$ through relations that are not $r$. Therefore we have $\phi(s')^\top f_{\mathrm{MLP}}(\phi(a)+\phi(r))=1$. For any answer $a'$, we have $\phi(a')^\top f_{\mathrm{MLP}}(\phi(a)+\phi(r))=1$ if the preimage of $a'$ under $r$ exists; otherwise $\phi(a')^\top f_{\mathrm{MLP}}(\phi(a)+\phi(r))=0$. For any relation $r'$, we have $\phi(r')^\top f_{\mathrm{MLP}}(\phi(a)+\phi(r))=1$ if the preimage of $a$ under $r'$ exists; otherwise $\phi(r')^\top f_{\mathrm{MLP}}(\phi(a)+\phi(r))=0$. Therefore the constructed MLP can correctly predict $s$ from $\phi(a)+\phi(r)$ using argmax decoding.

For any $r\in\mathcal{R}(s,a)$, we know $\phi(r)^\top f_{\mathrm{MLP}}(\phi(s)+\phi(a))=4$. For any $r'\notin \mathcal{R}(s,a)$, we know $r'$ maps $s$ to a different answer $a'$ and the preimage of $a$ is a different subject $s'$ (if any). Therefore we have $\phi(r')^\top f_{\mathrm{MLP}}(\phi(s)+\phi(a))\leq 2.$ For any answer $a'$, we have $\phi(a')^\top f_{\mathrm{MLP}}(\phi(s)+\phi(a))=1$. For any subject $s'$, we have $\phi(s')^\top f_{\mathrm{MLP}}(\phi(s)+\phi(a))=1$. Therefore the constructed MLP can correctly predict a relation in $\mathcal{R}(s,a)$ from $\phi(s)+\phi(a)$ using argmax decoding. 
\end{proof}

\section{Proof to the normal temperature convergence for $n=3$}
\label{sec:app_normal_tem}
We first show that under a normal temperature $T$, with infinite amount of time, transformer trained by PGD can converge to the global optimal solution. For the first decoding step training, we will show Algorithm \ref{alg:temp_perturbed_gd_1} converges to the global optimal solution $\theta^*$ with small enough temperature $T$. At the convergence, the attention scores $v^*=\mathrm{softmax}(\theta^*)=(a,a,\frac{1}{2}-a,\frac{1}{2}-a,0,0)$ with some $a\neq \frac{1}{4}.$ We write out the logits w.r.t. the attention scores $v$ in Table \ref{tab:logits}. We can immediately obtain the following Lemma, which states that the partially fine-tuned transformer cannot predict the answer with one decoding step. 
\begin{lemma}
\label{lem:negative}
    Assume $|\mathcal{S}|=|\mathcal{A}|=3$ and $\mathcal{R}$ consists of all $6$ bijections from $\mathcal{S}$ to $\mathcal{A}$. Asumme the MLP layer $f_{\mathrm{MLP}}$ of the transformer $f(\cdot;W^{KQ})$ is fixed to be the construction in Lemma \ref{lem:MLP_construction_app}. If we use the first decoding step prediction to predict from answers $a_1,a_2,a_3$ without the intermediate relation, then the prediction accuracy cannot exceed $1/3$ for any input IC-recall data $\widetilde Z=(s_1,a_1,s_2,a_2,s_3,u_{\eos})$ and any $W^{KQ}$. 
\end{lemma}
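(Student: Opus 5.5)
The plan is to compute the answer logits of the first decoding step explicitly, using the MLP construction of Lemma~\ref{lem:MLP_construction_app}, and to show that they do not depend on the candidate answer $a'\in\{a_1,a_2,a_3\}$. All three answers then receive identical logits, so the prediction is uniform over them, whatever $W^{KQ}$ is.

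\textbf{Step 1: the MLP input.} Fix any $W^{KQ}$ and write $v=\mathrm{softmax}(Z^\top W^{KQ}z_{-1})\in\mathbb{R}^6$ for the attention vector from the EoS token. Since $W^P$ keeps only the token-embedding block, the MLP input is
\[
h=\phi(u_{\eos})+v_1\phi(s_1)+v_2\phi(a_1)+v_3\phi(s_2)+v_4\phi(a_2)+v_5\phi(s_3)+v_6\phi(u_{\eos}).
\]
The important feature is that $h$ has no component along any relation embedding $\phi(r)$. For each subject $s$, let $c_s$ denote the coefficient of $\phi(s)$ in $h$. Here $c_{s_1}=v_1$, $c_{s_2}=v_3$ and $c_{s_3}=v_5$, and all embeddings are orthonormal.

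\textbf{Step 2: the answer logits.} By the construction, the only hidden units whose output vector $v_j$ has a nonzero inner product with some $\phi(a')$ are the rows $3i-2$ indexed by pairs $(s,a')$. For these rows $v_{3i-2}=\phi(a')$ and $w_{3i-2}=\phi(s)+\sum_{r\in\mathcal{R}(s,a')}\phi(r)$. The other rows have $v_j\in\operatorname{span}\{\phi(r)\}$ or $v_j=\phi(s)$, so they contribute nothing to answer logits. Because $h$ has no relation component, $w_{3i-2}^\top h=c_s$ whenever $\mathcal{R}(s,a')\neq\emptyset$. Hence
\[
\phi(a')^\top f_{\mathrm{MLP}}(h)=\sum_{s\in\mathcal{S}}\mathbf{1}[\mathcal{R}(s,a')\neq\emptyset]\,c_s^2 .
\]
When $\mathcal{R}(s,a')$ is empty, $w_{3i-2}=\phi(s)$ still, so the indicator can in fact be dropped. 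In any case, when $\mathcal{R}$ consists of all $6$ bijections, every pair $(s,a')$ is realized by some relation, so every indicator equals $1$.

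\textbf{Step 3: conclusion.} Every answer therefore has the same logit, $\sum_{s}c_s^2=v_1^2+v_3^2+v_5^2$, independent of $a'$. Applying the softmax at temperature $T$ over $\{a_1,a_2,a_3\}$ gives probability exactly $1/3$ on each answer. Any argmax decoding then faces a three-way tie, so the correct answer $a_3$ can be chosen with probability at most $1/3$, for every input sequence and every $W^{KQ}$. This gives the accuracy bound of $1/3$.

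The only step requiring care is Step 2. We must check that no hidden unit of type $3i-1$ or $3i$ leaks into the answer coordinates, and that the absence of relation tokens in the input makes $w^\top h$ collapse to $c_s$. Both facts follow directly from orthogonality of the embeddings, so I expect no real obstacle.
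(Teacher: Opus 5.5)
Your proposal is correct and follows the paper's proof. The paper reads off from Table~\ref{tab:logits} that $l(a_1)=l(a_2)=l(a_3)=v_1^2+v_3^2+v_5^2$ for every attention vector, so the answer prediction is uniform; your Steps 1--2 simply derive that table entry from the construction in Lemma~\ref{lem:MLP_construction_app}, and your remark that the indicator can be dropped (the tie holds even when some $\mathcal{R}(s,a')$ is empty) is a correct minor strengthening.
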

\begin{proof}
    By the logits in Table \ref{tab:logits}, we can see that the logits $l(a_1)=l(a_2)=l(a_3)$ is always true for the first decoding step. Therefore we have $p_1(a_1;W^{KQ},\widetilde Z)=p_1(a_2;W^{KQ},\widetilde Z)=p_1(a_3;W^{KQ},\widetilde Z)$ for any IC-recall data $\widetilde Z=(s_1,a_1,s_2,a_2,s_3,u_{\eos})$ and any $W^{KQ}$.  The Lemma follows immediately.
\end{proof}
Before the proof of Theorem \ref{thm:pairwise_attention}, we first give the smoothness constant of the loss $L_1(\theta,T).$ Importantly through the gradient Lipschitz constant, we know that the learning rate $\eta_1$ and $\eta_2$ chosen in Theorem \ref{thm:pairwise_attention} can be small enough so that the descent lemma holds. The proof is deferred to Appendix \ref{sec:helper_lemmas}.
\begin{proposition}
\label{prop:smoothness}
    The loss $L_1(\theta;T)$ has gradient Lipschitz constant $\Theta(1/T^2)$ and Hessian Lipschitz constant $\Theta(1/T^3).$
\end{proposition}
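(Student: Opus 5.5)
The plan is to write the loss explicitly as a function of $\theta$ and differentiate twice. Put $v=\mathrm{softmax}(\theta)\in\mathbb{R}^6$. Since $W^P$ discards the positional block and the embeddings are orthonormal, the input to the MLP is
\[
h = \phi(u_{\eos}) + \sum_{i=1}^6 v_i\,\phi(u_i).
\]
With the construction of Lemma~\ref{lem:MLP_construction_app} and $\sigma(x)=x^2$, each relation logit $l_j(v)=\phi(r_j)^\top f_{\mathrm{MLP}}(h)$ is a quadratic polynomial in $v$, and its coefficients are absolute constants (the entries of Table~\ref{tab:logits}). The loss is then
\[
\ell_1(\theta)=-\tfrac{1}{T}\,l_{1}(v)+\log\sum_j \exp\bigl(l_j(v)/T\bigr),
\]
which is the composition $\theta\mapsto v\mapsto l\mapsto \mathrm{LSE}(l/T)-l_1/T$. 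For $n=3$ the empirical loss equals the population loss, so it suffices to bound the derivatives of this single composite.

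\textbf{Upper bounds.} I would bound the derivatives of each map in the chain.
\begin{itemize}
\item The softmax map $\theta\mapsto v$ has first, second and third derivatives bounded by absolute constants on all of $\mathbb{R}^6$, because all entries of $v$ lie in $[0,1]$.
\item Each $l_j$ is quadratic on the simplex. Its gradient is $O(1)$, its Hessian is a constant matrix, and its third derivative is zero.
\item The map $g(l)=\mathrm{LSE}(l/T)-l_1/T$ has gradient $\tfrac{1}{T}(p-e_1)$, which is $O(1/T)$. Its Hessian is $\tfrac{1}{T^2}(\mathrm{diag}(p)-pp^\top)$, which is $O(1/T^2)$. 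Its third derivative is a cumulant tensor of order $O(1/T^3)$.
\end{itemize}
Applying the Faà di Bruno formula, the Hessian of $\ell_1$ is a sum of terms of the form $D^2g[Dl,Dl]$ and $Dg\,[D^2 l]$, which are $O(1/T^2)$ and $O(1/T)$ respectively. The third derivative is dominated by $D^3g[Dl,Dl,Dl]$, which is $O(1/T^3)$, and the remaining terms carry lower powers of $1/T$. All these bounds hold uniformly in $\theta$, so for $T\le 1$ the gradient Lipschitz constant is $O(1/T^2)$ and the Hessian Lipschitz constant is $O(1/T^3)$.

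\textbf{Lower bounds.} To show the constants are tight, i.e.\ $\Theta$ rather than $O$, I would exhibit one point $\theta$ where the leading term does not vanish. I would take the saddle $\tilde v=(\tfrac14,\tfrac14,\tfrac14,\tfrac14,0,0)$, approached along $\theta_5,\theta_6\to-\infty$, or a nearby interior point. At that point $p$ is split roughly evenly between $r_1$ and $r_2$, so $\mathrm{diag}(p)-pp^\top$ has an $\Omega(1)$ entry in the $r_1$ versus $r_2$ direction. I would pick a direction $d$ in $\theta$-space, for example increasing $\theta_1,\theta_2$ and decreasing $\theta_3,\theta_4$, along which $D(l_1-l_2)[d]$ is nonzero, using $l_1-l_2\propto(v_1-v_3)(v_2-v_4)$. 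At the exact saddle that derivative vanishes to first order, so I would instead choose a point slightly off-symmetric, with $v_1-v_3=\Theta(1)$, where $\nabla(l_1-l_2)=\Theta(1)$ while $p_1,p_2$ remain $\Theta(1)$; this is possible by tuning the offset so that the logit gap is $O(T)$. Then $d^\top\nabla^2\ell_1\,d\ge c/T^2-O(1/T)$, which gives the lower bound $\Omega(1/T^2)$.

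For the third derivative I would use the same construction. At such a point the third cumulant of $p$ in the $r_1$ versus $r_2$ direction is nonzero whenever $p_1\ne p_2$, and this produces a $\Theta(1/T^3)$ lower bound.

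\textbf{Main obstacle.} I expect the lower bounds to be the delicate part. They require finding a point where $p$ is non-degenerate, so that the logit gap is $O(T)$, while the directional derivative of the logit gap is $\Theta(1)$ and the third cumulant is nonzero. If that turns out to be messy, the fallback is to note that only the upper bounds are actually needed to justify the step sizes $\eta=\Theta(T^2)$ in the descent lemma.
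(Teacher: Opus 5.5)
Your upper-bound argument is the same as the paper's proof. The paper also reduces to $\ell_1$ as a function of $v$, using that softmax has uniformly bounded derivatives up to third order. It writes each relation logit as a quadratic form $v^\top A_j v$ with bounded gradient and Hessian and zero third derivative. Differentiating the log-sum-exp then gives your Fa\`a di Bruno terms: $\frac{1}{T}\sum_j p_jY_j+\frac{1}{T^2}\mathrm{Cov}_p(X^a,X^b)$ at second order, and at third order covariance terms of size $1/T^2$ plus $\frac{1}{T^3}$ times the third central moment of $X$ under $p$. The paper does this for general $n$ and $|\mathcal{R}|$, bounding $X^a_j=\nabla l_j[a]$ and $Y^{a,b}_j$ coordinatewise so that no constant depends on the number of relations. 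Your argument extends the same way, which matters because the proposition is reused in the general-$n$ analysis.

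You go beyond the paper on the $\Omega$ direction. The appendix version of the proposition states and proves only $O(1/T^2)$ and $O(1/T^3)$, and only these upper bounds are used later (step sizes, Lemma~14 of Jin et al., the movement bound), so your fallback remark is accurate. Your lower-bound sketch is sound, but you should make it concrete. Take $v=(0.4,\,0.25+\varepsilon/2,\,0.1,\,0.25-\varepsilon/2,\,0,\,0)$, approached from the interior of the simplex. There $l_1-l_j\ge 0.05-O(\varepsilon)$ for $j=3,\dots,6$, so $p$ sits on $\{r_1,r_2\}$ up to $e^{-\Omega(1/T)}$. For $d=e_2-e_4$ you get $X_1-X_2=D(l_1-l_2)[d]=2(v_1-v_3)(v_2+v_4)+O(\varepsilon)\approx 0.3$. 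Then $\mathrm{Var}_p(X)\ge p_1p_2(X_1-X_2)^2$ gives the $\Omega(1/T^2)$ bound, since the remaining Hessian term is $O(1/T)$.

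For the third derivative, ``nonzero whenever $p_1\neq p_2$'' is not enough: the central moment must stay bounded away from zero uniformly as $T\to 0$. Choose $\varepsilon$ so that $l_1-l_2=2(v_1-v_3)\varepsilon=cT$ for a fixed $c\neq 0$. The moment then equals $p_1p_2(p_2-p_1)(X_1-X_2)^3+O(e^{-\Omega(1/T)})=\Theta(1)$, while the other terms are $O(1/T^2)$.
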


Now we calculate the gradients. Let $l\in\mathbb{R}^6$ be the vector of prediction logits for relations. Let the prediction $p_1=\mathrm{softmax}(l)\in\mathbb{R}^6$. The loss $\ell_1(v)=-\log (p_1(r_1))$.  Therefore the gradient w.r.t. the logit $l$ is 
\[\frac{\partial \ell_1}{\partial l}=\frac{1}{T}(p_1-e_1).\]

The gradients w.r.t. $v$ are
\[\frac{\partial \ell_1}{\partial v_i}=\sum_{j=1}^6 \frac{\partial \ell_1}{\partial l_j}\frac{\partial l_j}{\partial v_i}.\]

\begin{table}[ht]
\centering
\begin{tabular}{|c|l|}
\hline
\(l(s_1)\) & \(v_2^2 + v_4^2\) \\ \hline
\(l(s_2)\) & \(v_2^2 + v_4^2\) \\ \hline
\(l(s_3)\) & \(v_2^2 + v_4^2\) \\ \hline
\(l(a_1)\) & \(v_1^2 + v_3^2 + v_5^2\) \\ \hline
\(l(a_2)\) & \(v_1^2 + v_3^2 + v_5^2\) \\ \hline
\(l(a_3)\) & \(v_1^2 + v_3^2 + v_5^2\) \\ \hline
\(l(r_1)\) & \((v_1+v_2)^2 + (v_3+v_4)^2 + v_5^2\) \\ \hline
\(l(r_2)\) & \((v_1+v_4)^2 + (v_2+v_3)^2 + v_5^2\) \\ \hline
\(l(r_3)\) & \((v_5+v_4)^2 + (v_1+v_2)^2 + v_3^2\) \\ \hline
\(l(r_4)\) & \((v_5+v_2)^2 + (v_1+v_4)^2 + v_3^2\) \\ \hline
\(l(r_5)\) & \((v_5+v_4)^2 + (v_3+v_2)^2 + v_1^2\) \\ \hline
\(l(r_6)\) & \((v_5+v_2)^2 + (v_3+v_4)^2 + v_1^2\) \\ \hline
\end{tabular}
\caption{The prediction logits (EoS excluded) of the input \((s_1,a_1,s_2,a_2,s_3,u_{\eos})\).}
\label{tab:logits}
\end{table}

Plugging in the logits in Table \ref{tab:logits} and noting that it is equivalent to analyze the gradient of a single sequence, we obtain
\begin{equation}
\label{eq:grad_v1}
    \frac{\partial L_1}{\partial v_1}=\frac{2}{T}\left(\left(p_1\left(r_1\right)+p_1\left(r_3\right)\right) v_2+\left(p_1\left(r_2\right)+p_1\left(r_4\right)\right) v_4-v_2\right)
\end{equation}

\begin{equation}
    \frac{\partial L_1}{\partial v_2}=\frac{2}{T}\left(\left(p_1\left(r_1\right)+p_1\left(r_3\right)\right) v_1+\left(p_1\left(r_2\right)+p_1\left(r_5\right)\right) v_3+\left(p_1\left(r_4\right)+p_1\left(r_6\right)\right) v_5-v_1\right)
\end{equation}

\begin{equation}
\label{eq:grad_v3}
  \frac{\partial L_1}{\partial v_3}=\frac{2}{T}\left(\left(p_1\left(r_1\right)+p_1\left(r_6\right)\right) v_4+\left(p_1\left(r_2\right)+p_1\left(r_5\right)\right) v_2-v_4\right)  
\end{equation}

\begin{equation}
    \frac{\partial L_1}{\partial v_4}=\frac{2}{T}\left(\left(p_1\left(r_1\right)+p_1\left(r_6\right)\right) v_3+\left(p_1\left(r_2\right)+p_1\left(r_4\right)\right) v_1+\left(p_1\left(r_3\right)+p_1\left(r_5\right)\right) v_5-v_3\right)
\end{equation}

\begin{equation}
    \frac{\partial L_1}{\partial v_5}=\frac{2}{T}\left(\left(p_1\left(r_3\right)+p_1\left(r_5\right)\right) v_4+\left(p_1\left(r_4\right)+p_1\left(r_6\right)\right) v_2\right)
\end{equation}

\begin{equation}
    \frac{\partial L_1}{\partial v_6}=0.
\end{equation}

The gradient w.r.t. $\theta$ is
\begin{equation}
\label{eq:grad_theta}
    \frac{\partial L_1}{\partial \theta_i}=v_i\left(\frac{\partial L_1}{\partial v_i}-\sum_{j=1}^6 v_j\frac{\partial L_1}{\partial v_j}\right).
\end{equation}

\begin{theorem}
\label{thm:pairwise_attention_app}
    There exists constant $T_{0}(\delta)>0$ that only depends on $\delta$, such that for any $T<T_{0}$, if we set $\eta_1 = \Theta(T^2), \eta_2 = \Theta(T^2), t_1=t_2=\infty$ in Algorithm \ref{alg:temp_perturbed_gd_1}, then with probability at least $1-\delta$ the transformer trained by Algorithm \ref{alg:temp_perturbed_gd_1} will converge to the global optimum $\theta^*$ for $L_1(\theta;T)$. At the convergence the attention scores $v^*=(a,a,\frac{1}{2}-a,\frac{1}{2}-a,0,0)$ with some $a\neq \frac{1}{4}.$
\end{theorem}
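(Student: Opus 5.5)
The proof is a landscape analysis of $L_1$ on the simplex via $v=\mathrm{softmax}(\theta)$, using the logits of Table~\ref{tab:logits} and the gradients \eqref{eq:grad_v1}--\eqref{eq:grad_theta}. The plan has four steps: identify the symmetric invariant manifold, show Stage 1 converges to the symmetric saddle on it, show the perturbation in Stage 2 escapes that saddle, and characterize the limit.

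\textbf{Step 1: a symmetric invariant manifold.} The zero initialization gives $v=(1/6,\dots,1/6)$. By the symmetry $(s_1,a_1)\leftrightarrow(s_2,a_2)$ (positions $1\leftrightarrow 3$, $2\leftrightarrow 4$, $r_1,r_2$ fixed, $r_3\leftrightarrow r_6$, $r_4\leftrightarrow r_5$), the set $M=\{\theta_1=\theta_3,\ \theta_2=\theta_4\}$ is invariant under gradient descent. I will also check that $\theta_1=\theta_2$ is preserved from this initialization. If it is not, I will only use $M$.

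\textbf{Step 2: Stage 1 converges to the saddle.} On $M$, $l(r_1)=l(r_2)$, and the mismatched logits exceed them by terms involving $v_5$. Indeed, $l(r_1)-l(r_3)=(v_3+v_4)^2+v_5^2-(v_4+v_5)^2-v_3^2=2v_4(v_3-v_5)$, which is positive once $v_3>v_5$. From the gradient formulas, $\partial L_1/\partial v_5\ge 0$ while the weights on $v_1,\dots,v_4$ have negative partials. Hence $\theta_5$ and $\theta_6$ decrease monotonically relative to the others, and $v_5,v_6\to 0$. Since $\partial L_1/\partial v_6=0$, $v_6$ decays only through the normalization term in \eqref{eq:grad_theta}. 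Restricted to $M$ with $v_5=v_6=0$, the loss is $\log 2$ plus terms that vanish. I then show the remaining one-dimensional dynamics in $v_1/(v_1+v_2)$ drives it to $1/2$, because the mismatched logits are minimized relative to $l(r_1)$ at the balanced point. This gives convergence to $\tilde v=(1/4,1/4,1/4,1/4,0,0)$, interpreted as a limit at infinity in $\theta$. Monotone decrease of the loss follows from the descent lemma with $\eta_1=\Theta(T^2)$ and the $\Theta(1/T^2)$ smoothness of Proposition~\ref{prop:smoothness}.

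\textbf{Step 3: the saddle is strict.} Transversally to $M$, set $\epsilon=v_1-v_3$ and $\epsilon'=v_2-v_4$. Then $l(r_1)-l(r_2)=2\epsilon\epsilon'$, so the loss near $\tilde v$ behaves like $\log 2-\frac{1}{T}\epsilon\epsilon'+\dots$. The Hessian in these directions therefore has a negative eigenvalue of order $1/T$, which makes $\tilde v$ a strict saddle.

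This is the main obstacle. The saddle sits at the boundary ($v_5,v_6\to 0$ means $\theta\to\infty$), so the perturbed-GD escape result of \citet{DBLP:conf/icml/Jin0NKJ17} does not apply verbatim. I would handle this in two ways. First, stop Stage 1 once $v_5,v_6\le \mathrm{poly}(T)$, rather than truly at infinity. Second, work in the reduced coordinates $(\theta_1,\dots,\theta_4)$, where the escape argument is run in a neighbourhood of $\tilde v$ using the Hessian-Lipschitz bound $\Theta(1/T^3)$. The perturbation radius and the failure probability $\delta$ then fix $T_0(\delta)$. After escape, $\epsilon\epsilon'>0$ and the probability gap between $r_1$ and $r_2$ grows.

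\textbf{Step 4: characterize the limit.} Once $\epsilon\epsilon'>0$ the loss is below $\log 2$. I then show that no critical point exists in the sublevel set except at the boundary, so gradient descent converges with $v_5,v_6\to0$ on the face $\{v_1+\dots+v_4=1\}$. There, the AM--GM argument from the sketch shows the infimum requires $v_1=v_2$ and $v_3=v_4$. Concretely, $(v_1-v_3)(v_2-v_4)\le\bigl(\tfrac{v_1+v_2}{2}-\tfrac{v_3+v_4}{2}\bigr)^2$, and this averaging also does not increase the mismatched logits. The result is $v^*=(a,a,\tfrac12-a,\tfrac12-a,0,0)$. Finally, I optimize over $a$ and show the optimum is at $a\ne 1/4$, since $a=1/4$ gives $l(r_1)=l(r_2)$. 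This identifies the global optimum $\theta^*$ (possibly at infinity).
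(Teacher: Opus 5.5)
Your Steps 1--3 follow the paper's route. The shared ingredients are the invariant set $\{v_1=v_3,\ v_2=v_4\}$, the Stage~1 limit $\tilde v=(\frac14,\frac14,\frac14,\frac14,0,0)$, and the negative-curvature direction coming from $l(r_1)-l(r_2)=2(v_1-v_3)(v_2-v_4)$. The escape also matches: Lemma~14 of \citet{DBLP:conf/icml/Jin0NKJ17}, run in $\theta_1,\dots,\theta_4$ only because $v_5=v_6=0$ after Stage~1, which is how the paper handles the boundary. One correction is needed here. $\epsilon\epsilon'>0$ does not by itself put you below $\log 2$, since $L_1(\tilde\theta)=\log 2+\log(1+2e^{-1/(8T)})$. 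What gives $L_1<\log 2$ is the quantitative $\Omega(T^3)$ decrease from the escape lemma, which beats this exponentially small excess once $T<T_0(\delta)$.

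The genuine gap is Step~4. The theorem says gradient descent \emph{converges to} the global optimum, but your argument only \emph{locates} it.
\begin{itemize}
\item The averaging inequality $L_1(\bar v)\le L_1(v)$ shows the infimum on the face $\{v_5=v_6=0\}$ is pairwise. It does not exclude non-pairwise critical points inside $\{L_1<\log 2\}$, where gradient descent could stop.
\item Optimizing over $a$ does not exclude a second critical point of $x\mapsto L_1(x,x,\frac12-x,\frac12-x,0,0)$ in the sublevel set.
\end{itemize}
The paper closes both holes.
\begin{itemize}
\item Lemma~\ref{lem:pairing} shows $\psi=\max\{v_1/v_2,v_2/v_1,v_3/v_4,v_4/v_3\}$ strictly decreases along the dynamics unless $v_1=v_2$ and $v_3=v_4$. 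This holds whenever $v_5=0$ and $l(r_1)>\sum_r p_1(r)l(r)$, and the latter is implied by $L_1<\log 2$. Hence every limit point is pairwise.
\item Claim~\ref{cla:root} balances the confusing term $d_2$ against the dominant mismatched term $d_6$. It shows that for small $T$ the univariate loss has exactly one critical point in $(0,\frac14)$, whose mirror image in $(\frac14,\frac12)$ has the same loss. That point is therefore the global minimizer.
\end{itemize}

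If you want to keep your symmetrization viewpoint, there is a valid replacement for $\psi$. On the face write $v_{1,2}=m_1\pm\alpha$ and $v_{3,4}=m_2\pm\beta$. Then:
\begin{itemize}
\item $l(r_1)$ does not depend on $(\alpha,\beta)$.
\item Every other $l(r_j)$ is a convex quadratic in $(\alpha,\beta)$, e.g.\ $l(r_3)=4m_1^2+2m_2^2+2\beta^2$ and $l(r_6)=2m_1^2+4m_2^2+2\alpha^2$.
\item $L_1$ is invariant under $(\alpha,\beta)\mapsto(-\alpha,-\beta)$.
\end{itemize}
So for fixed $(m_1,m_2)$ the loss is strictly convex and even in $(\alpha,\beta)$, which forces $\alpha=\beta=0$ at every critical point on the face. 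You would still need the univariate uniqueness.

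Finally, your claim that averaging does not increase any mismatched logit is false term by term. At $v=(0.4,0.3,0.1,0.2,0,0)$, averaging raises $l(r_5)$ from $0.36$ to $0.395$. What is true is that $l(r_4)+l(r_5)$ drops by $4(\alpha^2+\beta^2-\alpha\beta)$, so $e^{l(r_4)/T}+e^{l(r_5)/T}$ decreases by convexity.
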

Theorem~\ref{thm:pairwise_attention_app} shows that, if the model is trained to convergence through Stages~1 and~2, a pairwise attention pattern emerges. The behavior in each stage is characterized by Lemma~\ref{lem:phase1}, Lemma~\ref{lem:escape_saddle_1}, and Lemma~\ref{lem:pairing}. In Stage~1, the model converges to a saddle point.
\begin{lemma}
\label{lem:phase1}
    At the end of Stage 1, $\theta$ will converge to a saddle point $\tilde \theta$ with $\tilde v:=\mathcal{S}(\tilde \theta)=(\frac{1}{4},\frac{1}{4},\frac{1}{4},\frac{1}{4},0,0)$.
\end{lemma}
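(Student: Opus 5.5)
We must show that gradient descent on $L_1(\theta;T)$ from $\theta=0$, with step size $\eta_1=\Theta(T^2)$ and run to convergence, ends at $\tilde v=(\tfrac14,\tfrac14,\tfrac14,\tfrac14,0,0)$, and that this point is a saddle. The approach is to trap the trajectory in a symmetric invariant set, then study the reduced dynamics on that set.

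\textbf{Step 1: symmetry invariance.} The loss is invariant under the swap $(v_1,v_2,v_3,v_4)\mapsto(v_3,v_4,v_1,v_2)$. Under this swap $l(r_1)$ and $l(r_2)$ are preserved, and $\{r_3,r_4,r_5,r_6\}$ is permuted, so $L_1$ is unchanged. Since $\theta=0$ is fixed by the swap, gradient descent keeps $\theta_1=\theta_3$ and $\theta_2=\theta_4$ at every step.

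On this set, with $v_1=v_3=x$ and $v_2=v_4=y$, we get $l(r_1)=l(r_2)$. I expect the set $\{x=y\}$ to be invariant as well. To check this, compare \eqref{eq:grad_v1} with its $v_2$ counterpart. With $v_1=v_2=v_3=v_4=x$, the $v_1$-gradient is $\frac{2}{T}x(p_1(r_1)+p_1(r_2)+p_1(r_3)+p_1(r_4)-1)$. The $v_2$-gradient has the same form but with an extra $v_5$ term, so the two gradients differ by a term of order $v_5$. Exact invariance may therefore fail. As a fallback, I would only prove that $|\theta_1-\theta_2|$ stays controlled and goes to $0$, using the fact that $v_5\to0$.

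\textbf{Step 2: reduced dynamics.} On the symmetric set, $L_1$ is a function of few variables: $\theta_1,\theta_2,\theta_5,\theta_6$ (and $\theta_6$ has zero $v$-gradient). I would show that $L_1$ is strictly decreasing along the trajectory by the descent lemma, which applies because $\eta_1=\Theta(T^2)$ is below the inverse of the gradient-Lipschitz constant from Proposition~\ref{prop:smoothness}.

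Next I would check monotonicity of the individual coordinates.
\begin{itemize}
\item $\partial L_1/\partial v_5>0$ and $\partial L_1/\partial v_6=0$. Since all other $v$-gradients are negative while $p_1(r_1)+p_1(r_2)<1$, equation \eqref{eq:grad_theta} gives $\theta_5$ and $\theta_6$ decreasing relative to $\theta_1,\dots,\theta_4$.
\item From Table~\ref{tab:logits}, the gaps $l(r_1)-l(r_j)$ for $j\ge3$ grow like $(1-v_5)^2$ minus terms in $v_5$. Hence $p_1(r_3),\dots,p_1(r_6)\to0$ exponentially in $1/T$, and $p_1(r_1),p_1(r_2)\to\tfrac12$.
\end{itemize}

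Any limit point is then a stationary point of the restricted loss. At such a point the loss can only keep decreasing by pushing $v_5,v_6\to0$, since the mass on $v_6$ is wasted and $v_5$ raises mismatched logits. Among points with $v_1=v_3$ and $v_2=v_4$, the minimizer of the symmetric reduced loss is $v_1=v_2=\tfrac14$. To see this, maximize $l(r_1)-\log$-sum-exp. This reduces to maximizing $(x+y)^2$ against the mismatched logits $(x+y)^2+x^2+\dots$, and equalizing $x$ and $y$ minimizes the mismatched logits $x^2$ and $y^2$ via AM--GM.

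\textbf{Step 3: saddle property.} At $\tilde v$, perturb in the antisymmetric direction $v_1=v_2=\tfrac14+\epsilon$, $v_3=v_4=\tfrac14-\epsilon$. Then
\[
l(r_1)-l(r_2)\propto(v_1-v_3)(v_2-v_4)=4\epsilon^2>0,
\]
so the loss strictly decreases to second order. Thus $\tilde\theta$ is a saddle and not a minimum.

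\textbf{Main obstacle.} The hardest step is Step 2's claim that the limit is exactly $\tfrac14$ rather than some other symmetric point. Since $v_5,v_6\to0$ requires $\theta\to\infty$ in those coordinates, convergence is asymptotic. I would handle this by tracking the ratios $v_1/v_2$ with a Lyapunov-type argument: show $\frac{d}{dt}\log(v_1/v_2)$ has the sign that drives the ratio to $1$, with error terms of order $v_5$ that are summable.
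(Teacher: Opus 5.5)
Your skeleton matches the paper's: a symmetric invariant set, loss strictly below $\log 6$, then $v_5,v_6\to 0$, then $v_1=v_2$, then the saddle check. Your swap-invariance argument for $v_1=v_3$, $v_2=v_4$ is cleaner than the paper's induction on the gradient formulas.

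\textbf{The gap.} It lies in the step you flag as the main obstacle. In Step~2 you identify the limit with the \emph{minimizer} of the reduced loss, but gradient descent only delivers a stationary point. Your Lyapunov fallback rests on a false premise: $v_5$ is not summable along the trajectory. Since $\partial L_1/\partial\theta_5$ is itself proportional to $v_5$ (likewise for $\theta_6$), these weights decay only like $1/t$.

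\textbf{The missing idea.} The critical-point condition alone pins the limit. Your symmetry gives $p_1(r_3)=p_1(r_4)=p_1(r_5)=p_1(r_6)=:q$. On $\{v_1=v_3,\ v_2=v_4\}$ the gradient formulas collapse to
\[
\frac{\partial L_1}{\partial v_1}=-\frac{4q}{T}v_2,\qquad
\frac{\partial L_1}{\partial v_2}=\frac{4q}{T}(v_5-v_1),\qquad
\frac{\partial L_1}{\partial v_5}=\frac{8q}{T}v_2,\qquad
\frac{\partial L_1}{\partial v_6}=0 .
\]
At a limit point, every coordinate with $v_i>0$ must satisfy $\partial L_1/\partial v_i=\sum_j v_j\,\partial L_1/\partial v_j$. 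With $v_1,v_2>0$, the sign pattern forces $v_5=v_6=0$. Equating the first two derivatives then gives $v_1=v_2=\tfrac14$. This is the paper's argument. It is the critical-point version of your AM--GM remark: the stationary point on that face is unique, not merely the minimizer, so no ratio tracking is needed. If you prefer the Lyapunov route, it can be repaired: $\log(v_1/v_2)$ contracts at a rate of order $q/T$ near $\tilde v$, so $v_5\to 0$ suffices without summability.

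\textbf{Two supporting claims.} First, your sign argument uses the negativity of $\partial L_1/\partial v_2$, which requires $v_5<v_1$; you never establish this, and it fails at $t=0$. The paper gets it from the identity $l(r_3)-l(r_1)=2v_2(v_5-v_1)$ on the symmetric set. If $v_5\ge v_1$, every mismatched logit is at least $l(r_1)=l(r_2)$, so $p_1(r_1)\le\tfrac16$ and $L_1\ge\log 6$, contradicting strict descent. The same identity, via $L_1=\log\bigl(2+4e^{-2v_2(v_1-v_5)/T}\bigr)$, keeps $v_2$ and $v_1-v_5$ bounded away from $0$. You need this so that the limit has $v_1,v_2>0$ and the critical-point equations above apply.

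Second, the gaps $l(r_1)-l(r_j)$ for $j\ge 3$ do not grow like $(1-v_5)^2$. On the symmetric set they equal $2v_2(v_1-v_5)\le\tfrac18$, so $q\ge e^{-1/(8T)}/6$ for all time at fixed $T$. This matters because $q$ is exactly the coefficient of the force that equalizes $v_1$ and $v_2$. It is small in $T$, but it must stay positive for the limit to be $\tilde v$.
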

\begin{proof}
    Some observations: \begin{enumerate}
        \item $v_1$ and $v_3$ are symmetric; $v_2$ and $v_4$ are symmetric along the gradient flow. 
        \item Initially $\nabla_\theta L_1(0,T)\neq 0$ so the loss will strictly decrease from $L_1(0,T)=\log 6$. 
        \item $v_5<v_1$ for any $t>0$. This is because any $v$ with $v_5\geq v_1$, $v_1=v_3$ and $v_2=v_4$ yields a loss $L_1(v,T)\geq \log 6.$ Similar we have that $v_4=v_2>0$ for any time $t>0$,  otherwise all six logits have same value and loss $L_1=\log 6$, which is contradictory to observation 2. 
        \item $\frac{\partial L_1}{\partial v_i}<0$ for $i=1,2,3,4$ and $\frac{\partial L_1}{\partial v_5}>0, \frac{\partial L_1}{\partial v_6}=0$ for any time $t>0.$
        \item $p_3=p_4=p_5=p_6$ for any time $t.$
    \end{enumerate}

    As $t_1 \rightarrow \infty$, at the end of Stage 1, $\theta$ will converge to a critical point of the loss. We know that this critical point must satisfy $v_1 = v_3$ and $v_4 = v_5$ and moreover obtain loss $< \log 6$. We first check the critical point condition: for any $i$, either $v_i=0$ or $\frac{\partial L_1}{\partial v_i}=\sum_{j=1}^6 v_j\frac{\partial L_1}{\partial v_j}$. Combined with observation $4$  and $3$, we have that the critical point along the GD trajectory must satisfy $v_5=v_6=0$, $v_1=v_3>0$ and $v_2=v_4>0$. So we can simplify the gradients to obtain $\frac{\partial L_1}{\partial v_1}=-2(p_5+p_6)v_2$ and $\frac{\partial L_1}{\partial v_2}=-2(p_4+p_6)v_1$. Then observation $5$ combined with critical point condition implies that $v_1=v_2=1/4$. Therefore $\tilde v$ is the only limit point of this gradient flow on the unit simplex, which is a saddle point as $(1,1,-1,-1,0,0)/2$ is a descending direction and $(1,-1,-1,1,0,0)/2$ is an ascending direction.

    It remains to prove observations 1,2 and 5.
    \begin{proof}[Proof to observations 1, 2, 5]
    We prove observations 1 and 5 by induction. Assume that observations 1 and 5 are true at iteration $k$. Comparing (\ref{eq:grad_v1}) and (\ref{eq:grad_v3}) and plugging them into (\ref{eq:grad_theta}), we can see that the gradients $\frac{\partial L_1}{\partial \theta_1}=\frac{\partial L_1}{\partial \theta_3}$ at iteration $k$. Similarly we can see that $\frac{\partial L_1}{\partial \theta_2}=\frac{\partial L_1}{\partial \theta_4}$ at iteration $k$. Therefore after one step of GD update, at iteration $k+1$, we still have $v_1=v_3$ and $v_2=v_4$, which implies $p_1(r_3)=p_1(r_4)=p_1(r_5)=p_1(r_6)$ at iteration $k+1$ through the logits $l$ in Table \ref{tab:logits}. It is obvious that the induction hypothesis holds at initialization $\theta=0$.

    At initialization, $v_i=\frac{1}{6},\frac{\partial L_1}{\partial v_1}=\frac{\partial L_1}{\partial v_3}=-\frac{1}{9T},$ $\frac{\partial L_1}{\partial v_2}=\frac{\partial L_1}{\partial v_4}=\frac{\partial L_1}{\partial v_6}=0,$ $\frac{\partial L_1}{\partial v_5}=\frac{2}{9T}.$ Therefore $v^\top \nabla_v L_1=0$ and $\nabla_\theta L_1=\frac{1}{6}\nabla_v L_1\neq 0$, and Observation 2 follows.
    \end{proof}
\end{proof}

We then show that in the late stage of training we have $l(r_1)>\sum_{r}p(r)l(r),$ which indicates that $\theta$ has escaped from the saddle point $\tilde \theta$.
\begin{lemma}
\label{lem:escape_saddle_1}
    If $T<T_{\max}(\delta)$ where $T_{\max}(\delta)$ is some constant only depends on $\delta$, then with probability at least $1-\delta$, after the perturbation followed by $O(\mathrm{poly}(\frac{1}{T})\cdot\log \frac{1}{\delta})$  steps of gradient descent, we have the loss $L_1<\log 2$.
\end{lemma}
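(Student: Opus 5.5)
We would prove Lemma~\ref{lem:escape_saddle_1} with the standard perturbed-gradient-descent saddle-escape argument of \citet{DBLP:conf/icml/Jin0NKJ17}, applied to $L_1(\theta;T)$ near the saddle $\tilde\theta$ from Lemma~\ref{lem:phase1}. The saddle value $L_1(\tilde\theta;T)$ is essentially $\log 2$. At $\tilde v$ we have $v_5=0$, so the mismatched logits are strictly smaller than $l(r_1)=l(r_2)$, and for small $T$ the softmax places mass $\approx 1/2$ on each of $r_1,r_2$. The loss is therefore $\log 2+O(e^{-c/T})$. The goal is to show that after the perturbation the iterate falls a constant-order amount below this value.

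First we quantify the smoothness parameters. Proposition~\ref{prop:smoothness} gives gradient Lipschitz constant $\ell=\Theta(1/T^2)$ and Hessian Lipschitz constant $\rho=\Theta(1/T^3)$, so $\eta_2=\Theta(T^2)$ is an admissible step size for the descent lemma. Next we lower bound the negative curvature at $\tilde\theta$ along the direction $u=(1,1,-1,-1,0,0)/2$ named in Lemma~\ref{lem:phase1}. Table~\ref{tab:logits} gives $l(r_1)-l(r_2)=2(v_1-v_3)(v_2-v_4)$. Moving along $u$ with parameter $\epsilon$ makes this gap $\Theta(\epsilon^2)$. Since the $r_1/r_2$ marginal is $\approx 1/2$, the loss changes by $-\Theta(\epsilon^2/T)$, while the other logits change only at second order with vanishing softmax weight. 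Passing through the softmax Jacobian at $\tilde v$, which has entries $\Theta(1)$ on coordinates $1$--$4$, we get $\lambda_{\min}(\nabla^2 L_1(\tilde\theta))\le -\gamma$ with $\gamma=\Theta(1/T)$. Then $\gamma\ge\sqrt{\rho\epsilon}$ holds for the target tolerance, and the perturbation radius $\Theta(T^3\log^{-2}(1/\delta))$ matches the PGD prescription $r\sim\sqrt{\epsilon}/\ell$ up to logarithms. The ``thin pancake'' coupling argument then shows that with probability at least $1-\delta$ the iterates leave the saddle neighborhood within $O(\mathrm{poly}(1/T)\log(1/\delta))$ steps, and the loss drops by $\Omega(\gamma^3/\rho^2)=\Omega(T^3)$ below $L_1(\tilde\theta)$.

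It remains to upgrade ``strictly below $\log 2$ by $\Omega(T^3)$'' into a genuine inequality $L_1<\log 2$, given the $O(e^{-c/T})$ excess at the saddle. For $T<T_{\max}(\delta)$ we have $e^{-c/T}\ll T^3$, so the drop dominates. Descent then keeps the loss below $\log 2$ at every later step. We must also check that the perturbation does not break the bound $v_5\approx 0$ used to suppress the mismatched relations. This holds because the perturbation is tiny and $\partial L_1/\partial v_5>0$ continues to push $v_5$ down.

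The main obstacle is the Hessian computation at $\tilde\theta$. Near the boundary $v_5=v_6=0$ we need $\theta_5,\theta_6\to-\infty$, so $\tilde\theta$ is only reached in the limit. To handle this, I would run the escape argument from a finite-time iterate $\tilde\theta$ close to the limit, restricted to the face spanned by coordinates $1$--$4$, and show that the curvature estimate $-\Theta(1/T)$ is stable under this approximation.
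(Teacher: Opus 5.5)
Your proposal is correct and is essentially the paper's proof: both apply Lemma 14 of Jin et al.\ with gradient and Hessian Lipschitz constants $\Theta(1/T^2)$ and $\Theta(1/T^3)$, and use curvature of order $-1/T$ along $\tfrac12(1,1,-1,-1,0,0)$. The paper computes this as $-\tfrac{1}{8T}$ up to exponentially small terms, which matches your gap $2(v_1-v_3)(v_2-v_4)\approx x^2/8$, so the $\Omega(T^3)$ decrease dominates the $O(e^{-c/T})$ excess of $L_1(\tilde\theta)$ over $\log 2$. Two small slips do not affect the argument: along this direction $l(r_3)$ and $l(r_6)$ do move at first order (by $\pm\tfrac18$), which is harmless only because their softmax weights are equal and exponentially small, and the PGD radius scales like $\epsilon/\ell=\Theta(T^3)$ rather than $\sqrt{\epsilon}/\ell$.
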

The proof idea is that as $T$ gets close to $0$, $L_1(\tilde \theta, T)$ will converge to $\log 2$ from above. The rate of error decreasing is roughly $\exp(-1/T)$. We can calculate to see $\lambda_{\min}(\nabla^2L_1(\tilde \theta,T))\lesssim -\frac{1}{T}$ and the Hessian Lipschitz constant for $L_1$ is $O(1/T^3).$ 
By Lemma 14 in \cite{DBLP:conf/icml/Jin0NKJ17}, the loss will improve by a polynomial amount $L_1(\theta_t)-L_1(\tilde\theta)\leq -\Omega(T^3).$ Hence there exists some constant $T_{\max}(\delta)$ such that for any $T<T_{\max}$ we have the improvement surpasses the gap from $\log2$. For the ease of reference, we restate Lemma 14 in \cite{DBLP:conf/icml/Jin0NKJ17} here. 

\begin{lemma}[restated from Lemma 14 in \cite{DBLP:conf/icml/Jin0NKJ17}]
    Suppose $f(x)$ is an $\alpha$-gradient Lipschitz and $\rho$-Hessian Lipschitz function where $x\in\mathbb{R}^d$. There exists a universal constant $c$, for any $\delta\in(0,\frac{d\alpha}{\gamma}]$, suppose $\tilde x$ satisfies 
    \[\lambda_{\min}(\nabla^2f(\tilde x))\leq -\gamma \text{ and } \|\nabla f(\tilde x)\|\leq \sqrt{\eta\alpha}\frac{\gamma^2}{\rho}\cdot \log^{-2}(\frac{d\alpha}{\delta\gamma})\]
     with some $\gamma>0$ and $\eta\leq c/\alpha$.
    
    Let $x_0=\tilde x+\xi$ where $\xi\sim \mathrm{Uniform}(\mathbb{B}(0,\sqrt{\eta\alpha}\frac{\gamma^2}{\rho\alpha}\log^{-2}(\frac{d\alpha}{\delta \gamma})))$ and let $x_k$ be the $k$-th iteration of gradient descent starting from $x_0$ with stepsize $\eta,$ then with probability at least $1-\delta$, for any $t\geq \frac{\log(\frac{d\alpha}{\delta\gamma})}{c\eta\gamma}$ it holds that
    \[f(x_t)-f(\tilde x)\leq -\eta\alpha\frac{\gamma^3}{\rho^2}\log^{-3}(\frac{d\alpha}{\delta\gamma}).\]
\end{lemma}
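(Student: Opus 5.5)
The restated lemma is Lemma 14 of \cite{DBLP:conf/icml/Jin0NKJ17}, so for our purposes it can be cited directly. If I were to reprove it, I would follow the standard ``improve or localize'' plus coupling argument. The scales below come from that argument and match the quantities in the statement only up to the universal constant $c$ and polylogarithmic factors.

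Introduce the shorthands
\[
\iota=\log\frac{d\alpha}{\delta\gamma},\qquad \mathcal{F}=\eta\alpha\frac{\gamma^3}{\rho^2}\iota^{-3},\qquad \mathcal{L}=\sqrt{\eta\alpha}\,\frac{\gamma}{\rho}\,\iota^{-1},\qquad \mathcal{T}=\frac{\iota}{c\eta\gamma},
\]
and write $r$ for the perturbation radius. Let $\mathcal{H}=\nabla^2 f(\tilde x)$ and let $e_1$ be its eigenvector with eigenvalue at most $-\gamma$.

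\emph{Step 1 (improve or localize).} For $\eta\le c/\alpha$ the descent lemma gives
\[
f(x_{\tau+1})\le f(x_\tau)-\frac{\eta}{2}\|\nabla f(x_\tau)\|^2 .
\]
Summing over $\tau$ and applying Cauchy--Schwarz yields
\[
\|x_t-x_0\|^2\le 2\eta t\,\bigl(f(x_0)-f(x_t)\bigr).
\]
So if the function value drops by less than $\mathcal{F}$ within $\mathcal{T}$ steps, the iterates stay in a ball of radius about $\sqrt{\eta\mathcal{T}\mathcal{F}}\lesssim\mathcal{L}$ around $x_0$. With $r\le\mathcal{L}$ this ball also lies within distance $O(\mathcal{L})$ of $\tilde x$.

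\emph{Step 2 (perturbation cost).} By Taylor expansion and the assumed bound on $\|\nabla f(\tilde x)\|$,
\[
f(x_0)-f(\tilde x)\le\|\nabla f(\tilde x)\|\,r+\frac{\alpha}{2}r^2 .
\]
The choice of $r$ in the statement makes this $\le\mathcal{F}/2$. It therefore suffices to show that, with probability $1-\delta$, GD from $x_0$ decreases $f$ by at least $\frac{3}{2}\mathcal{F}$ within $\mathcal{T}$ steps. Beyond $\mathcal{T}$, GD is monotone, so the bound then holds for every $t\ge\mathcal{T}$.

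\emph{Step 3 (coupling; the main obstacle).} Define the stuck region $\mathcal{X}$ as the set of $x_0\in\mathbb{B}(\tilde x,r)$ from which GD fails to decrease $f$ by $\frac{3}{2}\mathcal{F}$ in $\mathcal{T}$ steps. I would show that $\mathcal{X}$ has width at most $\mu:=\delta r/\sqrt d$ (up to constants) along $e_1$: if $x_0,x_0'\in\mathcal{X}$ with $x_0-x_0'=\mu e_1$, we reach a contradiction.

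Let $w_t=x_t-x_t'$. The Hessian-Lipschitz property gives
\[
w_{t+1}=(I-\eta\mathcal{H})w_t-\eta\Delta_t w_t,\qquad \Delta_t=\int_0^1\Bigl(\nabla^2 f\bigl(x_t'+\theta w_t\bigr)-\mathcal{H}\Bigr)\,d\theta .
\]
By Step 1 both trajectories are localized, so $\|\Delta_t\|\le\rho\cdot O(\mathcal{L})$.

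Next, decompose $w_t=p_t+q_t$, where
\[
p_t=(I-\eta\mathcal{H})^t w_0 \quad\text{grows like } (1+\eta\gamma)^t\mu .
\]
By induction on $t$, the error term satisfies $\|q_t\|\le\|p_t\|/2$. This is the delicate calculation: bound
\[
\|q_t\|\le\eta\rho\mathcal{L}\sum_{\tau<t}(1+\eta\gamma)^{t-1-\tau}\|p_\tau\|,
\]
and note that $\eta\rho\mathcal{L}\,t\lesssim\eta\gamma\iota^{-1}\mathcal{T}=O(1/c)$, which is small. After $\mathcal{T}$ steps, $(1+\eta\gamma)^{\mathcal{T}}\mu\gg\mathcal{L}$ because $\mathcal{T}\eta\gamma=\iota/c$ beats $\log(\mathcal{L}/\mu)$. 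This contradicts the localization of both trajectories, since it forces $\|w_t\|\le O(\mathcal{L})$. The constants here need care.

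\emph{Step 4 (probability).} Since $\mathcal{X}$ meets every line parallel to $e_1$ in a segment of length at most $\mu$, its measure is at most $\mu\cdot\mathrm{vol}(\mathbb{B}^{d-1}(r))$. Dividing by $\mathrm{vol}(\mathbb{B}^d(r))$ gives a ratio $\lesssim\mu\sqrt d/r\le\delta$. Hence with probability $1-\delta$ the perturbed point escapes, and combining with Step 2 yields
\[
f(x_t)-f(\tilde x)\le-\mathcal{F}\quad\text{for all } t\ge\mathcal{T}.
\]
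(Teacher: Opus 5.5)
As a citation, your proposal matches the paper: it restates Lemma~14 of Jin et al.\ and gives no proof of its own. The reproof you sketch, however, does not close as parametrized, and it fails at exactly the step you flagged.

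\textbf{The coupling estimate in Step~3.} With your shorthands, $\eta\rho\mathcal{L}\mathcal{T}=\sqrt{\eta\alpha}/c$, which equals $1/\sqrt{c}$ at the admissible step size $\eta=c/\alpha$. The bound $O(1/c)$ you derived is therefore large, not small, since $c$ is a small universal constant. Worse, Step~1 over the horizon $\mathcal{T}$ only localizes a stuck trajectory to radius $\sqrt{2\eta\mathcal{T}\mathcal{F}}=\sqrt{2/c}\,\mathcal{L}$, not $O(\mathcal{L})$. So the induction $\|q_t\|\le\|p_t\|/2$ cannot be carried out up to $\mathcal{T}=\iota/(c\eta\gamma)$.

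The fix is to treat the $1/c$ in $\mathcal{T}$ as pure slack. Run Steps~1 and~3 on a horizon $\mathcal{T}'=C\iota/(\eta\gamma)$, with $C$ an absolute constant not depending on $c$. Then:
\begin{itemize}
\item stuck trajectories stay within $O(\sqrt{C}\,\mathcal{L})$ of $\tilde x$;
\item the coupling error is $O(C^{3/2}\sqrt{\eta\alpha})\le O(C^{3/2}\sqrt{c})$, which is small because $\eta\le c/\alpha$, not because of $1/c$;
\item $(1+\eta\gamma)^{\mathcal{T}'}\ge e^{C\iota/2}$ dominates $\mathcal{L}/\mu\lesssim\iota e^{\iota}$ once $C$ is a large enough constant and $\iota\ge 1$.
\end{itemize}
Only afterwards choose $c$ small relative to $C$, in particular $c\le 1/C$ so that $\iota/(c\eta\gamma)\ge\mathcal{T}'$, and finish by monotonicity of GD. This is the role of the separate constant $\hat c$ in the original proof.

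\textbf{Step~2 and the range of $\delta$.} Your bound gives $\|\nabla f(\tilde x)\|\,r+\tfrac{\alpha}{2}r^2\le\tfrac{3\gamma}{2\alpha\iota}\mathcal{F}$. This is at most $\mathcal{F}/2$ only when $\iota\ge 3\gamma/\alpha$. With $\iota\ge 1$ you get $\tfrac32\mathcal{F}$, so you should aim for a drop of $3\mathcal{F}$ from $x_0$.

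Some lower bound on $\iota$ is unavoidable. The restated range $\delta\in(0,d\alpha/\gamma]$ lets $\iota\to 0$, and there the statement is false. For instance, in $d=1$ take $f(0)=f'(0)=0$, with $f''(x)=-\gamma+\rho|x|$ for $|x|\le 2\gamma/\rho$ and $f''(x)=\gamma$ otherwise. Then $\alpha=\gamma$ and $\inf f-f(0)=-\tfrac23\gamma^3/\rho^2$. Yet with $\eta=c/\alpha$ the lemma demands a drop of $c\gamma^3\rho^{-2}\iota^{-3}$ with probability $1-\delta>0$, which exceeds this once $\iota$ is small. The original lemma restricts to $\delta\le d\alpha/(e\gamma)$, i.e.\ $\iota\ge 1$.

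\textbf{Step~4.} Step~4 needs the contradiction in Step~3 for every separation $\mu'\in[\mu,2r]$ along $e_1$, not only for separation exactly $\mu$. A subset of a line that avoids a single distance can still have measure proportional to its length, so you need the diameter bound.
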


Now we are ready to prove Lemma \ref{lem:escape_saddle_1}.
\begin{proof}[Proof to Lemma \ref{lem:escape_saddle_1}]
    Let $\tilde \epsilon=\frac{1}{2}\cdot (1,1,-1,-1,0,0)$ be the potential descending direction of loss. Define $f_{\tilde \epsilon,\tilde \theta}(x):=L_1(\tilde \theta+x\cdot\tilde \epsilon)$ to be the loss along the direction of $\tilde \epsilon$ from $\tilde\theta$. Then we have $\lambda_{\min}(\nabla^2L_1(\tilde \theta))\leq f_{\tilde \epsilon,\tilde \theta}^{''}(0).$ We use $p_1(x)$ to denote $p_1(r_1)$ at parameter $\theta=\tilde \theta+x\cdot \tilde \epsilon$. So we can write the loss along the direction of $\tilde\epsilon$ from $\tilde\theta$ as $f_{\tilde \epsilon,\tilde \theta}(x)=-\log p_1(x)$ and hence
    we can calculate to see that 

    \begin{equation}
    \label{eq:hessian}
        f_{\tilde \epsilon,\tilde \theta}^{''}(0)=\frac{(p_1'(0))^2}{p_1^2(0)}-\frac{p_1''(0)}{p_1(0)}.
    \end{equation}

    We first expand

    \begin{equation}
        \label{eq:p1'}p_1'(0)=\sum_{i}\frac{\partial p_1}{\partial l_i}l_i'(0)=\sum_i\frac{\partial p_1}{\partial l_i}\nabla_vl_i(v(0))^\top v'(0)
    \end{equation}
    
    where $l_i(x)$ is the logit for $r_i$ at parameter $\theta=\tilde \theta+x\cdot \tilde \epsilon$ and $v(x)=\mathrm{softmax}(\tilde\theta+x\cdot\tilde\epsilon)$ are the attention scores. So we can write 
    \[v'(0)=J(v(0))\tilde\epsilon=\left(\mathrm{Diag}(v(0))-v(0)v^\top(0)\right)\tilde\epsilon\]
where $J(v(0))=\mathrm{Diag}(v(0))-v(0)v^\top(0)$ is the Jacobian of softmax. Therefore we obtain 
\begin{equation}
    \label{eq:v'}v_i'(0)=\tilde\epsilon_iv_i(0)-v_i(0)\cdot v(0)^\top \tilde\epsilon.
\end{equation}
 Noting that $v(0)^\top \tilde\epsilon=0$, we have \[v'(0)=\tilde\epsilon \odot v=\left(\frac{1}{8},\frac{1}{8},-\frac{1}{8},-\frac{1}{8},0,0\right)^\top.\]
Also by the logits in table \ref{tab:logits} we have \[\nabla_v l_i(v(0))=\left(1,1,1,1,0,0\right)^\top ~\text{for }i=1,2,\]
 \[\nabla_v l_3(v(0))=\left(1,1,\frac{1}{2},\frac{1}{2},0,0\right)^\top, \]

 \[\nabla_v l_4(v(0))=\left(1,\frac{1}{2},\frac{1}{2},1,0,0\right)^\top, \]

\[\nabla_v l_5(v(0))=\left(\frac{1}{2},1,1,\frac{1}{2},0,0\right)^\top \]
and 
\[\nabla_v l_6(v(0))=\left(\frac{1}{2},\frac{1}{2},1,1,0,0\right)^\top.\]

Therefore we obtain 
\begin{equation}
    \label{eq:l'}l'(0)=\sum_i\nabla_vl_i^\top v'=(0,0,\frac{1}{8},0,0,-\frac{1}{8})^\top.
\end{equation}

Also since $p_1=\mathrm{softmax}(l/T)_1$, we have that 

\begin{equation}
\label{eq:dp1/dl}
 \frac{\partial p_1}{\partial l_i}=\frac{1}{T}p_1(\delta_{i1}-p_i).   
\end{equation}

Plugging (\ref{eq:l'}) and (\ref{eq:dp1/dl}) into (\ref{eq:p1'}) we have $p_1'(0)=0.$ 
Therefore we can simplify (\ref{eq:hessian}) to obtain

 \(f_{\tilde \epsilon,\tilde \theta}^{''}(0)=-\frac{p_1''(0)}{p_1(0)}\) where 
 \[p_1''(0)=\underbrace{\sum_{i=1}^6\frac{\partial p_1}{\partial l_i}l_i''(0)}_{\text{term I}}+\underbrace{\sum_{i,j=1}^6\frac{\partial^2 p_1}{\partial l_i\partial l_j}l_i'(0)l'_j(0)}_{\text{term II}}.\]
We first calculate term I. Since $l'_i(0)=\nabla_vl_i(v(0))^\top v'(0)$, we can differentiate to obtain 
\[l''_i=v'^\top \nabla^2_v l_i(v)v'+l_i'(v)^\top v''.\]

Differentiating on (\ref{eq:v'}) yields 
\begin{align*}
   v_i''(0)&=\tilde\epsilon_iv_i'(0)-v_i'(0)v(0)^\top \tilde\epsilon-v_i(0)v'(0)^\top\tilde\epsilon\\
   &=v_i(0)\left(\tilde\epsilon_i^2-\sum_{j=1}^6v_j(0)\tilde\epsilon_j^2\right)\\
   &=0.
\end{align*}
Therefore we can simplify $l_i''$ to obtain
\begin{equation}
    \label{eq:l''}
    l_i''(0)=v'(0)^\top \nabla^2_v l_i(v(0))v'(0)
\end{equation}

where we have 
\[\nabla^2_v l_1(v(0))=\begin{pmatrix}
2 & 2 & 0 & 0 & 0 & 0 \\
2 & 2 & 0 & 0 & 0 & 0 \\
0 & 0 & 2 & 2 & 0 & 0 \\
0 & 0 & 2 & 2 & 0 & 0 \\
0 & 0 & 0 & 0 & 2 & 0 \\
0 & 0 & 0 & 0 & 0 & 0
\end{pmatrix},\]

\[\nabla^2_v l_2(v(0))=\begin{pmatrix}
2 & 0 & 0 & 2 & 0 & 0 \\
0 & 2 & 2 & 0 & 0 & 0 \\
0 & 2 & 2 & 0 & 0 & 0 \\
2 & 0 & 0 & 2 & 0 & 0 \\
0 & 0 & 0 & 0 & 2 & 0 \\
0 & 0 & 0 & 0 & 0 & 0
\end{pmatrix},\]

\[\nabla^2_v l_3(v(0))=\begin{pmatrix}
2 & 2 & 0 & 0 & 0 & 0 \\
2 & 2 & 0 & 0 & 0 & 0 \\
0 & 0 & 2 & 0 & 0 & 0 \\
0 & 0 & 0 & 2 & 2 & 0 \\
0 & 0 & 0 & 2 & 2 & 0 \\
0 & 0 & 0 & 0 & 0 & 0
\end{pmatrix},\]

\[\nabla^2_v l_4(v(0))=\begin{pmatrix}
2 & 0 & 0 & 2 & 0 & 0 \\
0 & 2 & 0 & 0 & 2 & 0 \\
0 & 0 & 2 & 0 & 0 & 0 \\
2 & 0 & 0 & 2 & 0 & 0 \\
0 & 2 & 0 & 0 & 2 & 0 \\
0 & 0 & 0 & 0 & 0 & 0
\end{pmatrix},\]

\[\nabla^2_v l_5(v(0))=\begin{pmatrix}
2 & 0 & 0 & 0 & 0 & 0 \\
0 & 2 & 2 & 0 & 0 & 0 \\
0 & 2 & 2 & 0 & 0 & 0 \\
0 & 0 & 0 & 2 & 2 & 0 \\
0 & 0 & 0 & 2 & 2 & 0 \\
0 & 0 & 0 & 0 & 0 & 0
\end{pmatrix},\]

and 
\[\nabla^2_v l_6(v(0))=\begin{pmatrix}
2 & 0 & 0 & 0 & 0 & 0 \\
0 & 2 & 0 & 0 & 2 & 0 \\
0 & 0 & 2 & 2 & 0 & 0 \\
0 & 0 & 2 & 2 & 0& 0 \\
0 & 2 & 0 & 0 & 2 & 0 \\
0 & 0 & 0 & 0 & 0 & 0
\end{pmatrix}.\]
 Combining them with $v'(0)=\left(\frac{1}{8},\frac{1}{8},-\frac{1}{8},-\frac{1}{8},\right)^\top$ and plugging into \ref{eq:l''}, we have 
 \begin{equation}
     \label{eq:l''-value}
     l''(0)=\left(\frac{1}{4},0,\frac{3}{16},\frac{1}{16},\frac{1}{16},\frac{3}{16}\right).
 \end{equation}

 At saddle point $x=0$ we have  $p_1(r_1)=p_1(r_2)=p_1(0)$ and $p_1(r_3)=p_1(r_4)=p_1(r_5)=p_1(r_6)=\frac{1-2p_1(0)}{4}$. Combining them with (\ref{eq:dp1/dl}), (\ref{eq:l''-value}) and plugging into term I, we obtain
\[\text{term I}=\frac{p_1(0)}{8T}.\]
 
     Then we work with term II. Recall $\text{term II}=\sum_{i,j=1}^6\frac{\partial^2 p_1}{\partial l_i\partial l_j}l_i'(0)l'_j(0)$ and $l'(0)=(0,0,\frac{1}{8},0,0,-\frac{1}{8})^\top$ in (\ref{eq:l'}). Hence we only need to consider $l'_3(0)$ and $l'_6(0)$ since the rest of $l'_i$ are zero. Specifically we have 
     \[\frac{\partial^2 p_1}{\partial l_3^2}=\frac{p_1(0)}{T^2}\left(2p_1(r_3)^2-p_1(r_3)\right), \frac{\partial^2 p_1}{\partial l_6^2}=\frac{p_1(0)}{T^2}\left(2p_1(r_6)^2-p_1(r_6)\right) \]
     and \[\frac{\partial^2 p_1}{\partial l_3\partial l_6}=\frac{2p_1(0)p_1(r_3)p_1(r_6)}{T^2}.\] Plugging them into term II, we obtain
     \(\text{term II}=\frac{-p_1(0)p_1(r_3)^2}{16T^2}.\) 
     Noting that $p_1(r_3)$ at the saddle is exponentially small,
    we obtain 
    \[f_{\tilde \epsilon,\tilde \theta}^{''}(0)=-\frac{1}{8T}+\frac{1}{16T^2\left(4+2\exp(\frac{3}{16T})\right)^2}\lesssim -\frac{1}{T}.\]
    So we have $\lambda_{\min}(\nabla^2L_1(\tilde \theta))\lesssim-\frac{1}{T}.$ 

    By proposition \ref{prop:smoothness}, we have the Hessian Lipschitz constant is $O(1/T^3).$ Plug the upper bound on $\lambda_{\min}(\nabla^2L_1(\tilde \theta))$ and the Hessian Lipschitz constants into Lemma 14 in \citet{DBLP:conf/icml/Jin0NKJ17}, we have the improvement of loss being $\Omega(T^3)$, which exceeds the gap between $L_1(\tilde \theta)$ and $\log 2$ if $T$ is small enough. 
    
\end{proof}
Adding perturbation $\xi$ to the saddle point $\tilde \theta$ would not change $v_5$ and $v_6$ since $\tilde \theta_5$ and $\tilde \theta_6$ are $-\infty$ at the convergence of Stage 1. So we need only to consider the perturbation on the first $4$ coordinates. It turns out that a pairwise attention pattern will emerge along the gradient descent dynamics after escaping the saddle point.

\begin{lemma}
\label{lem:pairing}
    Denote $\alpha$ the ratio $\frac{v_1}{v_2}$ and $\beta$ the ratio $\frac{v_3}{v_4}$. Define $\psi:=\max\{\alpha,\beta,\frac{1}{\alpha},\frac{1}{\beta}\}$. If $l(r_1)>\sum_{r}p_1(r)l(r)$ and $v_5=0$, then $\psi$ will be decreasing unless $\alpha=\beta=1$.
\end{lemma}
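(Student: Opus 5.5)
The plan is to work directly with the log-ratios $\log\alpha=\theta_1-\theta_2$ and $\log\beta=\theta_3-\theta_4$. I first treat gradient flow and then pass to gradient descent with $\eta_2=\Theta(T^2)$ via the smoothness bound of Proposition~\ref{prop:smoothness}. I would show that whichever of $\alpha,\beta,1/\alpha,1/\beta$ attains $\psi$ has a strictly negative log-derivative unless $\alpha=\beta=1$. Since $\psi$ is a maximum of finitely many functions, this gives that $\psi$ decreases.

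Write $g_i:=\partial L_1/\partial v_i$ and $\bar g:=\sum_j v_j g_j$, so that by (\ref{eq:grad_theta}) $\partial L_1/\partial\theta_i=v_i(g_i-\bar g)$. Then
\[
\frac{d}{dt}\log\alpha=-\bigl(v_1g_1-v_2g_2\bigr)+(v_1-v_2)\bar g .
\]

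The first key step is to compute $v_1g_1-v_2g_2$ from (\ref{eq:grad_v1}) and the formula for $\partial L_1/\partial v_2$, setting $v_5=0$. The terms $(p_1(r_1)+p_1(r_3)-1)v_1v_2$ cancel, which leaves
\[
v_1g_1-v_2g_2=\frac{2}{T}\Bigl[(p_1(r_2)+p_1(r_4))v_1v_4-(p_1(r_2)+p_1(r_5))v_2v_3\Bigr].
\]

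The second key step is to identify $\bar g$. Every logit in Table~\ref{tab:logits} is a homogeneous quadratic in $v$, so Euler's identity gives $v^\top\nabla_v l(r)=2l(r)$. Since $\partial\ell_1/\partial l=(p_1-e_1)/T$, this yields
\[
\bar g=\frac{2}{T}\Bigl(\sum_r p_1(r)l(r)-l(r_1)\Bigr).
\]
This is strictly negative precisely under the hypothesis $l(r_1)>\sum_r p_1(r)l(r)$. That hypothesis is what I expect to be the crux, and it is exactly why it appears in the statement.

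Next, consider the case $\psi=\alpha>1$. Then $v_1>v_2$, so $(v_1-v_2)\bar g<0$. Also $\alpha\ge\beta$ gives $v_1v_4\ge v_2v_3$. With $v_5=0$, Table~\ref{tab:logits} gives $l(r_4)-l(r_5)=2(v_1v_4-v_2v_3)\ge 0$, hence $p_1(r_4)\ge p_1(r_5)$. Therefore the bracket above is nonnegative, and $\frac{d}{dt}\log\alpha<0$.

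The cases $\psi=1/\alpha$, $\psi=\beta$ and $\psi=1/\beta$ follow in the same way. For $\psi=\beta$, the analogous cancellation in $v_3g_3-v_4g_4$ (using (\ref{eq:grad_v3})) leaves a comparison between $p_1(r_2)+p_1(r_5)$ and $p_1(r_2)+p_1(r_4)$, which is controlled by the sign of $l(r_5)-l(r_4)$. Alternatively, one can use the relabeling symmetry of Table~\ref{tab:logits} that swaps the roles of the two pairs. If $\psi=1$, then $\alpha=\beta=1$, which is the excluded case. When several ratios tie for the maximum, each of them strictly decreases, so $\psi$ does too.

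Finally, for the discrete update, each step changes $\log\alpha$ by $-\eta_2(\partial L_1/\partial\theta_1-\partial L_1/\partial\theta_2)$. The $O(\eta_2^2/T^2)$ corrections are dominated by the first-order decrease, so the sign persists.

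The main obstacle is the comparison $p_1(r_4)\ge p_1(r_5)$ and its analogue, which must hold exactly rather than only approximately. The homogeneity trick for $\bar g$ makes the rest routine.
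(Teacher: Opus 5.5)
Your proposal is correct and follows essentially the same route as the paper. Both compute $\partial L_1/\partial\theta_1-\partial L_1/\partial\theta_2$ (and its $\theta_3,\theta_4$ analogue) with $v_5=0$, use $l(r_4)-l(r_5)=2(v_1v_4-v_2v_3)$ to get $p_1(r_4)\ge p_1(r_5)$ whenever $\alpha\ge\beta$, let the hypothesis $l(r_1)>\sum_r p_1(r)l(r)$ supply a strictly signed term, and split into cases by which ratio attains $\psi$. Your Euler-identity formula $\bar g=\frac{2}{T}\bigl(\sum_r p_1(r)l(r)-l(r_1)\bigr)$ is the precise form of the identity the paper uses loosely.

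The paper argues only in continuous time, so your closing discrete-time remark goes beyond what it proves, and its justification is slightly off. For GD the step in $\log\alpha$ is exactly $-\eta_2(\partial L_1/\partial\theta_1-\partial L_1/\partial\theta_2)$, with no $O(\eta_2^2/T^2)$ correction. The real discrete-time issue is that a non-maximal ratio could overtake the maximal one within a single step.
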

\begin{proof}[Proof sketch of Lemma \ref{lem:pairing}]
We have that $\frac{d\alpha}{dt}<0$ is equivalent to $\frac{\partial L_1}{\partial \theta_1}-\frac{\partial L_1}{\partial \theta_2}>0.$
    By calculations we have $\sum_j v_j \frac{\partial L_1}{\partial v_j} = \sum_r p_1(r)l(r)$, which implies that 

    \begin{align*}
        &T\left(\frac{\partial L_1}{\partial \theta_1}-\frac{\partial L_1}{\partial \theta_2}\right)\\=&(p_1(r_2)+p_1(r_4))v_1v_4-(p_1(r_2)+p_1(r_5))v_2v_3-(p_1(r_4)+p_1(r_6))v_2v_5+(v_1-v_2)\left(l(r_1)-\sum_{r}p_1(r)l(r)\right)
    \end{align*}
    
    and 
    \begin{align*}
        &T\left(\frac{\partial L_1}{\partial \theta_3}-\frac{\partial L_1}{\partial \theta_4}\right)\\=&(p_1(r_2)+p_1(r_5))v_2v_3-(p_1(r_2)+p_1(r_4))v_1v_4-(p_1(r_3)+p_1(r_5))v_4v_5+(v_3-v_4)\left(l(r_1)-\sum_{r}p_1(r)l(r)\right).
    \end{align*}
    Note that if $\alpha\geq \beta,$ then we have $v_1v_4\geq v_2v_3$ and $p_1(r_4)\geq p_1(r_5)$. Furthermore if $\alpha\geq 1$, combining it with $v_5=0$, we have $\frac{\partial L_1}{\partial \theta_1}-\frac{\partial L_1}{\partial \theta_2}\geq0$. It is worth noting that the equality   $\frac{\partial L_1}{\partial \theta_1}=\frac{\partial L_1}{\partial \theta_2}$ iff $\alpha=\beta=1$. Therefore we can deduce that $\frac{d\alpha}{dt}\leq 0$ from $\alpha\geq\beta$ and $\alpha\geq1$.The equality holds iff $\alpha=\beta=1$. 

    Now we consider 4 cases:
    \begin{enumerate}
        \item $\alpha\geq\beta\geq 1\geq 1/\beta\geq 1/\alpha.$
        In this case, we have $\alpha\geq\beta$ and $\alpha\geq 1$. Therefore  we have $\frac{d\alpha}{dt}\leq 0$ and the equality holds iff $\alpha=\beta=1$. 
        \item $\alpha\geq1/\beta\geq 1\geq\beta\geq 1/\alpha.$ In this case, we have $\alpha\geq\beta $ and $\alpha\geq 1$. Therefore we also have $\frac{d\alpha}{dt}\leq 0$ and the equality holds iff $\alpha=\beta=1$.
        \item $1/\beta \geq \alpha\geq 1\geq 1/\alpha\geq\beta.$ In this case, we have $\frac{\alpha}{\beta}=\frac{v_4}{v_3}\cdot\frac{v_1}{v_2}\geq 1$. Hence we have $v_1v_4\geq v_2v_3$ and thus $p_1(r_4)\geq p_1(r_5)$. We also know $v_4\geq v_3$ since $1/\beta\geq 1$. Plugging them into $T\left(\frac{\partial L_1}{\partial \theta_3}-\frac{\partial L_1}{\partial \theta_4}\right)$ we have $\frac{\partial L_1}{\partial \theta_3}\leq \frac{\partial L_1}{\partial\theta_4}$, which is equivalent to $\frac{d(1/\beta)}{dt}\leq 0$. It is straightforward to verify that the equality holds iff $v_3=v_4$ and $v_1=v_2$, which is equivalent to $\alpha=\beta=1$.  
        \item $1/\beta\geq 1/\alpha\geq 1\geq \alpha \geq \beta.$ In this case, by exact same argument in case 3, we have $\frac{d(1/\beta)}{dt}\leq 0$ and the equality holds iff $\alpha=\beta=1$.
    \end{enumerate}
     The rest $4$ cases are symmetric to these $4$ cases (by swapping $\alpha$ and $\beta$). Therefore we have $\frac{d \psi}{dt}<0$ as long as $\psi>1$.

\end{proof}

Now we are ready to prove Theorem \ref{thm:pairwise_attention_app}. Actually it only remains to show the convergent point is the global optimum.
\begin{proof}
From Lemma \ref{lem:pairing} and Lemma \ref{lem:escape_saddle_1} we know $\theta$ will converge to some critical point $\theta^*$ such that $v^*=(x,x,\frac{1}{2}-x,\frac{1}{2}-x,0,0)$ for some $x\neq 1/4$ and at the convergence $L_1<\log 2.$ It is worth noting that $L_1<\log 2$ implies $l(r_1)>l(r)$ for all $r\neq r_1$, hence we have $l(r_1)>\sum_{r}p_1(r)l(r)$. We can write the loss $L_1(x,T)$ as a univariate function of $x.$ We can write it out explicitly as 
\[L_1(x)=\log\left(1+\sum_{j=2}^6 \exp\left(d_j(x)/T\right)\right)\] where $d_j(x):=l_{j}(x)-l_1(x)$. It is easy to see $L_1$ is symmetric $L_1(x)=L_1(\frac{1}{4}-x)$. Hence it suffices to show that for small enough $T$, there is only one critical point of $L_1$ in $[0,1/4]$ such that $L_1<\log 2$. Direct caculations give that $d_2=-8x^2+4x-\frac{1}{2}$, $d_3=-3x^2+3x-\frac{3}{4}$, $d_4=-7x^2+4x-\frac{3}{4}$, $d_5=-7x^2+3x-\frac{1}{2}$ and $d_6=-3x^2$. 
The critical point condition is 
\[\Phi(x):=\sum_{j=2}^6 d_j^{\prime}(x) e^{d_j(x) / T}=0.\] We can define the rescaled function $\Psi(x):=e^{-d_6(x)/T}\Phi(x)$ so $\Psi$ and $\Phi$ have same roots. We can rewrite 
\[\Psi(x)=(4-16 x) e^{\left(d_2(x)-d_6(x)\right) / T}-6 x+\sum_{j=3}^5 d_j^{\prime}(x) e^{\left(d_j(x)-d_6(x)\right) / T}.\]
Set $h(x):=d_2(x)-d_6(x)=-5x^2+4x^2-\frac{1}{2}$ and it is easy verify that $h(x)$ is strictly increasing on $(0,1/4)$ and has a unique root 
\[x_0=\frac{4-\sqrt{6}}{10}.\] We can verify that $d_j(x_0)<d_6(x_0)$ for $j=3,4,5$. Hence there exists an surrounding interval $I=(x_0-\epsilon,x_0+\epsilon)\subset(0,1/4)$ such that for all $x\in I$ $d_j(x)-d_6(x)<-\gamma$ for some $\gamma>0$, which implies that 
\[\sum_{j=3}^5d_j'(x)e^{(d_j(x)-d_6(x))/T}=O(\exp(-\gamma/T)).\]
Therefore we can define the dominating term as \[\widetilde \Psi(x):=(4-16 x) e^{\left(d_2(x)-d_6(x)\right) / T}-6 x\]
and we have $|\Psi(x)-\widetilde \Psi(x)|=O(\exp(-\gamma/T))$. It is easy to check the difference of the derivative is also small $|\Psi'(x)-\widetilde\Psi'(x)|=O(\frac{1}{T}\exp(-\gamma/T))$. It is worth noting that these two bounds are uniform for $x\in I$. We can claim that $\widetilde \Psi(x)$ has a unique root in $(0,1/4)$ which has a large derivative.

\begin{claim}
\label{cla:root}
    There is a unique $x_T\in I$ such that  $ \widetilde \Psi(x_T)=0$. Furthermore we have $ \widetilde \Psi'(x_T)=\Theta(\frac{1}{T}).$ 
\end{claim}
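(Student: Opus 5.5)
The plan is to take logarithms. This turns the transcendental equation $\widetilde\Psi(x)=0$ into a small perturbation of the equation $h(x)=0$, whose root $x_0$ we already know. Recall $h(x)=d_2(x)-d_6(x)=-5x^2+4x-\tfrac12$, so $h'(x)=4-10x$. In particular $h'(x_0)=\sqrt6>0$ at $x_0=\frac{4-\sqrt6}{10}\approx 0.155$. First I will shrink $\epsilon$ if needed so that three things hold on $\bar I=[x_0-\epsilon,x_0+\epsilon]\subset(0,1/4)$: $h'(x)\ge 1$, $x\ge c_0>0$, and $4-16x\ge c_0$ for some constant $c_0$. This is possible because $4-16x_0\approx1.52>0$.

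\textbf{Step 1 (reduction).} On $\bar I$ both $4-16x$ and $6x$ are strictly positive. Hence
\[
\widetilde\Psi(x)=0 \iff e^{h(x)/T}=\frac{6x}{4-16x} \iff F(x):=h(x)-T\,g(x)=0,\qquad g(x):=\log\frac{6x}{4-16x}.
\]
The function $g$ is smooth on $\bar I$, and $|g|,|g'|\le M$ there for some constant $M$ that depends only on $\epsilon$.

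\textbf{Step 2 (existence and uniqueness).} On $\bar I$ we have $F'(x)=h'(x)-Tg'(x)\ge 1-TM\ge\tfrac12$ once $T\le 1/(2M)$, so $F$ is strictly increasing. At the endpoints, $h(x_0\pm\epsilon)=\pm c_1$ with $|c_1|\ge\epsilon$ by the mean value theorem. Therefore $F(x_0-\epsilon)\le-\epsilon+TM<0$ and $F(x_0+\epsilon)\ge\epsilon-TM>0$ for $T$ small. The intermediate value theorem together with monotonicity then gives a unique root $x_T\in I$. Moreover $|x_T-x_0|\le 2|F(x_0)|=2T|g(x_0)|=O(T)$. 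The only care needed is that the threshold on $T$ is a constant depending on $\epsilon$ alone, which is consistent with the smallness assumptions already made on $T$.

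\textbf{Step 3 (derivative).} Differentiating directly,
\[
\widetilde\Psi'(x)=\Bigl(-16+\frac{(4-16x)h'(x)}{T}\Bigr)e^{h(x)/T}-6 .
\]
At $x_T$ I substitute $e^{h(x_T)/T}=\frac{6x_T}{4-16x_T}$, which is $\Theta(1)$ because it lies between two positive constants on $\bar I$. This gives
\[
\widetilde\Psi'(x_T)=\frac{6x_T\,h'(x_T)}{T}-\frac{96x_T}{4-16x_T}-6 .
\]
Here $6x_Th'(x_T)\in[6c_0,\,6\cdot 4]$ is a positive constant, and the last two terms are $O(1)$. So for $T$ small, $\widetilde\Psi'(x_T)=\Theta(1/T)$, and it is positive.

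I expect no real obstacle. The one delicate point is keeping all constants uniform over $I$, so that a single threshold $T_0$ works both for the monotonicity of $F$ and for the sign checks at the endpoints. Making this uniform is exactly what the log reformulation buys: we never have to compare exponentially large and exponentially small quantities directly.
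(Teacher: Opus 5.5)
Your proposal is correct and follows essentially the same route as the paper. You rewrite $\widetilde\Psi(x)=0$ as $F(x)=h(x)-T\log\frac{6x}{4-16x}=0$, use $F'>0$ on $I$ together with the endpoint signs to get a unique root, and then substitute $(4-16x_T)e^{h(x_T)/T}=6x_T$ into $\widetilde\Psi'$ to obtain $\frac{6x_Th'(x_T)}{T}+O(1)=\Theta(1/T)$. The only difference is that you make the constants explicit ($h'\ge 1$, $F'\ge\tfrac12$, and a $T$-threshold depending only on $\epsilon$), where the paper leaves them implicit.
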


With Claim \ref{cla:root}, we can see that $\Psi(x)$ also only have one root in $I$, since with small enough $T$ we have $|\Psi'(x)-\widetilde\Psi'(x)|=o(\widetilde\Psi'(x))$. Outside $I$, if $x<x_0-\epsilon$, then $h(x)<0$ uniformly hence $\Psi(x)<0$. If $x>x_0+\epsilon$ then $h(x)>0$ uniformly and hence $\Psi(x)>0$. Therefore $L_1$ has exactly one critical point $x^*$ in $(0,1/4)$ and it is direct to check $L_1(x^*)<\log 2$ if $T$ is small enough. It remains to prove the Claim.

\begin{proof}[Proof to the Claim \ref{cla:root}]
    We can know $\widetilde\Psi(x)=0$ is equivalent to  \[h(x)=T\log\frac{6x}{4-16x}.\] We can define $F(x):=h(x)-T\log\frac{6x}{4-16x}$ and we have its derivative to be 
    \[F'(x)=h'(x)-T\left(\frac{1}{x}+\frac{16}{4-16x}\right).\]
    We can examine that $h'(x_0)>0$, so for small enough $\epsilon$ we have $F'(x)>0$ in $I$. Also if $T$ is small enough, we have $F(x_0-\epsilon)<0$ since $h(x_0-\epsilon)<0$. Similarly, we have $F(x_0+\epsilon)>0$. Therefore $F$ and  $\widetilde \Psi$ have exactly one root $x_T\in I$. We can calculate to see that \[\widetilde\Psi^{\prime}\left(x_T\right)=-16 e^{h\left(x_T\right) / T}+\frac{6 x_T h^{\prime}\left(x_T\right)}{T}-6 .\]
    The root condition $\widetilde\Psi (x_T)=0$ implies 
    \[\left(4-16 x_T\right) e^{h\left(x_T\right) / T}=6 x_T .\] Plugging it into the derivative of $\widetilde\Psi$ we obtain \[\widetilde{\Psi}^{\prime}\left(x_T\right)=-16 e^{h\left(x_T\right) / T}+\frac{6 x_T h^{\prime}\left(x_T\right)}{T}-6,\]
    where $-16 e^{h\left(x_T\right) / T}=-16\frac{6x_T}{4-16x_T}=O(1)$ and $x_Th'(x_T)=x_T(4-10x_T)=\Theta(1)$. Therefore we have $\widetilde \Psi'(x_T)=\Theta(\frac{1}{T})$.
\end{proof}


\end{proof}

\section{Proof to Theorem \ref{thm:finite_sample}}
\label{sec:proof_finite_sample}
 We first write out the gradient in the general setting. Given an input sequence $Z=(s_1,a_1,s_2,a_2,s_3,u_{\eos})$, we can write its loss as $\ell_1(v;Z,T)$. For any relation $r$, the logit of $r$ is 
 \begin{equation}
 \label{eq:logits_r}
  l(r)=2\sum_{1\leq i\leq 3,1\leq j\leq 2} v_{2i-1}v_{2j}\mathbf{1}_{\{r(s_{i})=a_j\}}+\sum_{i=1}^5v_i^2.   
 \end{equation}
 Therefore the logit of $r$ contains the quadratic term $(v_{2i-1}+v_{2j})^2$ if $r$ maps $s_i$ to $a_j$, which is consistent with Table \ref{tab:logits} in previous three-subject setting. Then the the formulas for gradient w.r.t. $v$ on sequence $Z$ are as follows. 
 

\[
T \frac{\partial \ell_1}{\partial v_1}
=
2\left(
\sum_{r:r(s_1)=a_1} p(r)\, v_2
\;+\;
\sum_{r: r(s_1)=a_2} p(r)\, v_4
\;-\; v_2
\right)
\]

\[
T \frac{\partial \ell_1}{\partial v_2}
=
2\left(
\sum_{r: r(s_1)=a_1} p(r)\, v_1
\;+\;
\sum_{r: r(s_2)=a_1} p(r)\, v_3
\;+\;
\sum_{r:r(s_3)=a_1} p(r)\, v_5 - v_1
\right)
\]

\[
T \frac{\partial \ell_1}{\partial v_3}
=
2\left(
\sum_{r:\, r(s_2)=a_2} p(r)\, v_4
\;+\;
\sum_{r:\, r(s_2)=a_1} p(r)\, v_2 - v_4
\right)
\]

\[
T \frac{\partial \ell_1}{\partial v_4}
=
2\left(
\sum_{r:\, r(s_2)=a_2} p(r)\, v_3
\;+\;
\sum_{r:\, r(s_1)=a_2} p(r)\, v_1
\;+\;
\sum_{r:\, r(s_3)=a_2} p(r)\, v_5 - v_3
\right)
\]

\[
T \frac{\partial \ell_1}{\partial v_5}
=
2\left(
\sum_{r:\, r(s_3)=a_1} p(r)\, v_2
\;+\;
\sum_{r:\, r(s_3)=a_2} p(r)\, v_4
\right)
\]

\[
\frac{\partial \ell_1}{\partial v_6} = 0
\]

We first show that after Stage 1, the transformer will have near perfect accuracy on non-confusing sequences and near $0.5$ accuracy on confusing sequences. This is the area near the saddle point.
\begin{lemma}
    \label{lem: stage_1_finite_sample}
    After Stage 1, if sample size $|D|=w\left(\frac{1}{T^3}\cdot \log^2(\frac{1}{T})\cdot \log(\frac{1}{\delta})\right)$ and $T<\frac{1}{40 \log n}$, with probability at least $1-\delta/3$, the attention scores satisfy that \(\|v - (\tfrac{1}{6}+\alpha,\; \tfrac{1}{6},\; \tfrac{1}{6}+\alpha,\; \tfrac{1}{6},\; \tfrac{1}{6}-2\alpha,\; \tfrac{1}{6})\|_\infty
\lesssim \frac{\eta_1^2}{T^2}, |v_1-v_3|=o(T^2)\) and $|v_2-v_4|=o\left(\exp(-\frac{1}{40T})\right)$ where $\alpha=\frac{c\cdot p_{\mathrm{mis}}\cdot \eta_1}{T}$ for some constant $c$. Moreover, for non-confusing sequences, the accuracy $p_1(r_1)=1-o\left(\exp(-\frac{1}{\sqrt{T}})\right)$; for confusing sequences, the accuracy $p_1(r_1)=\frac{1}{2}-o\left(\exp(-\frac{1}{\sqrt{T}})\right)$.
\begin{proof}
    At initialization, fix an input sequence $Z$. Denote $I(Z)$ the number of $2$-matching relations for $Z$. It is easy to see that all $2$-matching relations have logits $2\cdot (\frac{1}{6}+\frac{1}{6})^2+\frac{1}{6^2}=\frac{1}{4}$, all $1$-matching relations have logits $(\frac{1}{6}+\frac{1}{6})^2+3\cdot \frac{1}{6^2}=\frac{7}{36}$ and all $0$-matching relations have logits $\frac{5}{6^2}=\frac{5}{36}$. Since every two $(s,a)$ pairs correspond to at most one relation, we have the number of $1$-matching relations is at most $6(n-3)$ and the number of $0$-matching relations is at most $n(n-1)-6(n-2)<n^2$.
    Then $1\leq I(Z)\leq 6.$ Note that $$p_1(r_1)=\frac{1}{I(Z)}-O(n\exp(-\frac{1}{18T})+n^2\exp(-\frac{1}{9T})).$$ 
    Therefore if $\frac{1}{T}>40\log n,$ then we have $$p_1(r_1)=\frac{1}{I(Z)}-O\left(\exp(-\frac{1}{36T})\right).$$ 
    
    If $Z$ is not an mismatched sequence, then $$0\leq \frac{\partial \ell_1(Z)}{\partial v_5}-\frac{\partial\ell_1(Z)}{\partial v_i}<O(\exp(-\frac{1}{36T})/T)$$ for all $i\neq 5.$ If $Z$ is a mismatched sequence, then there is at least one $2$-matching relation $r$ that maps $s_3$ either to $a_1$ or to $a_2$, which has prediction probability same as $r_1$ since $r_1$ is also a $2$-matching relation. Hence we have $\frac{\partial\ell_1}{\partial v_5}\geq \frac{p_1(r_1)}{3T}$. Noting that $\frac{\partial\ell_1(Z)}{\partial v_i}\leq 0$ for $i\neq 5$,  we have $$\frac{p_1(r_1)}{3T}\leq \frac{\partial \ell_1(Z)}{\partial v_5}-\frac{\partial\ell_1(Z)}{\partial v_i}<\frac{2}{3T}$$ for all $i\neq 5.$ 
    Summing over $Z$, we have 
    \begin{equation}
    \label{eq:1stGD_v_move}
        \frac{p(r_1)\cdot p_{\mathrm{mis}}}{3T}\leq \frac{\partial L_1(D)}{\partial v_5}-\frac{\partial L_1(D)}{\partial v_i}<\frac{2\cdot p_{\mathrm{mis}}}{3T}+O\left(\exp(-\frac{1}{36T})/T\right).
    \end{equation}
     Also note that (1) $|\frac{\partial \ell_1}{\partial v_i}|\leq O\left(\exp(-\frac{1}{36T})/T\right)$ for $i=2,4,6$; (2) $|\frac{\partial \ell_1}{\partial v_1}+\frac{\partial \ell_1}{\partial v_3}+\frac{\partial \ell_1}{\partial v_5}|=O\left(\exp(-\frac{1}{36T})/T\right).$ Summing over $Z$, we have 
     \begin{equation}
     \label{eq:1stGD_v_2_4}
         \left|\frac{\partial L_1(D)}{\partial v_i}\right|\leq O\left(\exp(-\frac{1}{36T})/T\right) \text{ for } i=2,4,6
     \end{equation}
     and 
     \begin{equation}
     \label{eq:1stGD_v_1_3_5}
         \left|\frac{\partial L_1}{\partial v_1}+\frac{\partial L_1}{\partial v_3}+\frac{\partial L_1}{\partial v_5}\right|=O\left(\exp(-\frac{1}{36T})/T\right).
     \end{equation}
     By (\ref{eq:1stGD_v_2_4}) and (\ref{eq:1stGD_v_1_3_5}), we know at initialization $$|v_0^\top \nabla_vL_1|=O(\exp(-\frac{1}{36})/T).$$ Here $v_0=\left(\frac{1}{6},\frac{1}{6},\frac{1}{6},\frac{1}{6},\frac{1}{6},\frac{1}{6}\right)^\top$ is the attention scores at initialization. Therefore we have $$\|\nabla_\theta L_1\|=O(\|\nabla_vL_1\|)=\Theta(\frac{1}{T}).$$
Since $\frac{\partial L_1}{\partial v_i}=v_i\left(\frac{\partial L_1}{\partial v_i}-v^\top \nabla_v L_1\right)$, we have that $$\|\nabla_\theta L_1(D)-\frac{1}{6}\nabla_v L_1(D)\|=O(\exp(-\frac{1}{36T})/T)$$ at initialization, which also implies that $|v_0^\top \nabla_\theta L_1|=O(\exp(-\frac{1}{36})/T)$. 
     Use Taylor expansion we have after Stage 1 that\[v-v_0=-\eta_1J\nabla_\theta L_1(D)+\epsilon\] where  $J=\mathrm{Diag}(v_0)-v_0v_0^\top$ is the Jacobian of softmax at initialization and the remainder $$\|\epsilon\|=O(\eta_1^2\|\nabla_\theta L_1\|^2)=O(\frac{\eta_1^2}{T^2}).$$ Therefore we can calculate to see that 
     \begin{align*}
         \left|v_1+v_3+v_5-\frac{1}{2}\right|&\lesssim \eta_1\left|\frac{\partial L_1}{\partial v_1}+\frac{\partial L_1}{\partial v_3}+\frac{\partial L_1}{\partial v_5}\right|+\eta_1 |v_0^\top \nabla_\theta L_1|+\|\epsilon\|\\
         &\lesssim \frac{\eta_1^2}{T^2}.
     \end{align*}

      Since we have $$ \frac{\partial L_1(D)}{\partial \theta_i}-\frac{\partial L_1(D)}{\partial \theta_j}=\frac{1}{6}\left(\frac{\partial L_1(D)}{\partial v_i}-\frac{\partial L_1(D)}{\partial v_j}\right)$$ for the gradient at initialization for all $i,j$, applying (\ref{eq:1stGD_v_move}), (\ref{eq:1stGD_v_2_4})  together with Taylor expansion 
      imply that after first step of GD, we also have $$ |v_5-v_i|=\Theta(\frac{p_{\mathrm{mis}\cdot\eta_1}}{T}) \text{ and }  |v_i-v_j|=O\left(\eta_1\cdot\exp(-\frac{1}{36T})/T\right)$$ for $i,j\in\{2,4,6\}$. 
      We still need to show that $v_1$ and $v_3$ are close to each other.
    
    Since the gradients $\left|\frac{\partial \ell_1(Z)}{\partial v_1}\right|, \left|\frac{\partial \ell_1(Z)}{\partial v_3}\right|$ for any sample $Z$ are upper bounded by $\frac{2}{T}$ and $\mathbb{E}_Z[\frac{\partial \ell_1(Z)}{\partial v_1}]=\mathbb{E}_Z[\frac{\partial \ell_1(Z)}{\partial v_3}]$ at initialization. By Chernoff bound, 
    if the sample size $|D|=w\left(\frac{1}{T^3}\cdot \log^2(\frac{1}{T})\cdot \log(\frac{1}{\delta})\right),$ then with probability at least $1-\delta/3,$ we have $$\left|\frac{\partial L_1(D)}{\partial v_1}-\frac{\partial L_1(D)}{\partial v_3}\right|=o\left(\sqrt{T}\log^{-1}(\frac{1}{T})\right),$$ which implies that after Stage 1 we have $$\left|v_1-v_3\right|=\Theta\left(\eta_1 \left|\frac{\partial L_1(D)}{\partial v_1}-\frac{\partial L_1(D)}{\partial v_3}\right|\right)=o(T^2)=o\left(\frac{\eta_1^2}{T^2}\right).$$ Therefore we have \[\|v - (\tfrac{1}{6}+\alpha,\; \tfrac{1}{6},\; \tfrac{1}{6}+\alpha,\; \tfrac{1}{6},\; \tfrac{1}{6}-2\alpha,\; \tfrac{1}{6})\|_\infty
\lesssim \frac{\eta_1^2}{T^2}, |v_1-v_3|=o(T^2)\] and $$|v_2-v_4|=o\left(\exp(-\frac{1}{40T})\right)$$ where $\alpha=\frac{c\cdot p_{\mathrm{mis}}\cdot \eta_1}{T}$ for some constant $c$. With such $v$, it is straightforward to see that for non-confusing sequences, the accuracy $p(r_1)=1-o\left(\exp(-\frac{1}{\sqrt{T}})\right)$; for confusing sequences, the accuracy $p(r_1)=\frac{1}{2}-o\left(\exp(-\frac{1}{\sqrt{T}})\right)$.
\end{proof}
\end{lemma}

We then show the perturbation scheme helps the model escape the saddle by reducing the loss on confusing sequences.
\begin{lemma}
    If $T<T_{\max}(\delta)$ where $T_{\max}(\delta)$ is some constant only depends on $\delta$, with probability at least $1-\delta/3$, after $O(1/T)$ iterations of GD in Stage 2 in Algorithm \ref{alg:temp_perturbed_gd_1}, we have loss on confusing sequences $\ell_1\leq \log 2-\Omega(p_{\mathrm{conf}}^3\cdot T^3).$
\end{lemma}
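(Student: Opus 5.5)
We mirror the proof of Lemma~\ref{lem:escape_saddle_1}: verify the hypotheses of the restated Lemma~14 of \citet{DBLP:conf/icml/Jin0NKJ17} at the point $\tilde\theta$ reached after Stage~1, and then convert the guaranteed decrease of the empirical loss into a decrease of the loss on confusing sequences.

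\textbf{Step 1: near-stationarity.} By Lemma~\ref{lem: stage_1_finite_sample}, $v$ is close to $(\tfrac16+\alpha,\tfrac16,\tfrac16+\alpha,\tfrac16,\tfrac16-2\alpha,\tfrac16)$. It also satisfies $|v_1-v_3|=o(T^2)$ and $|v_2-v_4|$ exponentially small. On non-confusing sequences, $p_1(r_1)=1-o(e^{-1/\sqrt T})$, so their gradients are negligible. On a confusing sequence the only competitor of $r_1$ is the confusing relation $r_c$, with $p_1(r_1)\approx p_1(r_c)\approx \tfrac12$. Its gradient in $v$ is then proportional, up to exponentially small terms, to $\nabla_v\bigl(\tfrac12 l(r_c)-\tfrac12 l(r_1)\bigr)$. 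Using \eqref{eq:logits_r}, $l(r_1)-l(r_c)=2(v_1-v_3)(v_2-v_4)\cdot 2$ up to sign, and its gradient has entries of size $O(|v_2-v_4|+|v_1-v_3|)/T$. Hence $\|\nabla L_1(\tilde\theta)\|=o(T^2/T)$. This is small enough to meet the gradient condition of Lemma~14 with $\alpha_{\rm Lip}=\Theta(1/T^2)$ and $\rho=\Theta(1/T^3)$ (Proposition~\ref{prop:smoothness}, extended verbatim to the finite-sample loss), provided $\gamma$ is of order $p_{\mathrm{conf}}/T$. The $\log^{-2}$ factor matches the chosen perturbation radius $\Theta(T^3\log^{-2}(1/\delta))$.

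\textbf{Step 2: negative curvature.} We take the direction $\tilde\epsilon=\tfrac12(1,-1,-1,1,0,0)$, or more precisely the direction that separates the two pairs; by the logit formula this is the analogue of the one used for $n=3$. Along it, the second derivative of each confusing sequence's loss equals $-p_1''/p_1$. As in the three-subject computation, the first-order term $p_1'(0)$ vanishes up to $o(1)$. Term~I gives about $-c/T$, since $l(r_1)-l(r_c)$ has second derivative of order $v_iv_j\asymp 1/36$ along $\tilde\epsilon$ while the two probabilities are near $\tfrac12$. Term~II is suppressed by the exponentially small probabilities of all other relations; we use $T<C_1/\log n$ to control the $O(n^2)$ remaining relations. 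Non-confusing sequences contribute curvature of magnitude at most $O(e^{-1/\sqrt T}/T^2)$. Averaging over $D$ and using Chernoff concentration of the confusing fraction around at least $\zeta$ gives $\lambda_{\min}\le -\gamma$ with $\gamma=\Omega(p_{\mathrm{conf}}/T)$.

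\textbf{Step 3: conclusion.} Lemma~14 with $\eta_2=\Theta(T^2)$ yields, with probability $1-\delta/3$, after $O(\log(\cdot)/(\eta_2\gamma))$ iterations, a decrease
\[
L_1(\theta_t)-L_1(\tilde\theta)\le -\eta_2\alpha_{\rm Lip}\gamma^3/\rho^2=-\Omega(p_{\mathrm{conf}}^3T^3).
\]
The lemma states $O(1/T)$ iterations, whereas this bound gives $O(1/(\eta_2\gamma))=O(1/(T\,p_{\mathrm{conf}}))$ up to logarithms. The statement should be checked for that discrepancy.

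To pass from the empirical loss to confusing sequences, we show that non-confusing losses cannot decrease by more than $o(T^3)$, since they already sit at $o(e^{-1/\sqrt T})$. At the same time, the confusing losses start at $\log 2+o(e^{-1/\sqrt T})$. We also argue that by symmetry the decrease is shared: every confusing sequence's $l(r_1)-l(r_c)$ carries the same sign factor $(v_1-v_3)(v_2-v_4)$. Hence each confusing loss drops to $\log 2-\Omega(p_{\mathrm{conf}}^3T^3)$ once the average does; choosing $T<T_{\max}(\delta)$ absorbs the error terms.

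\textbf{Main obstacle.} The hardest part is Step~2 combined with the step from average to per-sequence loss. The saddle is only approximate: $v_5\ne0$ and $v_1\ne v_3$ at order $\eta_1^2/T^2$. The mismatched relations' probabilities must therefore be bounded carefully so that neither the Term~II cross terms nor the gradient condition of Lemma~14 is spoiled.
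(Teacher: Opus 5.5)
Your route is the paper's. You check the two hypotheses of Lemma~14 of \citet{DBLP:conf/icml/Jin0NKJ17} at $\tilde\theta$: the gradient is $o(T)$ because confusing-sequence gradients scale like $(|v_1-v_3|+|v_2-v_4|)/T$, and the curvature is $-\Omega(p_{\mathrm{conf}}/T)$ from $-p_1''/p_1$ on confusing sequences. You then read off a decrease $\Omega(p_{\mathrm{conf}}^3T^3)$ after $\tilde O(1/(p_{\mathrm{conf}}T))$ steps, and push it onto confusing sequences using nonnegativity of the already negligible non-confusing losses. Your remark on the iteration count is fair; the paper's proof gives exactly $\tilde O(1/(p_{\mathrm{conf}}T))$ and relies on $p_{\mathrm{conf}}\ge\zeta/2$.

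\textbf{Error in Step 2: wrong direction.} The vector you write, $\tfrac12(1,-1,-1,1,0,0)$, is the \emph{ascending} direction; the paper records this for $n=3$ in the proof of Lemma~\ref{lem:phase1}. Near $\tilde v\approx(\tfrac16,\dots,\tfrac16)$ it gives $v'(0)\approx\tfrac1{12}(1,-1,-1,1,0,0)$, so $(v_1-v_3)'\approx\tfrac16\approx-(v_2-v_4)'$. The gap $g=l(r_1)-l(r_c)=2(v_1-v_3)(v_2-v_4)$ (the factor is $2$, not $4$) then has $g''(0)\approx-\tfrac19$. Since Term~I is $\approx\tfrac1T p_1p_2\,g''(0)$, this gives $p_1''(0)<0$ and curvature $\approx+\tfrac{1}{18T}$ on each confusing sequence, so Lemma~14 cannot be invoked along it. The direction that works is $\tilde\epsilon=\tfrac12(1,1,-1,-1,0,0)$: it moves $v_1-v_3$ and $v_2-v_4$ with the same sign and gives $g''(0)\approx\tfrac19$. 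Your verbal description (``separates the two pairs'', ``the $n=3$ analogue'') picks out this one, so the fix is just to replace the vector.

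\textbf{Passage to individual sequences.} ``Same sign'' is not what carries this step. The point is that $g=2(v_1-v_3)(v_2-v_4)$ is the same \emph{value} for every confusing sequence: Assumption~\ref{assum:identifiability} makes $r_c$ unique, and \eqref{eq:logits_r} depends only on the matching pattern. So each confusing loss equals $\log(1+e^{-g/T})$ up to the mass of the remaining relations. This is the paper's sandwich $\bigl(1+e^{-g/T}+4e^{-1/\sqrt T}+n^2e^{-1/(18T)}\bigr)^{-1}\le p_1(r_1,Z)\le(1+e^{-g/T})^{-1}$, through which the bound on the average transfers to each sequence.

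\textbf{The obstacle you leave open.} The lower half of that sandwich needs the mismatched relations to stay negligible at the final iterate. The tool that closes this is the GD movement bound, Lemma~\ref{lem:movement_upper_bound}. With step size $\Theta(T^2)$ and $O(1)$ loss it gives $\|\theta_k-\theta_0\|\lesssim T\sqrt k$, which for $k\lesssim\frac1T\log\frac1T$ is $\lesssim\sqrt{T\log(1/T)}=o(\alpha)$. Here $\alpha=\Theta(\sqrt T\log\frac1T)$ is the Stage~1 separation $v_1-v_5\approx v_3-v_5\approx3\alpha$. That separation gives each of the four mismatched relations a logit deficit $\approx\alpha$, hence probability $o(e^{-1/\sqrt T})$, and the movement bound shows the deficit survives Stage~2. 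The paper invokes this bound explicitly only in Lemma~\ref{lem:stage_3_finite_sample}, but it is what justifies the per-sequence step here as well.
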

\begin{proof}
    To apply Lemma 14 in \citet{DBLP:conf/icml/Jin0NKJ17}, we need to examine two properties at $\tilde \theta$: (1) the gradient $\|\nabla L_1(\theta,D)\|$ is small, and (2) the most negative eigenvalue of the Hessian $\lambda_{\min}(\nabla^2L_1(\tilde\theta,D))$ is negative enough. 
    
    We first examine (1). If $Z$ is a non-confusing sequence, it is easy to see $$\|\nabla_v \ell_1(v,Z)\|=o\left(-\exp(-\frac{1}{\sqrt{T}})/T\right).$$ If $Z$ is a confusing sequence, we have $$\|\nabla_v\ell_1(v,Z)\|_\infty \lesssim \frac{1}{T}\max\{|v_1-v_3|, |v_2-v_4|\}.$$ By Lemma \ref{lem: stage_1_finite_sample}, we know $\|\nabla_v\ell_1(v,Z)\|_\infty=o(T)$. Overall we have $$\|\nabla_\theta \ell_1(v, Z)\|=O(\|\nabla_v \ell_1(v, Z)\|)=o(T).$$

    Now we examine (2). Let $\tilde \epsilon=\frac{1}{2}\cdot (1,1,-1,-1,0,0)$ be the potential descending direction of loss. Define $f_{\tilde \epsilon,\tilde \theta}(x,Z):=\ell_1(\tilde \theta+x\cdot\tilde \epsilon,Z)$ to be the loss along the direction of $\tilde \epsilon$ from $\tilde\theta$. Then we have $\lambda_{\min}(\nabla^2 \ell_1(\tilde \theta,Z))\leq f_{\tilde \epsilon,\tilde \theta}^{''}(0,Z).$ We use $p_1(x)$ to denote $p_1(r_1)$ at parameter $\theta=\tilde \theta+x\cdot \tilde \epsilon$. Similarly we use $p_i(x)$ for $p_1(r_i)$ in this proof. Direct calculation gives that 
    \[f_{\tilde \epsilon,\tilde \theta}^{''}(0)=\frac{p_1'(0)^2}{p_1^2(0)}-\frac{p_1''(0)}{p_1(0)}.\] 
    
    We first work with the situation where $Z$ is a confusing sequence. We have
    $$p_1'(0)=\sum_i \frac{\partial p_1}{\partial l_i} \cdot l_i^{\prime}(0)$$ where $\frac{\partial p_1}{\partial l_1}=\frac{1}{T} p_1\left(1-p_1\right)$ and $\frac{\partial p_1}{\partial l_i}=-\frac{1}{T} p_1 p_i$ for $i\neq 1$. For all relations $r$ that are neither the underlying true relation $r_1$ nor the confusing relation $r_2$, the probability of predicting it $p_1(r)=o\left(\exp(-\frac{1}{\sqrt{T}})\right)$ is exponentially small after Stage 1. It is sufficient to only look at $r_1$ and $r_2$.
    Further we have $$l_i^{\prime}(0)=\left(\nabla_v l_i\right)^{\top} v^{\prime}(0)$$ where $\|v'(0)-\delta \odot v\|_\infty = o(T^2)$ since $v_i^{\prime}(0)=\delta_i v_i-v_i v^{\top} \delta$ for each $i$ and $|v^\top \delta|=o(T^2)$ by Lemma \ref{lem: stage_1_finite_sample}. Also we have $$\nabla_v l_1=2\left(v_1+v_2, v_1+v_2, v_3+v_4, v_3+v_4, v_5, 0\right)^{\top}$$ and $$\nabla_v l_2=2\left(v_1+v_4, v_2+v_3, v_2+v_3, v_1+v_4, v_5, 0\right)^{\top}.$$ Hence we have $$l_1'(0)=\left(\nabla_v l_1\right)^{\top} (\delta \odot v) +o(T^2)=(v_1+v_2+v_3+v_4)(v_1-v_3+v_2-v_4)+o(T^2)=o(T^2).$$ Similarly we have $l_2'(0)=o(T^2)$. Hence we obtain that $p_1'(0)=o(T).$ Therefore we have 

    \begin{equation}
        f_{\tilde \epsilon,\tilde \theta}^{''}(0)=-\frac{p_1''(0)}{p_1(0)}+o(T^2).
    \end{equation}

    We have that 
    \begin{equation}
    \label{eq:p_1''}
        p_1^{\prime \prime}(0)=\underbrace{\sum_i \frac{\partial p_1}{\partial l_i} l_i^{\prime \prime}(0)}_{\text {term I }}+\underbrace{\sum_{i, j} \frac{\partial^2 p_1}{\partial l_i \partial l_j} \cdot l_i^{\prime}(0) \cdot l_j^{\prime}(0)}_{\text {term II }}.
    \end{equation}

    First calculate term I. We know $$l_i^{\prime \prime}(0)=v^{\prime}(0)^{\top} \nabla_v^2 l_i(v(0)) v^{\prime}(0)+\left(\nabla_v l_i\right)^{\top} v^{\prime \prime}(0).$$ Note that $$v_i^{\prime \prime}(0)=\delta_i v_i^{\prime}(0)-v_i^{\prime} v^{\top} \delta-v_i (v^{\prime})^\top \delta$$  and $|v^\top\delta|=o(T^2)$, we have $$v_i^{\prime \prime}(0)=v_i\left(\delta_i^2-\sum_{j=1}^6 v_j \delta_j^2\right)+o\left(T^2\right).$$ By Lemma \ref{lem: stage_1_finite_sample}, we have $$\sum_{j=1}^6 v_j \delta_j^2=\frac{1}{6}+O\left(\sqrt{T} \log \frac{1}{T}\right).$$ Hence we have $$v_i^{\prime \prime}(0)=\frac{1}{12} v_i-O\left(\sqrt{T} \log \frac{1}{T}\right),$$ which implies that $$\left(\nabla_v l_i\right)^{\top} v^{\prime \prime}(0)=\frac{1}{24}-O\left(\sqrt{T} \log \frac{1}{T}\right)$$ for $i=1,2$. We also have $$v'(0)^{\top} \nabla_v^2 l_1 v^{\prime}(0)=\frac{1}{12}+O\left(\sqrt{T} \log \frac{1}{T}\right)$$ and $$v'(0)^{\top} \nabla_v^2 l_2 v^{\prime}(0)=O\left(\sqrt{T} \log \frac{1}{T}\right).$$ Therefore we have $l''_1(0)=\frac{1}{8}+O(\sqrt{T}\log \frac{1}{T})$ and $l''_2(0)=\frac{1}{24}-O(\sqrt{T}\log \frac{1}{T})$. So we have
    \begin{align}
    \label{eq:term_I}
        \text{term I}&=\frac{p_1}{T}\left(\frac{1}{8}(1-p_1)-\frac{1}{24}p_2+O\left(\sqrt{T}\log\frac{1}{T}\right)\right)\notag \\&=\frac{p_1}{12T}\left(p_2+O\left(\sqrt{T}\log\frac{1}{T}\right)\right)\notag\\&=\Theta\left(\frac{1}{T}\right).
    \end{align}

For term II, note that $$\frac{\partial^2 p_1}{\partial l_i \partial l_j}=\frac{2}{T^2} p_1 p_i p_j, \frac{\partial^2 p_1}{\partial l_i^2}=\frac{1}{T^2} p_1\left(1-p_1\right)\left(1-2 p_1\right), \frac{\partial^2 p_1}{\partial l_1 \partial l_i}=\frac{p_1 p_i}{T^2}\left(2 p_1-1\right)$$ and $\frac{\partial^2 p_1}{\partial l_i^2}=\frac{p_1 p_i}{T^2}\left(2 p_i-1\right) \text { for } i \neq j, i \neq 1, j \neq 1$ and they are all $o\left(\exp(-\frac{1}{\sqrt{T}})\right)$. Therefore $\text{term II}=o\left(\exp(-\frac{1}{\sqrt{T}})\right)$. Plugging it with (\ref{eq:term_I}) into (\ref{eq:p_1''}), we have $p_1''(0)=-\Theta(1/T)$ and hence $$f_{\tilde \epsilon,\tilde \theta}^{''}(0)\leq -\Omega(1/T)$$ for confusing sequences.

Now we assume that $Z$ is a non-confusing sequence. We then have $|\frac{\partial p_1}{\partial l_i}|=o\left(T^{-1}\exp(-\frac{1}{\sqrt{T}})\right)$ and $|l_i'(0)|=o(T^2)$. Hence $|p_1'(0)|=o\left(T\exp(-\frac{1}{\sqrt{T}})\right)$. So we have for non-confusing $Z$
\[f_{\tilde \epsilon,\tilde \theta}^{''}(0)=-\frac{p_1''(0)}{p_1(0)}+o\left(T^2\exp(-\frac{2}{\sqrt{T}})\right).\]

Similarly we can calculate that for non-confusing $Z$ that
\begin{align}
    \label{eq:term_I}
        \text{term I}&=\frac{p_1}{T}\left(\frac{1}{8}(1-p_1)+O\left(\sqrt{T}\log\frac{1}{T}\right)\right)\notag \\&=O\left(T^{-\frac{1}{2}}\log(\frac{1}{T})\right)\notag\\
        &=o\left(\frac{1}{T}\right)
    \end{align}
    and $\text{term II}=o\left(\exp(-\frac{1}{\sqrt{T}})\right)$. Hence for non-confusing $Z$ we have $p_1''(0)=o(\frac{1}{T})$ and $|f_{\tilde \epsilon,\tilde \theta}^{''}(0,Z)|=o(\frac{1}{T})$. Summing over all confusing and non-confusing $Z$, we have $f_{\tilde \epsilon,\tilde \theta}^{''}(0,D)\lesssim-\Omega(p_{\mathrm{conf}}/T)$ and therefore $\lambda_{\min} (\nabla^2 L_1(\tilde \theta,D))\lesssim -\Omega(p_{\mathrm{conf}}/T)$.

Now we can apply Lemma 14 in \citet{DBLP:conf/icml/Jin0NKJ17} and obtain that after $\tilde O(\frac{1}{p_{\mathrm{conf}}\cdot T})$ iterations of GD in Stage 2, with probability at least $1-\delta/3$, we have $$L_1(\theta,D)\leq L_1(\tilde \theta,D)-\Omega (p_{\mathrm{conf}}^3\cdot T^3).$$ We know for non-confusing sequences we have $\ell_1(\tilde \theta, Z)=o\left(\exp(-\frac{1}{\sqrt{T}})\right),$ hence denoting $D_{\text{conf}}\subset D$ the set of confusing sequences in $D$,  we must have $$L_1(\theta,D_{\text{conf}})\leq L_1(\tilde \theta,D_{\text{conf}})-\Omega (p_{\mathrm{conf}}^3\cdot T^3)=\log 2-\Omega (p_{\mathrm{conf}}^3\cdot T^3).$$ For any confusing sequence $Z$, defining $g:=l(r_1)-l(r_2)=2(v_1-v_3)(v_2-v_4)$, it is straightforward to verify that 
\[\frac{1}{1+\exp(-g/T)+4\exp(-\sqrt{\frac{1}{T}})+n^2\exp(-\frac{1}{18T})}\leq p_1(r_1,Z)\leq\frac{1}{1+\exp(-g/T)}.\] Therefore for any two confusing sequences $Z_1$ and $Z_2$, we know $$\left|p_1(r_1,Z_1)-p_1(r_1,Z_2)\right|=O(\exp(-\frac{1}{\sqrt{T}}))$$ is exponentially small, which implies that $|\ell_1(\theta,Z_1)-\ell_1(\theta,Z_2)|=O(\exp(-\frac{1}{\sqrt{T}}))$. Combining it with that $L_1(\theta,D_{\text{conf}})\leq \log 2-\Omega (p_{\mathrm{conf}}^3\cdot T^3)$, we know $$\ell_1(\theta,Z)\leq \log2-\Omega(p_{\mathrm{conf}}^3\cdot T^3)$$ for any confusing sequence $Z$.

\end{proof}

Finally we show that after escaping the saddle, the loss on confusing sequences can decrease fast.
\begin{lemma}
\label{lem:stage_3_finite_sample}
    After $O(\frac{T}{\eta_2\cdot p_{\mathrm{conf}}}\log\frac{1}{T})$ iterations of GD in Stage 2 in Algorithm \ref{alg:temp_perturbed_gd_1}, with probability at least $1-\delta/3,$ we have $p(r_1)\geq 0.999$ on any confusing sequence.
\end{lemma}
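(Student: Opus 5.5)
The plan is to track the single scalar that controls accuracy on confusing sequences, namely the logit gap $g := l(r_1)-l(r_2) = 2(v_1-v_3)(v_2-v_4)$. Up to an $O(\exp(-1/\sqrt{T}))$ error that is uniform over confusing sequences, $p_1(r_1)$ then behaves like $\sigma(g/T)$ on every confusing sequence. It therefore suffices to show that $g$ grows from its post-escape value to at least $T\log(999)$, i.e.\ $g\gtrsim T\cdot\Theta(1)$, within the stated number of iterations.

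By the previous lemma, after escaping we have $\ell_1\le \log 2-\Omega(p_{\mathrm{conf}}^3T^3)$ on every confusing sequence, so $g/T\ge \Omega(p_{\mathrm{conf}}^3T^3)$ and the sign of $g$ is already fixed. Without loss of generality take $v_1>v_3$ and $v_2>v_4$.

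\textbf{Gradient along the escape direction.} I will compute the component of $-\nabla_\theta L_1(D)$ along $\tilde\epsilon=\frac12(1,1,-1,-1,0,0)$. Non-confusing sequences contribute only $o(\exp(-1/\sqrt{T})/T)$. Each confusing sequence contributes approximately
\[
\frac{1-p_1}{T}\,\nabla_\theta g\cdot\tilde\epsilon ,
\qquad
\nabla_\theta g\cdot\tilde\epsilon \approx \tfrac{1}{6}\bigl[(v_2-v_4)+(v_1-v_3)\bigr]\ \ge\ \tfrac{1}{6}\sqrt{2g}\cdot c ,
\]
where the lower bound is AM--GM, using that the $v_i$ stay within $O(\sqrt{T}\log\frac1T)$ of $\tfrac16$.

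Hence, as long as $p_1(r_1)\le 0.999$ on confusing sequences, each GD step with step size $\eta_2$ increases $x:=(v_1-v_3)+(v_2-v_4)$ by at least $c\,\eta_2 p_{\mathrm{conf}}\,(1-p_1)/T \ge c'\eta_2 p_{\mathrm{conf}}/T$. The curvature correction is $O(\eta_2^2/T^2)=O(\eta_2)$, which is absorbed by the choice $\eta_2=\Theta(T^2)$ and the descent lemma (Proposition~\ref{prop:smoothness}).

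Since $g\ge x^2/8$ once both differences are comparable, we need $x=\Theta(\sqrt{T})$. Naively this takes $O(T^{3/2}/(\eta_2 p_{\mathrm{conf}}))$ steps, which is within $O(\frac{T}{\eta_2p_{\mathrm{conf}}}\log\frac1T)$.

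\textbf{Main obstacle.} The difficulty is ensuring that the two differences $v_1-v_3$ and $v_2-v_4$ grow together rather than one staying near zero, which would keep $g$ small. Right after the escape, $v_2-v_4$ is tiny: it is $o(e^{-1/(40T)})$ before the perturbation and only of the perturbation size afterward. I will handle this through the coupled system
\[
\dot a \propto \frac{b}{T}, \qquad \dot b \propto \frac{a}{T},
\qquad a=v_1-v_3,\ b=v_2-v_4,
\]
which follows from $\partial g/\partial v_1 \propto b$ and $\partial g/\partial v_2 \propto a$. This is a linear hyperbolic system, so $a+b$ grows geometrically at rate $1+\Theta(\eta_2 p_{\mathrm{conf}}/T)$ and $a-b$ decays. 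This is exactly the pairing effect of Lemma~\ref{lem:pairing}.

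Geometric growth from the initial scale $\mathrm{poly}(T)$ to $\sqrt{T}$ takes $O(\frac{T}{\eta_2 p_{\mathrm{conf}}}\log\frac1T)$ steps, which explains the logarithm in the statement. I expect the delicate part to be controlling the error terms, namely the $v_5$ drift and the $O(\sqrt{T}\log\frac1T)$ deviations of the $v_i$ from $\tfrac16$. These must not break the positivity of the coupling coefficients or let the iterates leave the local region where the softmax Jacobian is approximately $\frac16 I$.

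I will guarantee this by an inductive invariant: the total movement is at most $O(\sqrt{T})$, which keeps all $v_i$ within $\tfrac16\pm O(\sqrt{T}\log\frac1T)$ throughout. The probability bound $1-\delta/3$ is inherited from the previous two lemmas by a union bound.
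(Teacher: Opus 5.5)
Your eventual mechanism is the paper's. The gap grows geometrically at rate $1+\Theta(\eta_2 p_{\mathrm{conf}}/T)$ per step, driven by the cross-coupling $\Delta(v_1-v_3)\propto v_2-v_4$, $\Delta(v_2-v_4)\propto v_1-v_3$. It starts from the post-escape scale $(v_1-v_3)(v_2-v_4)\gtrsim T^4$, which is exactly where the $\log\frac1T$ comes from. However, two steps as written fail.

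First, the claim that every step increases $x=(v_1-v_3)+(v_2-v_4)$ by at least $c'\eta_2p_{\mathrm{conf}}/T$ drops the factor you had just computed. The drift along $\tilde\epsilon$ is proportional to $x\ge\sqrt{2g}$, so the true increment is $\Theta(\eta_2p_{\mathrm{conf}}(1-p_1)\,x/T)$. When $x\sim T^2$ and $\eta_2=\Theta(T^2)$, this is of order $T^3$, not $T$. The ``naive'' count $O(T^{3/2}/(\eta_2p_{\mathrm{conf}}))$ is therefore unjustified; only your geometric argument survives.

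Second, your error control was calibrated to that wrong increment. An absolute curvature error $O(\eta_2^2/T^2)=O(T^2)$ is not absorbed by a main increment of order $\eta_2x/T\sim T^3$, and the descent lemma controls the loss, not the growth of $x$. You need the softmax Taylor remainder to scale with the state:
\[
\|\epsilon\|\lesssim\eta_2^2\|\nabla_\theta L_1\|^2\lesssim\eta_2^2p_{\mathrm{conf}}^2\bigl((v_1-v_3)^2+(v_2-v_4)^2\bigr)/T^2 .
\]
This holds for three reasons: on confusing sequences the $v$-gradient is linear in $(v_1-v_3,\,v_2-v_4)$ up to $o(e^{-1/\sqrt T}/T)$; non-confusing sequences contribute only exponentially small gradients; and these additive terms are negligible once the product is $\gtrsim T^4$. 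The remainder is then smaller than the main increment by a factor $O(\eta_2p_{\mathrm{conf}}/T)=O(T)$. This is how the paper closes the step.

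With that fixed, working with $a\pm b$ (where $a=v_1-v_3$, $b=v_2-v_4$) is still more fragile than necessary. The actual updates are $\Delta a\approx\kappa(v_1^2+v_3^2)b$ and $\Delta b\approx\kappa(v_2^2+v_4^2)a$, with $\kappa=2p_2(D)p_{\mathrm{conf}}\eta_2/T$ and $p_2(D)$ the average probability of the confusing relation over confusing sequences. There are also extra $O(a\cdot ab)$ and $O(b\cdot ab)$ terms coming from $v^\top\nabla_vL_1$. Three problems follow:
\begin{itemize}
\item The two coefficients differ by $O(\sqrt T\log\frac1T)$, so $a-b$ need not decay.
\item Reaching $a+b\sim\sqrt T$ certifies $ab\gtrsim T$ only once $a\approx b$.
\item Nothing guarantees balance right after the escape; you yourself note that $v_2-v_4$ starts much smaller.
\end{itemize}
The paper sidesteps all of this by tracking the product directly:
\[
(ab)_{k+1}-(ab)_k=\kappa\bigl[(v_1^2+v_3^2)b^2+(v_2^2+v_4^2)a^2+o(ab)\bigr]+O(\|\epsilon\|)\ \ge\ 2\kappa\sqrt{(v_1^2+v_3^2)(v_2^2+v_4^2)}\;ab\,\bigl(1-o(1)\bigr),
\]
by AM--GM after absorbing $\|\epsilon\|$ into the quadratic main term. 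This gives $(ab)_{k+1}\ge\bigl(1+\frac{\eta_2p_{\mathrm{conf}}}{10^4T}\bigr)(ab)_k$ for as long as $p_2(D)$ is bounded below, however unbalanced $a$ and $b$ are.

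Finally, your inductive invariant for staying within $O(\sqrt T\log\frac1T)$ of $\tilde v$ needs an actual argument. The paper uses the descent-lemma movement bound (Lemma~\ref{lem:movement_upper_bound}): $\|\theta(k)-\tilde\theta\|\lesssim T\sqrt k\lesssim\sqrt T(\log\frac1T)^{3/4}$ for $k\le\frac1T(\log\frac1T)^{1.5}$, which covers the required number of steps.
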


\begin{proof}
    Assume $Z\in D$ is a confusing sequence. We denote $p_2(Z)$ the prediction probability for $r_2$ on $Z$ and $p_2(D):=\frac{1}{p_{\text{conf}}\cdot|D|}\sum_{\text{confusing~} Z'\in D}p_2(Z')$.  After \(\tilde O(1/T)\) iterations of GD in Stage 2, with probability least $1-\delta/3$, we have \[\ell_1(\theta, Z) \leq \log 2 - \Omega(p_{\mathrm{conf}}^3\cdot T^3).\]Hence \(p(r_2) \le \tfrac12 - \Omega(T^3)\), which implies that \[(v_1-v_3)(v_2-v_4)\gtrsim T^4\] since $l_1-l_2=2(v_1-v_3)(v_2-v_4)$. Without loss of generality, assume \(v_1>v_3\) and \(v_2>v_4\). Let
    \[\Delta:=\mathrm{softmax}(\theta-\eta_2\nabla_\theta L_1(\theta))-\mathrm{softmax}(\theta)\in\mathbb{R}^6\] be the change of the attention scores after one step of gradient descent. Denote \[g:=(v_1-v_3)(v_2-v_4).\] Then we have
    \begin{equation}
    \label{eq:finite_g_growth}
        g_{k+1}-g_k=(v_{1}-v_{3})(\Delta_2-\Delta_4)+(v_2-v_4)(\Delta_1-\Delta_3)+(\Delta_1-\Delta_3)(\Delta_2-\Delta_4)
    \end{equation} where all $\Delta$ and $v_i$ at RHS are at time $k$. We also have $g_0\gtrsim T^4$.

Now we show that \(g_{k+1}-g_k\geq \frac{\eta_2 p_{\mathrm{conf}}}{10000T}\cdot g_k\) if \(p(r_1)<0.999\).

As long as \(k\leq \frac1T(\log \tfrac1T)^{1.5}\), by Lemma \ref{lem:movement_upper_bound} we have
\[
\|\theta(k)-\tilde\theta\|_2\lesssim \sqrt{k\cdot T^2}\lesssim T^{1/2}(\log \tfrac1T)^{3/4},
\]
which implies \(\|v(k)-\tilde v\|_2\lesssim T^{1/2}(\log \tfrac1T)^{3/4}\) and \(p(r_i)=o(\exp^{-1/\sqrt{T}})\) for $i\neq 1,2$.

Also we have \[\frac{\partial \ell_1(v,Z)}{\partial v_1}=\frac1T\bigl(p_2(Z)(v_4-v_2)+o(e^{-1/\sqrt{T}})\bigr),\] \[\frac{\partial \ell_1(v,Z)}{\partial v_2}=\frac1T\bigl(p_2(Z)(v_3-v_1)+o(e^{-1/\sqrt{T}})\bigr),\] \[\frac{\partial \ell_1(v,Z)}{\partial v_3}=\frac1T\bigl(p_2(Z)(v_2-v_4)+o(e^{-1/\sqrt{T}})\bigr),\] \[\frac{\partial \ell_1(v,Z)}{\partial v_4}=\frac1T\bigl(p_2(Z)(v_1-v_3)+o(e^{-1/\sqrt{T}})\bigr),\] \[\frac{\partial \ell_{1}(v,Z)}{\partial v_5}=o(e^{-1/\sqrt{T}})\] and \[\frac{\partial \ell_1(v,Z)}{\partial v_6}=0.\] We also know $$\|\nabla_v \ell_1(v,Z')\|_{\infty}=o\left(T^{-1}\cdot\exp(-1/\sqrt{T})\right)$$ for non-confusing sequences $Z'$. Therefore $$\frac{\partial L_1}{\partial v_i}=p_{\mathrm{conf}}\cdot \frac{\partial \ell_1(v,Z)}{\partial v_i}+o\left(T^{-1}\cdot\exp(-1/\sqrt{T})\right)$$ for all $i$. Moreover, \[\frac{\partial L_1}{\partial \theta_i}=v_i\left(\frac{\partial L_1}{\partial v_i}-\sum_{j=1}^6 v_j\frac{\partial L_1}{\partial v_j}\right)\] and \[\sum_{j=1}^6 v_j\frac{\partial L_1}{\partial v_j}=\frac{4p_2\cdot p_{\mathrm{conf}}}{T}\bigl(-g+o(e^{-1/\sqrt{T}})\bigr).\]

Now we are ready to calculate $\Delta_1-\Delta_3$ and $\Delta_2-\Delta_4$ in (\ref{eq:finite_g_growth}).

By Taylor expansion we have 
    \[\Delta=-\eta_2 J(v(\theta))\nabla_\theta L_1+\epsilon\]
    where $J(v(\theta))=\mathrm{Diag}(v)-vv^\top$ is the Jacobian of softmax and $\|\epsilon\|=O(\eta_2^2\|\nabla_\theta L_1(\theta)\|^2)$.

 Therefore we have \[\Delta_1-\Delta_3=\eta_2\left((v_1-v_3)v^\top\nabla_\theta L_1+v_3\frac{\partial L_1}{\partial \theta_3}-v_1\frac{\partial L_1}{\partial \theta_1}\right)+\epsilon_1-\epsilon_3\]
 and 
 \[\Delta_2-\Delta_4=\eta_2\left((v_2-v_4)v^\top\nabla_\theta L_1+v_4\frac{\partial L_1}{\partial \theta_4}-v_2\frac{\partial L_1}{\partial \theta_2}\right)+\epsilon_2-\epsilon_4.\]
Plug them into \ref{eq:finite_g_growth} we have 

\begin{align}
\label{eq:g_growth_2}
g_{k+1} - g_k
&= \eta_2 \Bigg[
2 (v_1 - v_3)(v_2 - v_4)\, v^\top \nabla_\theta L_1 + \eta_2 (v_1 - v_3)(v_2 - v_4)\bigl(v^\top \nabla_\theta L_1\bigr)^2 \notag \\
&\quad + (v_2 - v_4)\left(v_3 \frac{\partial L_1}{\partial \theta_3} - v_1 \frac{\partial L_1}{\partial \theta_1}\right) + (v_1 - v_3)\left(v_4 \frac{\partial L_1}{\partial \theta_4} - v_2 \frac{\partial L_1}{\partial \theta_2}\right) \notag \\
&\quad + \eta_2 \left(v_3 \frac{\partial L_1}{\partial \theta_3} - v_1 \frac{\partial L_1}{\partial \theta_1}\right)
\left(v_4 \frac{\partial L_1}{\partial \theta_4} - v_2 \frac{\partial L_1}{\partial \theta_2}\right)
\notag \\
&\quad + \eta_2(v_1-v_3)v^\top \nabla_\theta L_1\left(v_4\frac{\partial L_1}{\partial \theta_4}-v_2\frac{\partial L_1}{\partial \theta_2}\right)\notag\\
&\quad + \eta_2(v_2-v_4)v^\top \nabla_\theta L_1\left(v_3\frac{\partial L_1}{\partial \theta_3}-v_1\frac{\partial L_1}{\partial \theta_1}\right) \Bigg] +O(\|\epsilon\|)
\end{align}

We first obtain 
\begin{align}
\label{eq:finite_inter-step-1}
     v_3\frac{\partial L_1}{\partial \theta_3}-v_1\frac{\partial L_1}{\partial \theta_1}&=v_3^2 \frac{\partial L_1}{\partial v_3}-v_1^2 \frac{\partial L_1}{\partial v_1}+\left(v_1^2-v_3^2\right) \cdot v^{\top} \nabla_v L_1 \notag\\
     &=\frac{2 p_2(D)p_{\mathrm{conf}}}{T}\left(\left(v_1^2+v_3^2\right)\left(v_2-v_4\right)+2\left(v_3^2-v_1^2\right) g+o\left(e^{-\frac{1}{\sqrt{T}}}/T\right)\right).
\end{align}

Similarly we have 
\begin{equation}
\label{eq:finite_inter-step-2}
    v_4\frac{\partial L_1}{\partial \theta_4}-v_2\frac{\partial L_1}{\partial \theta_2}= \frac{2 p_2(D)p_{\mathrm{conf}}}{T}\left(\left(v_2^2+v_4^2\right)\left(v_1-v_3\right)+2\left(v_4^2-v_2^2\right) g+o\left(e^{-\frac{1}{\sqrt{T}}}/T\right)\right).
\end{equation}

Also we have 
\begin{align}
    v^\top \nabla_\theta L_1&=\sum_{i=1}^6 v_i^2\frac{\partial L_1}{\partial v_i}-\|v\|^2 v^\top \nabla_v L_1 \notag \\
    &=\frac{2p_2(D)p_{\mathrm{conf}}}{T}\left((v_3^2-v_1^2)(v_2-v_4)+(v_4^2-v_2^2)(v_1-v_3)+2\|v\|^2g+o\left(e^{-1/\sqrt{T}}\right)\right)\notag\\
    &=\frac{2p_2(D)p_{\mathrm{conf}}}{T}\left(\left(2\|v\|^2-(v_1+v_2+v_3+v_4)\right)g+o\left(e^{-1/\sqrt{T}}\right)\right),\notag
\end{align}
which implies that 
\begin{equation}
\label{eq:finite_inter-step-3}
    \left|v^\top \nabla_\theta L_1\right|=O\left(\frac{p_2(D)p_{\mathrm{conf}}g}{T}\right).
\end{equation}

Plugging (\ref{eq:finite_inter-step-1}), (\ref{eq:finite_inter-step-2}) and (\ref{eq:finite_inter-step-3}) into (\ref{eq:g_growth_2}) and assuming that $g=o(1)$ (if $g=\Omega(1)$ then it is easy to see $p_1\geq 0.999$), we finally obtain that 

\begin{align}
    g_{k+1}-g_k&= \frac{2 p_2(D)p_{\mathrm{conf}}\eta_2}{T} \left( (v_1^2 + v_3^2)(v_2 - v_4)^2 + (v_2^2 + v_4^2)(v_1 - v_3)^2 + o(g) \right)+O(\|\epsilon\|) \notag\\
    &\geq \frac{2p_2(D)p_{\mathrm{conf}}\eta_2}{T} \left(\sqrt{(v_1^2+v_3^2)(v_2^2+v_4^2)}g+o(g)\right)\notag \\
    &\geq \frac{\eta_2p_{\mathrm{conf}}}{10000T}\cdot g
\end{align}
for sufficiently small $T$ and $p_2(D)>0.0009$ (If $p_2(D)\leq 0.0009$ then we have $p_1(D)\geq 0.9991$). The first inequality uses the fact that 
\begin{align*}
  \|\epsilon\|&\lesssim\eta_2^2\|\nabla_\theta L_1\|^2\\ &\lesssim\eta_2^2\|\nabla_vL_1\|^2\\
&\lesssim\frac{p_2^2p_{\mathrm{conf}}^2\eta_2^2}{T^2}\left((v_1-v_3)^2+(v_2-v_4)^2\right)\\
&=o\left(\frac{p_2p_{\mathrm{conf}}\eta_2}{T}\left((v_1^2 + v_3^2)(v_2 - v_4)^2 + (v_2^2 + v_4^2)(v_1 - v_3)^2\right)\right)  
\end{align*}
 and $(v_1^2 + v_3^2)(v_2 - v_4)^2 + (v_2^2 + v_4^2)(v_1 - v_3)^2\geq 2\sqrt{(v_1^2+v_3^2)(v_2^2+v_4^2)}g$. 

Hence \(g_{k+1}-g_k\ge \frac{\eta_2p_{\mathrm{conf}}}{10000T}g_k\) as long as \(p_1(D)<0.9991\). This means \(g_{k+1}\ge \left(1+\frac{\eta_2p_{\mathrm{conf}}}{10000T}\right)g_k\). Since we have $g_0\gtrsim T^4$, there exists constant $C_2$, such that when \(k\ge  \frac{C_2T\log(1/T)}{\eta_2p_{\mathrm{conf}}}\), we have $p_1(D)\geq 0.9991$. It is straightforward to verify that at time $k$ we have 
\[\frac{1}{1+\exp(-g_k/T)+4\exp(-\sqrt{\frac{1}{T}})+n^2\exp(-\frac{1}{18T})}\leq p_1(Z')\leq\frac{1}{1+\exp(-g_k/T)}\]
 for any confusing sequence $Z'$. Therefore for $T<\frac{1}{40\log n}$ and large enough $n$ we have $|p_1(Z')-p_1(Z)|<0.0001$, which implies $p(r_1)\geq 0.999$ for any confusing sequence.
\end{proof}

Lemma \ref{lem:stage_3_finite_sample} shows that the test accuracy is high on confusing sequences. To prove Theorem \ref{thm:finite_sample}, it remains to show that the accuracy is also high on non-confusing sequences.
\begin{proof}[Proof of Theorem \ref{thm:finite_sample}]
    We first show that after Stage 2, $p(r_1)>0.99$ for any non-confusing sequence as well. We know that after Stage 2, $\|v(\theta)-v(\tilde \theta)\|\lesssim \sqrt{T}(\log\frac{1}{T})^{\frac{3}{4}}=o(\alpha)=o(\frac{p_{\mathrm{mis}\cdot \eta_1}}{T})$. Hence by Lemma \ref{lem: stage_1_finite_sample}, we have $p(r_1)>0.999$ for non-confusing sequences if $p_{\mathrm{mis}}$ is a constant. By Lemma \ref{lem:stage_3_finite_sample}, we also need to show that $p_{\mathrm{conf}}$ cannot be too small. Apply Chernoff bound to Assumption \ref{assum:sequence_distribution}, we have that if $|D|\geq \frac{13}{\zeta}\log\frac{1}{\delta}$, then with probability at least $1-\delta/3$, we have $p_{\mathrm{conf}}, p_{\mathrm{mis}}\geq \zeta/2$. Taking a union bound over the randomness here and the randomness in Lemma \ref{lem: stage_1_finite_sample}, Lemma \ref{lem:stage_3_finite_sample}, we finish the proof.
\end{proof}

\section{Proof to Lemma \ref{lem:2nd_decoding}}
\label{sec:proof_2nd_decoding}
In this section, we denote $q=\mathrm{softmax}(\omega)$ the attention scores from the first decoding token. We abbreviate $p_{2,r}(a,Z)$ as $p_{2,r}(a)$ if $Z$ can be inferred from the context. We restate Lemma \ref{lem:2nd_decoding} here.


\secondDecoding*

\begin{proof}
     We fix any sampled $\widetilde Z=(s_1,a_1,s_2,a_2,s_3,u_{\mathrm{EoS}})$. The prediction logits for the second decoding step conditioned on that the first decoded token being $r_1$ are
    \[l(a_1)=(q_1+(q_7+1))^2+q_3^2+q_5^2,\]
    \[l(a_2)=(q_3+(q_7+1))^2+q_1^2+q_5^2,\]
    \[l(a_3)=(q_5+(q_7+1))^2+q_3^2+q_1^2\]
    and $l(a)= q_1^2+q_3^2+q_5^2+(q_7+1)^2$ for any answer $a\notin \{a_1,a_2,a_3\}$. We see that the loss for the second decoding step as a function of $q$ is identical for every sequence, and hence it suffices to analyze a single sequence. We can also see that the gap between $l(a_3)$ and the logits of other answers is lower bounded by $2(q_5-\max\{q_1,q_3\})$. Our goal is to show this lower bound of the gap becomes large enough after Stage 1, and remains large throughout Stage 2.

    We first calculate the gradient of the loss $\ell_2(\omega,\widetilde Z)$ w.r.t. attention scores $q_1,q_3$ and $q_5$ in the following claim.
    \begin{claim}
    \label{cla:gradients}
    The gradients w.r.t. $q$ are
        \[\frac{\partial \ell_2}{\partial q_1}=\frac{2(1+q_7)}{T}p_{2,r_1}(a_1),\]

    \[\frac{\partial \ell_2}{\partial q_3}=\frac{2(1+q_7)}{T}p_{2,r_1}(a_2),\]

    \[\frac{\partial \ell_2}{\partial q_5}=\frac{2(1+q_7)}{T}\left(p_{2,r_1}(a_3)-1\right),\]
\[\frac{\partial\ell_2}{\partial q_7}=\frac{2}{T}\left(p_{2,r_1}(a_1)q_1+p_{2,r_1}(a_2)q_3+p_{2,r_1}(a_3)q_5-q_5\right),\]
    \[\frac{\partial\ell_2}{\partial q_i}=0 \text{~~for~}i\neq1,3,5,7.\]
    \end{claim}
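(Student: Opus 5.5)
The claim is a direct chain-rule computation from the explicit logits just written down, so the plan is to differentiate the softmax cross-entropy in the logits and then the logits in $q$. Since $p_{2,r_1}(\cdot)=\mathrm{softmax}(l/T)$ restricted to answers and $\ell_2=-\log p_{2,r_1}(a_3)$, we have $\partial \ell_2/\partial l(a)=\frac{1}{T}(p_{2,r_1}(a)-\mathbf 1_{\{a=a_3\}})$. Hence
\[
\frac{\partial \ell_2}{\partial q_i}=\frac1T\sum_{a\in\mathcal A}\bigl(p_{2,r_1}(a)-\mathbf 1_{\{a=a_3\}}\bigr)\frac{\partial l(a)}{\partial q_i},
\]
and it remains to compute $\partial l(a)/\partial q_i$ for each of the four logit types listed above.

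For $q_1$: differentiating gives $\partial l(a_1)/\partial q_1=2(q_1+q_7+1)$ and $\partial l(a)/\partial q_1=2q_1$ for every other answer $a$ (including $a_2$, $a_3$ and the non-contextual answers). Since the coefficients $p_{2,r_1}(a)-\mathbf 1_{\{a=a_3\}}$ sum to zero, the common part $2q_1$ cancels, leaving $\frac{2(1+q_7)}{T}p_{2,r_1}(a_1)$. The computation for $q_3$ is identical with $a_2$ in place of $a_1$.

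For $q_5$ the same cancellation leaves only the extra $2(q_7+1)$ term from $l(a_3)$, with coefficient $p_{2,r_1}(a_3)-1$. For $q_7$ the derivatives are $2(q_1+q_7+1)$, $2(q_3+q_7+1)$, $2(q_5+q_7+1)$ for $a_1,a_2,a_3$ and $2(q_7+1)$ otherwise. Subtracting the common $2(q_7+1)$ by the same zero-sum argument leaves $\frac2T\bigl(p_{2,r_1}(a_1)q_1+p_{2,r_1}(a_2)q_3+(p_{2,r_1}(a_3)-1)q_5\bigr)$, which matches the stated formula.

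The main point needing care is to justify that the logits stated above are complete. The other attention coordinates $q_2,q_4,q_6$ attend to $a_1$, $a_2$ and $u_{\eos}$, and none of these tokens enters any answer logit under the construction of Lemma~\ref{lem:MLP_construction_app}. Concretely:
\begin{itemize}
\item Answer outputs come only from rows $w_{3i-2}=\phi(s)+\sum_{r\in\mathcal R(s,a)}\phi(r)$, which carry no $\phi(a)$ or $\phi(u_{\eos})$ component.
\item The residual term $z_{-1}$ contributes the $+1$ on $\phi(r_1)$.
\end{itemize}
Hence $l$ is independent of $q_2,q_4,q_6$, giving zero derivative in those coordinates. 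This is a matter of reading off the construction rather than a real obstacle.
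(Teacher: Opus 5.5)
Your proposal is correct and follows essentially the same route as the paper: the logit gradient $\frac{1}{T}(p_{2,r_1}-e_3)$, differentiation of the explicit quadratic logits in $q_1,q_3,q_5,q_7$, and cancellation of the common terms using $\sum_{a} p_{2,r_1}(a)=1$. Your added check that $q_2,q_4,q_6$ enter no answer logit is a correct justification of something the paper simply reads off from the stated logits. The check works because only the rows $3i-2$ have value vectors $\phi(a)$, and their rows $w_{3i-2}$ carry no answer or EoS component.
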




    We have $\frac{\partial\ell_2}{\partial \omega_i}=q_i(\frac{\partial\ell_2}{\partial q_i}-\sum_{j=1}^7q_j\frac{\partial\ell_2}{\partial q_j})$ and at initialization $q_i=\frac{1}{7}$ for all $i\in [7]$, $p_{2,r_1}(a_1)=p_{2,r_1}(a_2)=p_{2,r_1}(a_3)$. Therefore at initialization we have 
    \begin{align*}
        \nabla_{q}\ell_2 = \frac{1}{7T}\begin{pmatrix}
            16p \\ 0 \\ 16p \\ 0 \\ 16(p - 1) \\ 0 \\ 2(3p-1)
        \end{pmatrix} \Longrightarrow \nabla_{\omega}\ell_2 = \frac{1}{343T}\begin{pmatrix}
            58p + 18 \\ -54p+18 \\ 58p + 18 \\ -54p+18 \\ 58p -94 \\ -54p+18 \\ -12p + 4
        \end{pmatrix}
    \end{align*}
    where $p := p_{2, r_1}(a_1) < \frac13$. Therefore after the first GD step, we have
    \begin{align*}
        \tilde\omega_5 - \max_{i \neq 5} \tilde\omega_i \ge \frac{32\eta_1}{147T} = \Theta(\sqrt{T}\log(1/T)).
    \end{align*}
    Since the perturbation is of radius $\Theta(T^3\log^{-2}(1/\delta))$, for sufficeintly small $T$, after the perturbation the parameter $\omega$ continues to satisfy $\omega_5 - \max_{i \neq 5} \omega_i = \Theta(\sqrt{T}\log(1/T)).$

    Next, we show that if $\omega_5 - \max_{i \neq 5} \omega_i > 0$, then after a GD step this quantity is non-decreasing. It is immediate to see $\frac{\partial \ell_2}{\partial q_i} - \frac{\partial \ell_2}{\partial q_5} \ge 0$ for $i = 1, 2, 3, 4, 6$. Next, since $\omega_5 \ge \omega_i$, $q_5 \ge q_i$ and thus
    \begin{align*}
        \frac{\partial \ell_2}{\partial q_7} \ge \frac{2q_5}{T}(p_{2, r_1}(a_3) - 1) \ge \frac{2(1 + q_7)}{T}(p_{2, r_1}(a_3) - 1) = \frac{\partial \ell_2}{\partial q_5}
    \end{align*}
    as well. Finally,
    \begin{align*}
        \frac{\partial \ell_2}{\partial \omega_i} - \frac{\partial \ell_2}{\partial \omega_5} &= q_i\frac{\partial \ell_2}{\partial q_i} - q_5\frac{\partial \ell_2}{\partial q_5} + (q_5 - q_i)\sum_{j=1}^7 q_j \frac{\partial \ell_2}{\partial q_j}\\
        &\ge q_i\frac{\partial \ell_2}{\partial q_i} - q_5\frac{\partial \ell_2}{\partial q_5} + (q_5 - q_i)\frac{\partial \ell_2}{\partial q_5}\\
        &= q_i\left(\frac{\partial \ell_2}{\partial q_i} - \frac{\partial \ell_2}{\partial q_5} \right)\\
        &\ge 0.
    \end{align*}
    Hence $\omega_5 - \max_{i \neq 5} \omega_i$ is non-decreasing. Therefore at any point during Stage 2, we have
    \begin{align*}
        q_5-q_i = q_5(1 - \frac{q_i}{q_5}) = q_5(1 - \exp(\omega_i - \omega_5)) \ge \Theta(\sqrt{T}\log(1/T))  
    \end{align*} 
    for sufficiently small $T$ since $q_5 \ge 1/7$. This implies the logit gap $l(a_3)-\max\{l(a_1),l(a_2)\}=\Omega(\sqrt{T}\log\frac{1}{T})$, as well as $l(a_3) - l(a) \ge 2q_5 = \Omega(1)$ for any $a \not\in \{a_1, a_2, a_3\}$. Therefore for $\frac{1}{T}>40\log n$, we have   
        \[p_{2,r_1}(a_3)=\frac{1}{1+2\exp\left(-\Omega\left(\frac{1}{\sqrt{T}}\log\frac{1}{T}\right)\right)+(n-3)\exp\left(-\Omega\left(\frac{1}{T}\right)\right)}\geq 1-o\left(\exp\left(-\frac{1}{\sqrt{T}}\right)\right).\]

\begin{proof}[Proof to Claim \ref{cla:gradients}]
        Conditioned on the correct decoded relation $r_1$, we have the  loss for the second decoding step $\ell_2=-\log\left(p_{2,r_1}(a_3)\right)$ where $p_{2,r_1}(a_3)=\frac{\exp(l(a_3)/T)}{\sum_{a\in\mathcal{A}}\exp(l(a)/T)}$. Therefore we have the gradient w.r.t. the logit vector $l$ is
        \[\frac{\partial \ell_2}{\partial l}=\frac{1}{T}(p_{2,r_1}-e_3).\]

        Also we know that \[l(a_1)=(q_1+(q_7+1))^2+q_3^2+q_5^2,\]
    \[l(a_2)=(q_3+(q_7+1))^2+q_1^2+q_5^2,\]
    \[l(a_3)=(q_5+(q_7+1))^2+q_3^2+q_1^2\]
    and $l(a)= q_1^2+q_3^2+q_5^2+(q_7+1)^2$ for any answer $a\notin \{a_1,a_2,a_3\}$. Therefore we can obtain $\frac{\partial l_i}{\partial q_j}=0$ if $j\neq 1,3,5,7$,
        \[\frac{\partial l_i}{\partial q_{2j-1}}=2q_{2j-1}+2(q_7+1)\delta_{ij}  \text{ for any } i\in[n], j=1,2,3\]

        and \[\frac{\partial l_i}{\partial q_7}=2(q_7+1)+2q_{2i-1}\cdot \mathbf{1}_{\{i\leq 3\}} ~\text{for any }i\in[n].\]
        Since $\sum_{i}p_{2,r_1}(a_i)=1$, we have
        \[\frac{\partial \ell_2}{\partial q_{2j-1}}=\sum_{i\in [n]}\frac{\partial\ell_2}{\partial l_i}\frac{\partial l_i}{\partial q_{2j-1}}=\frac{2(q_7+1)}{T}\left(p_{2,r_1}(r_1(s_{j}))-\delta_{j3}\right)\]
and \[\frac{\partial\ell_2}{\partial q_7}=\frac{2}{T}(p_{2,r_1}(r_1(s_1))q_{1}+p_{2,r_1}(r_1(s_2))q_{3}+p_{2,r_1}(r_1(s_3))q_{5}-q_5).\] 
        Plugging into the mapping of $r_1$, we obtain the desired results. For $j\neq 1,3,5,7,$ we have $\frac{\partial\ell_2}{\partial q_j}=0.$
    \end{proof}
\end{proof}

\section{Helper lemmas}
\label{sec:helper_lemmas}
\begin{proposition}
    There exists constant $T_0$ such that for any $0<T<T_0$, the loss $L_1(\theta, T)$ has gradient Lipschitz constant $O(1/T^2)$ and Hessian Lipschitz constant $O(1/T^3).$
\end{proposition}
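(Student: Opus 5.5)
We view $L_1(\theta;T)$ as a composition $\theta\mapsto v=\mathrm{softmax}(\theta)\mapsto l(v)/T\mapsto -\log \mathrm{softmax}(l/T)_{r^*}$, averaged over the (finitely many, or sampled) sequences $Z$, and bound each layer's derivatives up to third order. The loss is an average of per-sequence losses $\ell_1(\theta;Z)$, so it suffices to bound the gradient-Lipschitz and Hessian-Lipschitz constants of a single $\ell_1(\cdot;Z)$ uniformly in $Z$.

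For the first layer, the softmax map $\theta\mapsto v$ on $\mathbb{R}^6$ has first, second and third derivatives bounded by absolute constants, since every entry of $v$ lies in $[0,1]$ and the derivatives are polynomials in $v$ (e.g.\ the Jacobian $\mathrm{Diag}(v)-vv^\top$). For the second layer, by \eqref{eq:logits_r} each logit $l(r)$ is a quadratic polynomial in $v$ with coefficients in $\{0,1,2\}$ and a fixed sparsity pattern. Hence $\|\nabla_v l(r)\|=O(1)$ and $\|\nabla_v^2 l(r)\|=O(1)$ on the simplex, and $\nabla_v^3 l(r)=0$. These bounds are uniform in $r$ and in $n$. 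For the third layer, let $g(l)=-\log\mathrm{softmax}(l/T)_{r^*}$. Its derivatives in $l$ are $\tfrac1T(p-e_{r^*})$, $\tfrac1{T^2}(\mathrm{Diag}(p)-pp^\top)$, and a third-order tensor of size $O(1/T^3)$. Because $p$ is a probability vector, all of these are bounded as operators in the $\ell_1\!\to\!\ell_\infty$ sense, independent of the number $|\mathcal{R}|$ of relations.

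We then combine the layers by the chain rule (Faà di Bruno). The Hessian of $\ell_1$ in $\theta$ is a sum of terms of the form $g''[Dl,Dl]+g'[D^2l]$, where $D$ denotes derivatives of the composed map $\theta\mapsto l$, which are $O(1)$. This gives $\|\nabla^2\ell_1\|=O(1/T^2+1/T)=O(1/T^2)$ for $T<T_0\le1$. Differentiating once more gives $\|\nabla^3\ell_1\|\lesssim \|g'''\|\,\|Dl\|^3+\|g''\|\,\|Dl\|\,\|D^2l\|+\|g'\|\,\|D^3l\|=O(1/T^3)$. Uniform bounds on the second and third derivatives give the gradient-Lipschitz and Hessian-Lipschitz constants by the mean value theorem. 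To show the bounds are tight ($\Theta$ rather than $O$, as stated in Proposition~\ref{prop:smoothness}), we exhibit a point such as the saddle $\tilde\theta$, where the computation in Lemma~\ref{lem:escape_saddle_1} already produces curvature of order $1/T$, and a nearby point where two competing logits are nearly tied so that $p_i(1-p_i)=\Theta(1)$. There, $g''$ contributes $\Theta(1/T^2)$ in a direction where the logit gradient is nonzero; analogously, $g'''$ contributes $\Theta(1/T^3)$.

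The main obstacle is keeping the constants independent of $n$ and $|\mathcal{R}|$. Naively summing over all relations would introduce a factor of $|\mathcal{R}|$. To avoid this, we use the mixed-norm bound: contracting $p$-weighted tensors with vectors $(Dl(r))_r$ whose entries are uniformly $O(1)$ yields $\sum_r p_r\,|\cdot|\le\max_r|\cdot|$. This keeps every term dimension-free. The smallness $T<T_0$ is used only to absorb the lower-order $1/T$ and $1/T^2$ terms into the leading ones.
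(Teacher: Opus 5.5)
Your proposal is correct and follows essentially the same route as the paper: reduce to a single sequence, use bounded softmax derivatives for $\theta\mapsto v$ and quadratic logits with $\nabla_v^3 l=0$, then express the log-sum-exp derivatives as $p$-weighted covariances and third central moments (the paper's $\mathrm{Cov}_p(X^a,X^b)$ and $\sum_j p_j(X^a_j-\mu_a)(X^b_j-\mu_b)(X^c_j-\mu_c)$ terms), which is exactly your dimension-free ``mixed-norm'' bound. Two small remarks: the boundedness you need for $g''$ and $g'''$ is as multilinear forms on $\ell_\infty$ (i.e.\ $\ell_\infty\to\ell_1$, not $\ell_1\to\ell_\infty$, as your last paragraph in fact uses), and your tightness sketch is not needed for the stated $O(\cdot)$ claim (as written it is also loose, since the third central moment vanishes at an exactly balanced two-point tie, so an asymmetric point is needed for the $1/T^3$ lower bound).
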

\begin{proof}
We prove the proposition for the general setting where $|\mathcal{S}|=n$ and $|\mathcal{R}|=m$. Recalling that loss $L_1(\theta, T)=\mathbb{E}_Z[\ell_1(\theta,Z,T)]$, it suffices to show for $\ell_1(\theta, Z,T)$ that the desired property holds. Also note that since $v=\mathrm{softmax}(\theta)$, its derivatives w.r.t. $\theta$ up to order $3$ are uniformly bounded by absolute constants. Therefore by chain rule, it suffices to show for $\ell_1(v, Z,T)$ that its gradient and Hessian Lipschitz constants are $O(1/T^2)$ and $O(1/T^3)$ respectively for any $Z$. We fix any $Z$. Then by (\ref{eq:logits_r}) we know each logit $l(r_j)$ can be written as some quadratic form 
\[l(r_j)=v^\top A_jv\]
for some symmetric matrix $A_j$, and the number of possible quadratic forms is at most $13$, which is independent of $n$. For simplicity, we abbreviate $l(r_j)$ as $l_j$. Then on the simplex of $v$, there exists absolute constants $G,H>0$ such that
\[\|\nabla_v l_j(v)\|\leq G,\quad \|\nabla^2_v l_j(v) \|\leq H,\quad \nabla^3_vl_j(v)=0 ~\text{ for all }j\in[m].\]

 We know that the loss $$\ell_1(v,T)=-\log p_1(v)=-\frac{l_1(v)}{T}+\log\sum_{j=1}^m\exp\left(\frac{l_j(v)}{T}\right).$$
Define the log-sum-exp term as $\Psi(v):=\log\sum_{j=1}^m\exp\left(\frac{l_j(v)}{T}\right).$ For a $k$-th order tensor $A\in\mathbb{R}^{d\times\dots\times d}$, we define the associated $k$-linear form as $$A[x_1,\dots,x_k]:=\sum_{i_1,\dots,i_k=1}^dA_{i_1,\dots,i_k}\cdot(x_{1})_{i_1}\cdots (x_{k})_{i_k}, \quad x_1,\dots,x_k\in\mathbb{R}^d.$$
For unit vectors $a,b$ we define
\[X^a_j:=\nabla_v l_j[a], \quad Y_j^{a,b}:=\nabla^2l_j[a,b].\]
Then we have $|X_j^a|\leq G$ and $|Y^{a,b}_j|\leq H.$ We first derive the upper bound for the gradient.

\textbf{Gradient Lipschitz constant.}  for all $j\in[m]$. Differentiating $\Psi$ gives \[\nabla \Psi(v)[a]=\frac{1}{T}\sum_{j\in [m]} p_jX^a_j\]

and \[\nabla^2\Psi(v)[a,b]=\frac{1}{T}\sum_{j\in[m]}p_jY^{a,b}_j+\frac{1}{T^2}\mathrm{Cov}_p(X^a,X^b).\]
Here $p=\mathrm{softmax}(l)$ is the prediction probability vector and $$\mathrm{Cov}_p(X,Y):=\sum_{j=1}^mp_jX_jY_j-\left(\sum_{j=1}^mp_jX_j\right)\cdot \left(\sum_{j=1}^mp_jY_j\right) \text{ for any } X,Y\in\mathbb{R}^m.$$
It is easy to see that \[\left|\nabla\Psi(v)[a]\right|\leq \frac{G}{T}=O(1/T)\] and there exists $T_0>0$ such that for any $0<T<T_0$ we have \[|\nabla^2\Psi(v)[a,b]|\leq \frac{H}{T}+\frac{2G^2}{T^2}=O(1/T^2).\] Also noting that \(|\nabla^2 l_1[a,b]|\leq H\),
therefore we have \[|\nabla^2\ell_1[a,b]|=O(1/T^2),\]
which implies that 
\[\|\nabla^2\ell_1(v)\|=O(1/T^2).\]

\textbf{Hessian Lipschitz constant.}  We can calculate to obtain that for unit vectors $a,b,c$, we have
\begin{align*}
    \nabla^3 \Psi(v)[a,b,c]&=\frac{1}{T}\sum_{j\in[m]}p_j\nabla^3l_j[a,b,c]\\&+\frac{1}{T^2}\left(\mathrm{Cov}_p(Y^{a,b},X^c)+\mathrm{Cov}_p(Y^{a,c},X^b)+\mathrm{Cov}_p(Y^{b,c},X^a)\right)\\ &+\frac{1}{T^3}\sum_{j=1}^m p_j\left(X_j^a-\mu_a\right)\left(X_j^b-\mu_b\right)\left(X_j^c-\mu_c\right)
\end{align*}
where $\mu_a:=\sum_{j\in [m]}p_jX^a_j$ and $\mu_b$, $\mu_c$ similarly follow. Using $\nabla^3 l_j=0$, $|X^a_j|, |X^b_j|, |X^c_j|\leq G$ and $|Y^{a,b}|, |Y^{b,c}|, |Y^{a,c}|\leq H$ for all $j\in[m]$, we obtain \[|\nabla^3\Psi(v)[a,b,c]|\leq \frac{6HG}{T^2}+\frac{8G^3}{T^3}=O(1/T^3).\] Therefore combining it with $\nabla^3 l_1=0$ we have \[\left|\nabla^3\ell_1[a,b,c]\right|=O(1/T^3),\]
 which implies that \[\|\nabla^3\ell_1(v)\|=O(1/T^3).\]

\end{proof}

\begin{lemma}[GD movement upper bound]
\label{lem:movement_upper_bound}
Let $f(\cdot)$ be $\ell$-gradient Lipschitz smooth and $f^\star = \inf_{x} f(x)$. 
Consider gradient descent iterates $x_{k+1} = x_k 
- \eta\nabla f(x_k)$ where $\eta\leq\frac{1}{\ell}$. Then for any $t \geq 1$,
\[
\|x_t - x_0\| 
\leq \sqrt{\frac{2t\,(f(x_0) - f^\star)}{\ell}}\,.
\]
\end{lemma}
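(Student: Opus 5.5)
This is the standard descent-lemma argument: first convert $\ell$-smoothness into a per-step decrease proportional to the squared step length, then telescope, and finally pass from the sum of squared steps to the total displacement by Cauchy--Schwarz.

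\emph{Step 1 (per-step decrease).} Since $f$ is $\ell$-gradient Lipschitz,
\[
f(x_{k+1})\le f(x_k)+\langle \nabla f(x_k),x_{k+1}-x_k\rangle+\tfrac{\ell}{2}\|x_{k+1}-x_k\|^2 .
\]
Substituting $x_{k+1}-x_k=-\eta\nabla f(x_k)$ gives
\[
f(x_{k+1})\le f(x_k)-\eta\bigl(1-\tfrac{\eta\ell}{2}\bigr)\|\nabla f(x_k)\|^2 .
\]
Because $\eta\le 1/\ell$, the factor $1-\eta\ell/2$ is at least $1/2$. Writing $\|\nabla f(x_k)\|=\|x_{k+1}-x_k\|/\eta$, we obtain
\[
f(x_k)-f(x_{k+1})\ge \frac{\|x_{k+1}-x_k\|^2}{2\eta}.
\]

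\emph{Step 2 (telescoping).} Summing Step 1 over $k=0,\dots,t-1$ and using $f(x_t)\ge f^\star$ yields
\[
\sum_{k<t}\|x_{k+1}-x_k\|^2\le 2\eta\bigl(f(x_0)-f^\star\bigr).
\]

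\emph{Step 3 (displacement).} By the triangle inequality and Cauchy--Schwarz,
\[
\|x_t-x_0\|\le\sum_{k<t}\|x_{k+1}-x_k\|\le\sqrt{t\sum_{k<t}\|x_{k+1}-x_k\|^2}\le\sqrt{2\eta t\,\bigl(f(x_0)-f^\star\bigr)}.
\]

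The only point needing care is the constant in the final bound. What Steps 1--3 prove is $\|x_t-x_0\|\le\sqrt{2\eta t\,(f(x_0)-f^\star)}$, with $\eta$ rather than $1/\ell$. Since $\eta\le 1/\ell$, this implies the stated bound $\sqrt{2t\,(f(x_0)-f^\star)/\ell}$.

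In the application in Lemma~\ref{lem:stage_3_finite_sample}, $\eta_2=\Theta(T^2)$ and the loss gap is $O(1)$, so the $\eta$-form gives $\|\theta(k)-\tilde\theta\|\lesssim\sqrt{kT^2}$, which is the bound used there.
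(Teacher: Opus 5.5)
Your proof is correct and follows the same route as the paper: the descent lemma bounds $\sum_{k<t}\|x_{k+1}-x_k\|^2$, you telescope, and then the triangle inequality with Cauchy--Schwarz gives the displacement bound. The only cosmetic difference is that you carry the sharper intermediate bound $2\eta\,(f(x_0)-f^\star)$ to the end and relax it via $\eta\le 1/\ell$ there, whereas the paper writes $2(f(x_0)-f^\star)/\ell$ directly.
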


\begin{proof}
By the descent lemma, $\sum_{k=0}^{t-1} \|x_{k+1} - x_k\|^2 
= \eta^2\sum_{k=0}^{t-1}\|\nabla f(x_k)\|^2 
\leq \frac{2(f(x_0) - f^\star)}{\ell}$. 
Applying the triangle inequality followed by Cauchy--Schwarz,
\[
\|x_t - x_0\| 
\leq \sum_{k=0}^{t-1}\|x_{k+1} - x_k\|
\leq \sqrt{t\sum_{k=0}^{t-1}\|x_{k+1} 
- x_k\|^2}
\leq \sqrt{\frac{2t\,(f(x_0) - f^\star)}{\ell}}.
\]
\end{proof}



\end{document}